\documentclass{article}

\usepackage[preprint]{neurips_2026}
\usepackage{amsthm}
\usepackage{amsmath}
\usepackage{amssymb}
\usepackage{graphicx}
\usepackage{mathtools}
\usepackage{enumerate}
\usepackage{enumitem}
\usepackage{footnote}
\usepackage{float}
\usepackage{xspace}
\usepackage{multirow}
\usepackage{nicefrac}
\usepackage{wrapfig}
\usepackage{framed}
\usepackage{url}
\usepackage{units}
\usepackage[colorlinks=true, linkcolor=blue, citecolor=blue]{hyperref}
\usepackage{blkarray}
\PassOptionsToPackage{round}{natbib}

\usepackage{tikz}
\usetikzlibrary{arrows,chains,matrix,positioning,scopes}

\let\hat\widehat
\let\tilde\widetilde
\let\check\widecheck

\newtheorem{lemma}{Lemma}

\newtheorem{theorem}{Theorem}

\newtheorem{remark}{Remark}

\newtheorem{assumption}{Assumption}

\DeclareMathOperator*{\argmin}{argmin}

\DeclareFontFamily{OMX}{MnSymbolE}{}
\DeclareFontShape{OMX}{MnSymbolE}{m}{n}{
    <-6>  MnSymbolE5
   <6-7>  MnSymbolE6
   <7-8>  MnSymbolE7
   <8-9>  MnSymbolE8
   <9-10> MnSymbolE9
  <10-12> MnSymbolE10
  <12->   MnSymbolE12}{}
\DeclareSymbolFont{mnlargesymbols}{OMX}{MnSymbolE}{m}{n}
\SetSymbolFont{mnlargesymbols}{bold}{OMX}{MnSymbolE}{b}{n}
\DeclareMathDelimiter{\llangle}{\mathopen}{mnlargesymbols}{'164}{mnlargesymbols}{'164}
\DeclareMathDelimiter{\rrangle}{\mathclose}{mnlargesymbols}{'171}{mnlargesymbols}{'171}

\usepackage{prettyref}

\newcommand{\savehyperref}[2]{\texorpdfstring{\hyperref[#1]{#2}}{#2}}
\newrefformat{eq}{\savehyperref{#1}{\textup{(\ref*{#1})}}}
\newrefformat{eqn}{\savehyperref{#1}{Equation~\eqref{#1}}}
\newrefformat{lem}{\savehyperref{#1}{Lemma~\ref*{#1}}}
\newrefformat{def}{\savehyperref{#1}{Definition~\ref*{#1}}}
\newrefformat{line}{\savehyperref{#1}{line~\ref*{#1}}}
\newrefformat{thm}{\savehyperref{#1}{Theorem~\ref*{#1}}}
\newrefformat{corr}{\savehyperref{#1}{Corollary~\ref*{#1}}}
\newrefformat{cor}{\savehyperref{#1}{Corollary~\ref*{#1}}}
\newrefformat{sec}{\savehyperref{#1}{Section~\ref*{#1}}}
\newrefformat{app}{\savehyperref{#1}{Appendix~\ref*{#1}}}
\newrefformat{ap}{\savehyperref{#1}{Appendix~\ref*{#1}}}
\newrefformat{assum}{\savehyperref{#1}{Assumption~\ref*{#1}}}
\newrefformat{as}{\savehyperref{#1}{Assumption~\ref*{#1}}}
\newrefformat{ex}{\savehyperref{#1}{Example~\ref*{#1}}}
\newrefformat{fig}{\savehyperref{#1}{Figure~\ref*{#1}}}
\newrefformat{alg}{\savehyperref{#1}{Algorithm~\ref*{#1}}}
\newrefformat{rem}{\savehyperref{#1}{Remark~\ref*{#1}}}
\newrefformat{conj}{\savehyperref{#1}{Conjecture~\ref*{#1}}}
\newrefformat{prop}{\savehyperref{#1}{Proposition~\ref*{#1}}}
\newrefformat{proto}{\savehyperref{#1}{Protocol~\ref*{#1}}}
\newrefformat{prob}{\savehyperref{#1}{Problem~\ref*{#1}}}
\newrefformat{claim}{\savehyperref{#1}{Claim~\ref*{#1}}}
\newrefformat{que}{\savehyperref{#1}{Question~\ref*{#1}}}
\usepackage{bbm}
\usepackage{graphicx}
\usepackage{subcaption}
\usepackage{booktabs}
\usepackage[utf8]{inputenc} 
\usepackage[T1]{fontenc}    
\usepackage{hyperref}       
\usepackage{url}            
\usepackage{amsfonts}       
\usepackage{nicefrac}       
\usepackage{microtype}      
\usepackage{xcolor}         

\usepackage{multirow}
\usepackage{algorithm}
\usepackage{algpseudocode}
\usepackage{subcaption}
\newtheorem{proposition}{Proposition}
\usepackage[table]{xcolor}
\usepackage{booktabs}
\usepackage{xcolor}
\usepackage{bm}
\definecolor{mycyan}{RGB}{0,204,204}

\definecolor{myorange}{RGB}{252, 105, 16}
\definecolor{myblue}{RGB}{32, 101, 164}

\usepackage{mwe}
\usepackage{booktabs}
\usepackage{array}

\usepackage{color}
\usepackage{colorspace}
\usepackage{caption}
\usepackage{wrapfig}
\definecolor{mutedred}  {RGB}{129,62,62}
\definecolor{mutedgreen}{RGB}{110,122,115}
\definecolor{sectioncolor}{rgb}{0.309803922, 0.443137255, 0.745098039} 
\definecolor{footcolor}   {rgb}{0.309803922, 0.443137255, 0.745098039} 

\definecolor{mygreen}       {rgb} {0.10,0.50,0.10}
\definecolor{mylightgreen}  {RGB} {50,168,82}
\definecolor{lightred}      {rgb} {1.00,0.80,0.80}
\definecolor{lightblue}     {rgb} {0.80,0.80,1.00}
\definecolor{mediumred}     {rgb} {1.00,0.60,0.60}
\definecolor{mediumblue}    {rgb} {0.60,0.60,1.00}

\definecolor{pennblue}{cmyk}{1,0.65,0,0.30}
\definecolor{pennred}{cmyk}{0,1,0.65,0.34}

\definecolor{myred}{RGB}{199,107,102}

\input{mysymbol.sty}

\title{Graph Convolutional Attention: A Spectral Perspective on Graph Denoising and Diffusion}

\author{%
  Shervin~Khalafi\thanks{Equal contribution.} \\
  University of Pennsylvania\\
  \texttt{shervink@seas.upenn.edu} \\
  \And
  Igor~Krawczuk\footnotemark[1] \\
  \texttt{igor@krawczuk.eu} \\
  \And
  Sergio~Rozada \\
  King Juan Carlos University \\
  \texttt{sergio.rozada@urjc.es}\\
  \And
  Charilaos~Kanatsoulis \\
  Stanford University \\
  \texttt{charilaos@cs.stanford.edu}\\
  \And
  Antonio G. Marques \\
  King Juan Carlos University \\
  \texttt{antonio.garcia.marques@urjc.es}\\
  \And
  Alejandro~Ribeiro \\
  University of Pennsylvania \\
  \texttt{aribeiro@seas.upenn.edu}\\
}

\begin{document}

\maketitle

\begin{abstract}
Denoising graphs is a fundamental problem in graph learning and the core operation of graph diffusion models. Attention-based architectures like graph transformers have recently shown promise in denoising graphs. However, our principled understanding of attention-based graph denoising remains limited, making it unclear whether standard attention is the right mechanism for this task.
Here we show that, under a denoising objective, linear attention
is suboptimal and can only learn an average spectral denoising filter over the training distribution. This creates a fundamental limitation as graphs often vary spectrally across the distribution. To overcome this limitation, we introduce Spectral Attention, which directly utilizes the input graph spectrum and provably outperforms linear attention by a margin governed by the spectral diversity of the distribution.
We then derive Graph Convolutional Attention (GCA), a practical and permutation-equivariant realization of this idea that implements spectral denoising through graph-filtered queries and keys. For stochastic block models, GCA provably matches the idealized Spectral Attention mechanism. We further show that the softmax operation, that follows the attention, provides additional denoising by approximately projecting noisy eigenvectors onto the clean eigenspace. Empirically, replacing linear attention with GCA consistently improves graph denoising and diffusion on synthetic and real datasets, with gains strongly correlated with spectral diversity. In DiGress, GCA matches standard graph-transformer performance without computing expensive structural features, and when combined with the recently proposed PEARL positional encodings, avoids explicit eigendecomposition computations resulting in faster inference without degrading quality. The code can be found here: \url{github.com/shervinkhalafi/graph_conv_att}
\end{abstract}
\section{Introduction}\label{sec: introduction}

Denoising is a fundamental problem in machine learning~\cite{milanfar2025denoising}, both in its own right and as the central learning primitive behind denoising diffusion models~\cite{song2020denoising, song2020score}. 
Graph denoising introduces a unique challenge: the object to be denoised is not merely a signal on a known domain, but the domain itself. 
In image and audio denoising, the underlying grid or temporal axis is fixed, so models can exploit geometry through convolutions~\cite{milanfar2012tour}. 
In graphs, by contrast, the geometry and spectral frequency is corrupted and must itself be recovered.
Classical approaches address graph denoising through regularized optimization \cite{wu2017generalized, segarra2017network}, but the parametric perspective has received comparatively little attention. Interest has grown recently, driven by graph diffusion models \cite{jo2022score, vignac2022digress}, where the dominant approach is graph transformers that take the eigenvectors of the graph, and optionally additional structural features, as input. Yet a theoretical understanding of how such parametric models learn to denoise graphs remains limited. In this work, we aim to provide insight into the mechanisms underlying parametric graph denoising through theoretical analysis.

We proceed in four steps. First, we (i) analyze linear attention, the outer product of linear query-key projections, as it underlies every attention-based graph denoiser. We show it learns an average of the spectral properties across the graph distribution, which is suboptimal when those spectra vary. To address this sub-optimality, we (ii) introduce and analyze the spectral attention, an abstract class in which the attention mechanism depends arbitrarily on the input eigenvalues. The loss with respect to spectral attention is strictly smaller than that of linear attention, and the gap depends on the concept of spectral diversity, i.e., how diverse the spectra are across the distribution. Spectral attention, however, is abstract and in general not permutation equivariant. To turn the insight into something usable, we (iii) introduce graph convolutional attention (GCA), in which the attention pattern is a pointwise function of the eigenvalues. This makes the mechanism permutation equivariant and expressible as a graph convolutional filter. For asymptotically large Stochastic Block Models (SBMs), we show that under suitable conditions GCA matches spectral attention and therefore strictly improves on linear attention. Finally, we (iv) analyze the nonlinearity, typically a softmax, that follows linear attention. We show that once the clean eigenvalues are known, e.g., recovered by GCA, the softmax denoises the eigenvectors by acting like an approximate projection onto the subspace spanned by the clean eigenvectors, which provably reduces noise.

Experimentally, across real and synthetic datasets, a graph transformer using GCA in place of the standard query-key projections outperforms the standard graph transformer at denoising, and the per-dataset improvement correlates strongly with a surrogate of the spectral diversity metric, validating the theory. For diffusion models, GCA improves training loss and, on many datasets, sample quality both qualitatively and under standard metrics. Finally, drop-in augmentation of DiGress with GCA + R-PEARL\cite{kanatsoulis2025learning} is competitive with the standard DiGress without any hyperparameter tuning, slightly improving on some MMDs without its usual expensive to compute spectral features.

We summarize our contributions as follows:

\begin{enumerate}
    \item[\textbf{C1}] \textbf{Theoretical analysis of attention-based denoising.} We show linear attention only learns an average of spectral properties over the graph distribution, which is suboptimal when graphs have varying spectra. 
    We introduce spectral attention, and show that its improvement over linear attention is captured by the spectral diversity of the graph distribution.

    \item[\textbf{C2}] \textbf{Graph Convolutional Attention (GCA).} Motivated by the analysis, we propose GCA, where the attention pattern can be expressed as a graph convolutional filter, making it permutation equivariant. For asymptotically large SBMs, we show GCA matches the optimal loss of spectral attention, and thus strictly outperforms linear attention.

    \item[\textbf{C3}] \textbf{Role of the softmax.} In the SBM setting, we show that softmax can further improve the denoising of graph by comparing it to a projection of the noisy eigenvectors onto the eigenbasis spanned by the clean eigenvectors.

    \item[\textbf{C4}] \textbf{Empirical validation.} Across real and synthetic datasets, replacing standard query-key projections with GCA consistently improves denoising, and the per-dataset gain correlates with a surrogate of our spectral diversity metric, directly validating the theory. The gains transfer to discrete diffusion: GCA yields lower training loss and better samples.

    \item[\textbf{C5}] \textbf{Implications for modern diffusion models.} Plugging GCA into DiGress
    we observe that we can match DiGress performance with hand-crafted features while avoiding their cost. Combined with R-PEARL, it can even sidestep eigenvector computation entirely leading to a reduction in training and inference time on larger graphs.

\end{enumerate}

\subsection{Literature review}

\paragraph{Graph denoising.}
Existing graph denoising methods can be broadly divided into optimization-based and learned approaches. 
The former recover a clean graph by solving an inverse problem regularized with structural priors such as edge-error models \cite{wu2017generalized,josephs2021network}, spectral consistency \cite{segarra2017network,rey2023robust, tenorio2024blind}, or low-rank, sparse, and smooth graph structure \cite{zhou2024graph,wang2024unsupervised, lyu2024learning}. The latter replace instance-wise optimization with a parametric denoiser, either as a preprocessing or jointly trained module for downstream GNNs \cite{fatemi2021slaps,dai2022towards,yang2024you}, or as the core component of diffusion and flow-based generative models that iteratively remove graph corruption \cite{li2021maskgvae,jo2022score,vignac2022digress}. Overall, prior work spans from hand-crafted regularization to fully learned graph restoration, but most methods are not designed to explicitly exploit spectral structure within the denoising mechanism itself.

\paragraph{Spectral attention.}
A parallel line of work incorporates graph spectral information into Transformer architectures. Early methods use Laplacian eigenvectors or eigenvalues as spectral features or positional encodings for attention-based models \cite{he2020spectral,kreuzer2021rethinking,dwivedi2021generalizationtransformernetworksgraphs,pengmei2024technical}, while later approaches address eigenvector ambiguities through invariant spectral encoders such as SignNet and BasisNet \cite{lim2022sign}. More recent models integrate spectral structure directly into the attention mechanism or the learned operator. GIST reweights attention using inner products of random filtered spectral features \cite{rigotti2026gist}, whereas Specformer, PolyFormer, and Eigenformer use Transformer blocks to parametrize spectral filters using attention \cite{bo2023specformer,ma2024polyformer,garg2024graph}. These works show that spectral information can be embedded into attention in several ways, but they are mainly aimed at representation learning and prediction rather than graph denoising.

\section{Denoising Graphs with a Linear Model}\label{sec: linear theory}

\paragraph{Problem Setup.}
We start with the graph denoising problem as it is fundamental
to graph diffusion. In particular, the next two sections analyze how attention-based neural networks perform denoising from a spectral perspective. We first show that the linear part of the network can learn to denoise the eigenvalues of a noisy graph. Then, in Section~\ref{sec: nonlinear theory}, we show that the nonlinearities help remove noise from the eigenvectors. We consider unweighted, undirected graphs $\mathcal{G}$ sampled from a distribution over graphs which we denote $\mathcal{D}_{\mathcal{G}}$.
For a given graph $\mathcal{G}$, we denote by $A \in \mathbb{R}^{n \times n}$ the symmetric adjacency matrix. Denote the noisy adjacency $\tilde{A} = A + \mathcal{E}$, where $\mathcal{E} \in \mathbb{R}^{n \times n}$ is a symmetric noise matrix sampled from a noise distribution. We note that depending on the noising method this distribution could depend on the input graph for example when flipping edges in an unweighted graph, or it could be independent e.g., Gaussian noise. We write the eigendecomposition of the adjacency as $A = U \Lambda U^\top$, and that of the noisy adjacency as $\tilde{A} = \tilde{U} \tilde{\Lambda} \tilde{U}^\top$.

\subsection{Using Eigenvalue Information Improves Denoising.}

Consider an arbitrary denoising function $f(\cdot):\mathbb{R}^{n \times n}\rightarrow \mathbb{R}^{n \times n}$ that takes the noisy adjacency as input and returns a prediction of the denoised graph. We define the following loss functional characterizing the denoising capability of $f$ over different graphs and noise samples from the distribution:
\begin{equation}
    \ell(f) = \,\,\mathbb{E}_{A,\,\mathcal{E}} \left\| f(\tilde{A}) - A \right\|_F^2
\end{equation}

A common candidate for implementing the denoising map is attention~\cite{vignac2022digress, qin2024defog}. For a sequence of $d$-dimensional input tokens $X \in \reals^{n \times d}$, standard attention computes
\begin{equation}
   g_{\text{\normalfont{Att}}}(X) = \text{softmax}(X W_Q W_K^\top X^\top)  X W_V
\end{equation}
where $W_Q \in \reals^{d \times d_k}$, $W_K \in \reals^{d \times d_k}$, and $W_V \in \reals^{d \times d_v}$ are learnable query, key, and value matrices, respectively.
In graph denoising, attention-based estimators of the adjacency can be built as an outer product of representations~\cite{vignac2022digress}:
\begin{equation}
    \label{eq::graph_based_att}
    f(\tilde A)=g_{\mathrm{Att}}(\tilde U)g_{\mathrm{Att}}(\tilde U)^\top = \text{softmax}(\tilde{U} W_Q W_K^\top \tilde{U}^\top) \cdot  \tilde{U} W_V W_V^\top \tilde{U}^\top \cdot \text{softmax}(\tilde{U} W_Q W_K^\top \tilde{U}^\top)^\top.
\end{equation}
In this view, attention acts on $\tilde U$ to recover a clean spectral representation of the underlying graph $A$.
We analyze this mechanism as a two-stage process. First, the score matrix $\tilde U W_Q W_K^\top \tilde U^\top$ can be interpreted as a linear spectral estimator that seeks to recover a denoised graph with clean eigenvalues. Second, the $\text{softmax}$ operation further denoises the noisy eigenvectors $\tilde U$ to approach the clean noiseless adjacency. To study the first step, in this section we restrict our focus to three classes of simplified graph-denoising maps:

\begin{enumerate}
    \item \textbf{Linear Attention:} We define $W = W_Q W_K^\top$ and consider the function class $\mathcal{F}_{\text{LA}}$:
    \begin{equation}
        \mathcal{F}_{\text{LA}} = \left\{ f(\tilde{A}; W) = \tilde{U} W \tilde{U}^\top |\,\, W \in \mathbb{R}^{n \times n} \right\}
    \end{equation}

    \item \textbf{Spectral Attention:} The broader class of functions $\mathcal{F}_{\text{SA}}$ where the
    attention pattern can depend on the eigenvalues of the noisy input graph via any arbitrary function $g(\cdot)$:
    \begin{equation}
        \mathcal{F}_{\text{SA}} = \left\{ f(\tilde{A}; g(\cdot)) = \tilde{U} g(\tilde{\Lambda}) \tilde{U}^\top |\,\, g(\cdot):\mathbb{R}^{n} \rightarrow \mathbb{R}^{n \times n} \right\}
    \end{equation}

    \item \textbf{Graph Convolutional Attention:} We consider the class of functions $\mathcal{F}_{\text{GCA}}$ where the attention pattern is
    replaced by a function $\eta(\cdot)$ acting point-wise on the eigenvalues of the graph:
    \begin{equation}\label{eq: GCA definition}
        \mathcal{F}_{\text{GCA}} = \left\{ f(\tilde{A}; \eta(\cdot)) = \tilde{U} \eta(\tilde{\Lambda}) \tilde{U}^\top |\,\, \eta(\cdot):\mathbb{R} \rightarrow \mathbb{R} \right\},
    \end{equation}
    where, with a slight abuse of notation, $\eta(\tilde{\Lambda})$ denotes the pointwise application of $\eta$ to the entries of the diagonal matrix $\tilde{\Lambda}$.
   
\end{enumerate}

Note that the linear attention model is in general \emph{not} permutation equivariant, due to the non-uniqueness of the eigenvectors. Similarly, the spectral attention model is also in general \emph{not} permutation equivariant since $\mathcal{F}_{\text{LA}} \subset \mathcal{F}_{\text{SA}}$. The graph convolutional attention model, however, is permutation equivariant as it can be written as $f(\tilde{A}; h(\cdot)) = \sum_{i = 1}^n h(\tilde{\lambda}_i) v_i v_i^T$ which is a permutation equivariant aggregation of the eigenvectors each processed individually by the function $h(\cdot)$.
 
Next, in Propositions~\ref{prop: minimum lin loss}, and~\ref{prop: minimum filter loss} we characterize the minimum achievable loss for the linear and spectral attention classes respectively. The proofs can be found in Appendices~\ref{app: prop minimum lin loss proof}, and~\ref{app: prop minimum filter loss proof}.

\begin{proposition}\label{prop: minimum lin loss}
    The minimum loss achievable with linear attention,  $\ell^\star_{\text{LA}} := \min_{f \in \mathcal{F}_{\text{LA}}} \ell(f)$,  can be characterized as:
    \begin{equation}
    \ell^\star_{\text{LA}} = \mathrm{tr}\left(\mathrm{Cov}(\mathrm{vec}(B))\right) + c,\,
    \end{equation}
    where $B := \tilde{U}^\top A \tilde{U}$. Furthermore, the minimizer is given by $f(\cdot\,; W^\star)$ where:
    \begin{equation}
        W^\star = \mathbb{E}_{A,\,\mathcal{E}} [B]
    \end{equation}
\end{proposition}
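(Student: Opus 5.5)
The plan is to use the orthogonal invariance of the Frobenius norm to move the problem into the noisy eigenbasis, where it becomes a least-squares problem that decouples entrywise in $W$. Since $\tilde U$ is orthogonal for every realization of the noisy graph, for any $W$
\begin{equation*}
\left\| \tilde U W \tilde U^\top - A \right\|_F^2 = \left\| W - \tilde U^\top A \tilde U \right\|_F^2 = \left\| W - B \right\|_F^2 .
\end{equation*}
The loss over $\mathcal{F}_{\text{LA}}$ is therefore $\ell(f(\cdot\,;W)) = \mathbb{E}_{A,\mathcal{E}}\|W - B\|_F^2$. Here $W$ is a deterministic matrix and $B$ is a random matrix jointly induced by $A$ and $\mathcal{E}$. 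This step needs a fixed measurable convention for choosing $\tilde U$ (eigenvalue ordering and sign/basis choice within eigenspaces), since linear attention depends on it. I will state that the result holds for any such fixed convention.

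Next I will apply the bias–variance decomposition entrywise. Writing $\bar B := \mathbb{E}[B]$, we have
\begin{equation*}
\mathbb{E}\|W - B\|_F^2 = \|W - \bar B\|_F^2 + \mathbb{E}\|B - \bar B\|_F^2 .
\end{equation*}
The cross term vanishes because $\mathbb{E}[B - \bar B] = 0$ and $W - \bar B$ is deterministic. The first term is nonnegative and is zero exactly when $W = \bar B$, so the unique minimizer is $W^\star = \mathbb{E}_{A,\mathcal{E}}[B]$. This is well defined because $\|B\|_F = \|A\|_F \le n$ for unweighted graphs, so $B$ has bounded entries and finite moments. The minimum value is $\mathbb{E}\|B - \bar B\|_F^2 = \sum_{i,j}\mathrm{Var}(B_{ij}) = \mathrm{tr}(\mathrm{Cov}(\mathrm{vec}(B)))$.

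The main obstacle is interpreting the constant $c$. The derivation above gives exactly $c = 0$. If the paper's normalization differs (for example, the loss is taken over only part of the matrix, or $W$ is restricted, say to symmetric matrices), $c$ would collect the $W$-independent remainder. For the symmetric restriction, $B$ is itself symmetric, so $\bar B$ is symmetric and nothing changes. I would therefore present the proof with $c = 0$ under the stated class and note that any additive constant from an alternative normalization is independent of $W$, so it does not affect the minimizer.
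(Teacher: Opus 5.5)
Your proof is correct and follows the paper's skeleton. You rewrite the loss as $\mathbb{E}\|W-B\|_F^2$ plus a $W$-independent term, then use that the mean minimizes expected squared error. Your bias--variance split actually justifies this last step, which the paper only asserts.

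The difference is in how you reach the reduction, and it settles the question you flagged as the main obstacle. You invoke full orthogonal invariance. That needs $\tilde U\tilde U^\top = I$ and therefore forces $c=0$. The paper instead expands $\|\tilde U W\tilde U^\top - A\|_F^2$ as a trace and uses only $\tilde U^\top\tilde U = I$. It arrives at $\mathbb{E}\bigl[\|W-B\|_F^2 + \|A\|_F^2 - \|B\|_F^2\bigr]$ and \emph{defines} $c := \mathbb{E}\bigl[\|A\|_F^2 - \|B\|_F^2\bigr]$.

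So $c$ does not come from restricting $W$ or the loss, as you guessed. It accounts for $\tilde U$ having orthonormal columns without being square, e.g.\ the top-$k$ noisy eigenvectors used in the experiments and in the spectral-diversity estimator. In that case $c = \mathbb{E}\|A - \tilde U\tilde U^\top A\,\tilde U\tilde U^\top\|_F^2 \ge 0$. This is the energy of $A$ that the truncated noisy eigenbasis cannot represent. It vanishes exactly in your full-basis setting, so your $c=0$ agrees with the paper's definition for the statement as literally written ($W\in\mathbb{R}^{n\times n}$).

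The paper's route covers the truncated case at no extra cost. Yours is cleaner in the square case and valid for arbitrary $W$. The paper writes $\mathrm{tr}[(\tilde U W\tilde U^\top)^2]$ and so tacitly treats $W$ as symmetric. This is harmless, because the optimum $\mathbb{E}[B]$ is symmetric anyway.
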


\begin{proposition}\label{prop: minimum filter loss}
    The minimum loss achievable with spectral attention, $\ell^\star_{\text{SA}} := \min_{f \in \mathcal{F}_{\text{SA}}} \ell(f)$, can be characterized as:
   \begin{equation}
    \ell^\star_{\text{SA}} = \mathrm{tr}\left(\mathrm{Cov}(\mathrm{vec}(B) \mid \tilde{\Lambda})\right) + c
\end{equation}
    where $B := \tilde{U}^\top A \tilde{U}$. Furthermore, the minimizer is given by $f(\cdot\,; g^\star)$ where:
\begin{equation}
    g^\star(\tilde{\Lambda}) = \mathbb{E}_{A,\,\mathcal{E}}\!\left[B \;\middle|\; \tilde{\Lambda}\right]
\end{equation}    
\end{proposition}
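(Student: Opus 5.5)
We prove Proposition~\ref{prop: minimum filter loss} by the same orthogonal-invariance reduction used for Proposition~\ref{prop: minimum lin loss}, followed by a pointwise (conditional) minimization. Here $\tilde U$ is orthogonal, so the Frobenius norm is unchanged by conjugation with it. For any $f(\tilde A; g) = \tilde U g(\tilde\Lambda)\tilde U^\top \in \mathcal{F}_{\text{SA}}$ this gives
\begin{equation*}
\bigl\| \tilde U g(\tilde\Lambda)\tilde U^\top - A\bigr\|_F^2 = \bigl\| g(\tilde\Lambda) - \tilde U^\top A \tilde U\bigr\|_F^2 = \bigl\|\mathrm{vec}(g(\tilde\Lambda)) - \mathrm{vec}(B)\bigr\|_2^2 .
\end{equation*}
The loss therefore becomes $\ell(f) = \mathbb{E}\,\|\mathrm{vec}(g(\tilde\Lambda)) - \mathrm{vec}(B)\|_2^2$. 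This is an ordinary mean-squared regression of the random matrix $B$ on the random variable $\tilde\Lambda$, with $g$ ranging over all (measurable) maps $\mathbb{R}^n\to\mathbb{R}^{n\times n}$.

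Next we condition on $\tilde\Lambda$ using the tower property, $\ell(f) = \mathbb{E}_{\tilde\Lambda}\bigl[\,\mathbb{E}[\|\mathrm{vec}(g(\tilde\Lambda)) - \mathrm{vec}(B)\|^2 \mid \tilde\Lambda]\,\bigr]$. Inside the conditional expectation, $g(\tilde\Lambda)$ is a fixed matrix $G$. The standard bias--variance decomposition then gives
\begin{equation*}
\mathbb{E}\bigl[\|\mathrm{vec}(G) - \mathrm{vec}(B)\|^2 \mid \tilde\Lambda\bigr] = \bigl\|G - \mathbb{E}[B\mid\tilde\Lambda]\bigr\|_F^2 + \mathrm{tr}\bigl(\mathrm{Cov}(\mathrm{vec}(B)\mid\tilde\Lambda)\bigr).
\end{equation*}
The cross term vanishes because $\mathbb{E}[\mathrm{vec}(B) - \mathbb{E}[\mathrm{vec}(B)\mid\tilde\Lambda]\mid\tilde\Lambda]=0$. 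Since $g$ is arbitrary, we can minimize pointwise in $\tilde\Lambda$, choosing $g^\star(\tilde\Lambda) = \mathbb{E}[B\mid\tilde\Lambda]$. This choice is a measurable function of $\tilde\Lambda$ by the definition of conditional expectation, so it is admissible in $\mathcal{F}_{\text{SA}}$. It kills the first term, and no other choice can do better because that term is nonnegative.

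Taking the outer expectation gives $\ell^\star_{\text{SA}} = \mathbb{E}_{\tilde\Lambda}\,\mathrm{tr}(\mathrm{Cov}(\mathrm{vec}(B)\mid\tilde\Lambda))$, which is the expression in the statement with the expected conditional covariance. The additive constant $c$ plays the same role as in Proposition~\ref{prop: minimum lin loss}. If the eigendecomposition is taken as exact and orthogonal, $c = 0$. If $c$ there collects contributions independent of $W$, such as eigenvector sign/ordering conventions or a part of $A$ not representable in the $\tilde U$ basis when $\tilde U$ is truncated, the identical bookkeeping carries over here, since those terms do not depend on $g$ either.

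The main obstacle is the bookkeeping rather than the optimization. First, $B$ depends on the choice of eigenbasis $\tilde U$, which is ambiguous for repeated eigenvalues and up to sign. We fix a deterministic (measurable) selection rule for $\tilde U$ as a function of $\tilde A$, so that $B$ and the conditional expectation are well defined. Second, we assume $\mathbb{E}\|A\|_F^2<\infty$, which holds trivially for bounded adjacencies with finite-variance noise. This guarantees the conditional moments exist, so the pointwise minimization is legitimate.
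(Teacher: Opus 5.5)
Your proposal is correct and follows essentially the same route as the paper: you reduce the loss to the regression problem $\mathbb{E}\|g(\tilde\Lambda) - B\|_F^2$, condition on $\tilde\Lambda$ via the tower property, and take the conditional mean $g^\star(\tilde\Lambda)=\mathbb{E}[B\mid\tilde\Lambda]$ as the pointwise minimizer, reading the stated covariance term as its expectation over $\tilde\Lambda$. The only minor difference is in how the constant is handled: the paper expands the trace using only $\tilde U^\top\tilde U = I$, which gives $c = \mathbb{E}[\|A\|_F^2 - \|B\|_F^2]$ explicitly and hence also covers truncated $\tilde U$, whereas your full-orthogonality argument yields $c=0$ directly and leaves the truncated case to your hedged remark (which that expansion confirms).
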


By the law of total variance, conditioning can only reduce variance, so $\ell^\star_{\text{SA}}$ is going to be smaller or equal to $\ell^\star_{\text{LA}}$. We formalize this in Theorem~\ref{thm: loss improvement}, proven in Appendix~\ref{app: proof loss improvement}.

\begin{theorem}\label{thm: loss improvement}
The improvement in the minimum achievable loss by the spectral attention class compared to the linear attention class is given by:
\begin{equation}\label{eq: spectral diversity}
\ell^\star_{\text{SA}} -   \ell^\star_{\text{LA}} = \text{\normalfont{SD}}(\mathcal{D}_{\mathcal{G}}) := \mathbb{E}\left[\left\|\mathbb{E}[B \mid \tilde{\Lambda}] - \mathbb{E}[B]\right\|_F^2\right] = \mathrm{tr}(\mathrm{Cov}(\mathbb{E}[\mathrm{vec}(B) \mid \tilde{\Lambda}])),
\end{equation}
where we define $\text{\normalfont{SD}}(\mathcal{D}_{\mathcal{G}})$ as the 'Spectral Diversity' which depends on the graph data distribution and, implicitly, on the noise distribution.
\end{theorem}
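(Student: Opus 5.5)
The plan is to read the theorem as the nonnegative gap $\ell^\star_{\text{LA}} - \ell^\star_{\text{SA}} = \text{SD}(\mathcal{D}_{\mathcal{G}})$, and to derive it from Propositions~\ref{prop: minimum lin loss} and~\ref{prop: minimum filter loss} together with the law of total variance. The text preceding the statement says conditioning can only reduce variance, and the right-hand side of \eqref{eq: spectral diversity} is nonnegative. So the difference as literally written, $\ell^\star_{\text{SA}} - \ell^\star_{\text{LA}}$, can only equal $\text{SD}(\mathcal{D}_{\mathcal{G}})$ up to sign; I would prove the gap in the direction $\ell^\star_{\text{LA}} - \ell^\star_{\text{SA}}$.

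First I would recall why both propositions reduce to mean-square estimation of $B$. Since $\tilde U$ is orthogonal, $\|\tilde U M \tilde U^\top - A\|_F = \|M - \tilde U^\top A \tilde U\|_F = \|M - B\|_F$. This gives $\ell(f(\cdot;W)) = \mathbb{E}\|W - B\|_F^2$ for linear attention and $\ell(f(\cdot;g)) = \mathbb{E}\|g(\tilde\Lambda) - B\|_F^2$ for spectral attention. Any additive constant $c$ is the same in both propositions, so it cancels in the difference. The minimizers are $\mathbb{E}[B]$ and $\mathbb{E}[B\mid\tilde\Lambda]$. The optimal values are $\mathrm{tr}\,\mathrm{Cov}(\mathrm{vec}(B))$ and $\mathbb{E}\big[\mathrm{tr}\,\mathrm{Cov}(\mathrm{vec}(B)\mid\tilde\Lambda)\big]$. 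The conditional covariance term in Proposition~\ref{prop: minimum filter loss} must be read as averaged over $\tilde\Lambda$, since $\ell^\star_{\text{SA}}$ is a number.

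Next I would apply the law of total covariance to $\mathrm{vec}(B)$:
\begin{equation*}
\mathrm{Cov}(\mathrm{vec}(B)) = \mathbb{E}\big[\mathrm{Cov}(\mathrm{vec}(B)\mid\tilde\Lambda)\big] + \mathrm{Cov}\big(\mathbb{E}[\mathrm{vec}(B)\mid\tilde\Lambda]\big).
\end{equation*}
Taking traces and subtracting gives $\ell^\star_{\text{LA}} - \ell^\star_{\text{SA}} = \mathrm{tr}\,\mathrm{Cov}(\mathbb{E}[\mathrm{vec}(B)\mid\tilde\Lambda])$.

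For the remaining equality in \eqref{eq: spectral diversity}, the tower property gives $\mathbb{E}\big[\mathbb{E}[B\mid\tilde\Lambda]\big] = \mathbb{E}[B]$. Hence the trace of the covariance of the random vector $\mathbb{E}[\mathrm{vec}(B)\mid\tilde\Lambda]$ equals its expected squared Euclidean deviation from its mean, which is $\mathbb{E}\|\mathbb{E}[B\mid\tilde\Lambda] - \mathbb{E}[B]\|_F^2$.

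The main obstacle is well-posedness rather than algebra, and I would handle it in two steps.
\begin{enumerate}
\item \emph{Integrability.} By orthogonal invariance, $\|B\|_F = \|A\|_F \le n$ for unweighted graphs. So $B$ is bounded regardless of the noise law, including Gaussian noise, and all the conditional expectations and covariances exist.
\item \emph{Measurability.} $\tilde U$ is not unique (signs, and bases within repeated eigenspaces). I would fix a measurable selection $\tilde A \mapsto \tilde U$ once, use it consistently in both function classes, and note that $\tilde\Lambda$ (sorted eigenvalues) is a measurable function of $\tilde A$. Then $\mathbb{E}[B\mid\tilde\Lambda]$ is a genuine measurable function of $\tilde\Lambda$, so it belongs to the admissible class of $g$, and both propositions apply to the same random variable $B$.
\end{enumerate}
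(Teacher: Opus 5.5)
Your proposal is correct and follows the paper's own proof. The paper likewise applies the law of total covariance to $\mathrm{vec}(B)$ and takes traces to get $\ell^\star_{\text{LA}} - \ell^\star_{\text{SA}} = \mathrm{tr}(\mathrm{Cov}(\mathbb{E}[\mathrm{vec}(B)\mid\tilde\Lambda]))$, so the direction you chose is exactly what the paper proves, despite the sign as written in the statement; your integrability and measurability remarks are extra rigor the paper omits and do not change the argument.
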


\begin{remark}
    The Spectral Diversity $\text{\normalfont{SD}}(\mathcal{D}_{\mathcal{G}})$ essentially quantifies how much information we gain about the clean graph by observing the noisy spectrum. For more intuition on this, if we assume the noise is sufficiently small such that the noisy eigenvectors are close to the clean ones, then from~\eqref{eq: spectral diversity} we can write $\mathrm{tr}(\mathrm{Cov}(\mathbb{E}[\mathrm{vec}(B) \mid \tilde{\Lambda}_k])) \approx \mathrm{tr}(\mathrm{Cov}(\mathbb{E}[\Lambda \mid \tilde{\Lambda}]))$ which is akin to the variance of the expected value of $\Lambda$ having observed $\tilde \Lambda$. On one extreme, if the graph distribution is degenerate, i.e., its support is only a single graph, then $\mathbb{E}[\Lambda \mid \tilde{\Lambda}] = \mathbb{E}[\Lambda]$, and its variance and therefore the spectral diversity are $0$. On the other hand, the more varied $\Lambda$ is across graphs, $\mathbb{E}[\Lambda \mid \tilde{\Lambda}]$ can also vary significantly leading to a larger variance and spectral diversity.
\end{remark}
    
\subsection{From Spectral Attention to Graph Convolutional Attention}
The Spectral Attention class is quite abstract as it includes any arbitrary function of the eigenvalues. Graph Convolutional Attention on the other hand, is practically realizable with graph convolutional filters. In this section we aim to bridge the gap between these two classes under certain settings. We begin by defining a graph convolutional filter bank acting on input graph features $X \in \mathbb{R}^{n \times d}$ as:
\begin{equation}\label{eq: graph filter bank}
    h(X) = \sum_{l = 0}^{L-1} S^l X H^{(l)}
\end{equation}
where $S$ is the shift operator (e.g., Adjacency or Laplacian of the graph) and $H^{(l)} \in \mathbb{R}^{d \times d}$ are a set of learnable weights. In our case the input features are the eigenvectors of the graph, i.e., $X = U$, thus~\eqref{eq: graph filter bank} simplifies to:
\begin{equation}\label{eq: graph filter bank input eigenvec}
    h(U) = \sum_{l = 0}^{L - 1} U \Lambda^l U^\top U H^{(l)} = U \left( \sum_{l = 0}^{L - 1} \Lambda^l H^{(l)} \right)
\end{equation}
By replacing the linear projections in self-attention with graph convolutional filter banks as in~\eqref{eq: graph filter bank input eigenvec} we get the following function of the adjacency:

\begin{equation}\label{eq:productGF}
    f(A; H) = U \left( \sum_{l = 0}^{L - 1} \Lambda^l H_Q^{(l)} \right) \left( \sum_{l = 0}^{L - 1} \Lambda^l H_K^{(l)} \right)^\top U^\top
\end{equation}
Eq.~\eqref{eq:productGF} is not permutation equivariant in its most general form. Equivariance is recovered when $H_K^{(l)}$ and $H_Q^{(l)}$ are constant diagonal matrices, so that Eq.~\eqref{eq:productGF} reduces to a product of graph filters. Accordingly, in the rest of the analysis we restrict $H_K^{(l)}$ and $H_Q^{(l)}$ to this setting. In our experiments, however, we primarily use the more general graph-filter-bank formulation. Lemma~\ref{lem:1}, proven in Appendix \ref{app:lem1}, connects the graph convolutional filter to the GCA class defined in~\eqref{eq: GCA definition}.

\begin{lemma}\label{lem:1}
    A graph convolutional filter bank, i.e., $g_{\text{filt}}(\Lambda)=\left(\sum_{l=0}^{L-1}\Lambda^l H_Q^{(l)}\right)\left(\sum_{l=0}^{L-1}\Lambda^l H_K^{(l)}\right)^\top$, for sufficiently large $L$,  can approximate any function acting pointwise on the eigenvalues $\eta(\lambda): \mathbb{R} \rightarrow \mathbb{R}$,  if $\eta$ is continuous on any compact interval $[a,b] \subset \mathbb{R}$.
\end{lemma}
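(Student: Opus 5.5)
The plan is to restrict to the equivariant setting fixed just before the lemma, where $H_Q^{(l)} = q_l I$ and $H_K^{(l)} = k_l I$ for scalars $q_l, k_l$. Then $g_{\text{filt}}(\Lambda)$ is diagonal with $i$-th entry $p_Q(\lambda_i)\,p_K(\lambda_i)$, where $p_Q(\lambda)=\sum_{l=0}^{L-1} q_l\lambda^l$ and $p_K(\lambda)=\sum_{l=0}^{L-1} k_l\lambda^l$. So the lemma reduces to a scalar statement: for any continuous $\eta$ on a compact $[a,b]$ and any $\epsilon>0$, there exist $L$ and polynomials $p_Q,p_K$ of degree at most $L-1$ with $\sup_{\lambda\in[a,b]}|p_Q(\lambda)p_K(\lambda)-\eta(\lambda)|<\epsilon$. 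Uniform closeness on $[a,b]$ then gives $\|g_{\text{filt}}(\Lambda)-\eta(\Lambda)\|\le\epsilon$ in operator norm, and $\le\sqrt n\,\epsilon$ in Frobenius norm, for every $\Lambda$ with spectrum in $[a,b]$. For adjacency matrices with $n$ nodes one can take $[a,b]=[-(n-1),n-1]$, so that $[a,b]$ contains all eigenvalues.

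For the scalar statement, take the trivial factorization: set $p_K\equiv 1$, i.e. $k_0=1$ and $k_l=0$ for $l\ge1$. Then choose $p_Q$ by the Weierstrass approximation theorem as a polynomial with $\sup_{[a,b]}|p_Q-\eta|<\epsilon$, and let $L-1$ be its degree. If one prefers a symmetric query/key parametrization, a square-root construction also works when $\eta\ge0$ on $[a,b]$: approximate $\sqrt{\eta}$ uniformly by $p$ and use $p_Q=p_K=p$. In general, write $\eta=\eta_+-\eta_-$ and handle the sign through the freedom in choosing $q_l,k_l$ separately.

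There is essentially no hard step. The only point needing care is interpretive: one must make explicit that the approximation is uniform over a fixed compact interval containing all eigenvalues of the graphs in question, and that the equivariant diagonal restriction is what turns the matrix product into a pointwise product. If the general non-diagonal $H^{(l)}$ were intended instead, the statement would follow even more easily, because the diagonal case is a special case.
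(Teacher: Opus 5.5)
Your proposal is correct and is the paper's argument with the roles of query and key swapped. The paper fixes $H_Q^{(0)}=\mathbf{I}$ and $H_Q^{(l)}=\mathbf{0}$ for $l\ge 1$, takes $H_K^{(l)}=h_k^{(l)}\mathbf{I}$, and applies Weierstrass to the key polynomial; you do the same with $p_K\equiv 1$ and approximate via $p_Q$. Your side remark about writing $\eta=\eta_+-\eta_-$ is vague and unnecessary, since a product $p_Qp_K$ is not naturally a difference and the trivial factorization already handles sign-indefinite $\eta$.
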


With the preceding discussions and Lemma~\ref{lem:1} we have established a practical way to realize the functions in $\mathcal{F}_{\text{GCA}}$. Still in general there could be a gap between the minimum achievable loss of the graph convolutional attention model and the spectral attention model because the minimizer of the loss in $\mathcal{F}_{\text{SA}}$ is not necessarily a pointwise function of the eigenvalues. In other words since $\mathcal{F}_{\text{GCA}} \subset \mathcal{F}_{\text{SA}}$, it follows that in general $\ell^\star_{\text{SA}} \leq \ell^\star_{\text{GCA}}$. Thus, it is not obvious whether GCA also improves upon linear attention. In order to bridge this gap we show in this section that under certain conditions we have $\ell^\star_{\text{GCA}} = \ell^\star_{\text{SA}}$.

\paragraph{Stochastic Block Model (SBM).} Consider the general SBM of size $n$ with $k$ communities. A given graph sampled
from this model can be written as:
\begin{equation}
    A = \mathbb{E} \left[ A \right] + \underbrace{\left( A - \mathbb{E} \left[ A \right] \right)}_{\Delta}
\end{equation}
where $\mathbb{E} \left[ A \right]$ is a deterministic rank $k$ matrix with block structure and
the matrix $\Delta := A - \mathbb{E} \left[ A \right]$ encodes the randomness. 
Now consider our noisy observation $\tilde{A} = A + \mathcal{E}$ which we can write as:
\begin{equation}
    \tilde{A} = \mathbb{E} \left[ A \right] + \Delta + \mathcal{E}
\end{equation}

\begin{assumption}\label{ass: low variance sbm}
We assume the variance of $\Delta$ is small compared to eigenvalues of $\mathbb{E} \left[ A \right]$. In this setting we can ignore $\Delta$ and focus on recovering
the low rank signal $A \approx \mathbb{E} \left[ A \right]$ from the noisy observation $A + \mathcal{E}$. We further consider $\mathcal{E}$ to be a standard Wigner matrix scaled by $\sigma$.
\end{assumption}

A Wigner matrix is a symmetric random matrix with independent zero-mean entries on and above the diagonal. This assumption is not overly restrictive in our setting, since it captures independent entrywise perturbations of a graph. Gaussian noise with independent entries is a Wigner perturbation. Additionally, edge-flipping noise in the SBM setting, can be decomposed into a zero-mean Wigner-type component plus a deterministic bias term (See~\ref{app:edge_flipping} for proof). Hence the edge flip setting is also reduced to low-rank deterministic matrix plus Wigner-type noise matrix.

\begin{theorem}[Outlier Shrinkage]\label{thm: low variance sbm shrinkage}
    Under the setting of Assumption~\ref{ass: low variance sbm}, we have:
    \begin{equation}
        \lim_{n \to \infty} \mathbb{E} \left[  \tilde{U}^\top A \tilde{U}  \middle| \tilde{\Lambda} \right] = \eta_{out}(\tilde{\Lambda})
    \end{equation}
    where $\eta_{out}(\cdot):\mathbb{R} \rightarrow \mathbb{R}$ is a function acting pointwise on the noisy eigenvalues, as a result $\eta_{out}(\cdot) \in \mathcal{F}_{\text{GCA}}$. More precisely, we have:
        \begin{equation}
    \eta_{out}(\tilde{\lambda}) = \begin{cases}
        \sqrt{\tilde{\lambda}^2 - 4 \sigma^2} & \text{if } |\tilde{\lambda}| > 2\sigma \\
        0 & \text{if } |\tilde{\lambda}| \leq 2\sigma
    \end{cases}
\end{equation}
\end{theorem}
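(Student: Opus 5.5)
This is the classical spiked Wigner (BBP) setting, and I will read off the limit of $\mathbb{E}[\tilde U^\top A\tilde U\mid\tilde\Lambda]$ entrywise from the known asymptotics of outlier eigenvalues and eigenvector overlaps. By Assumption~\ref{ass: low variance sbm}, $A = \mathbb{E}[A] = \sum_{j=1}^k \theta_j u_j u_j^\top$, with $\theta_j$ fixed (or suitably normalized) spikes. The observation is $\tilde A = A + \sigma W$, where $W$ is a standard Wigner matrix whose spectrum converges to the semicircle on $[-2,2]$. Then
\[
B_{ij} = \tilde u_i^\top A \tilde u_j = \sum_{m=1}^k \theta_m \langle \tilde u_i, u_m\rangle\langle \tilde u_j, u_m\rangle .
\]
Everything therefore reduces to the limits of the overlaps $\langle \tilde u_i,u_m\rangle$ and of the eigenvalues $\tilde\lambda_i$.

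The steps, in order, are as follows. First, I invoke the BBP/Benaych-Georges--Nadakuditi results. For $|\theta_m|>\sigma$, there is an outlier $\tilde\lambda = \theta_m + \sigma^2/\theta_m$ that converges almost surely, with $|\tilde\lambda|>2\sigma$. Its eigenvector satisfies $\langle \tilde u, u_m\rangle^2 \to 1-\sigma^2/\theta_m^2$ and is asymptotically orthogonal to $u_{m'}$ for $m'\ne m$, assuming distinct spikes. For $|\theta_m|\le\sigma$, there is no outlier, and all bulk eigenvectors have vanishing overlap with every $u_m$. Second, I invert the eigenvalue map. Solving $\tilde\lambda=\theta+\sigma^2/\theta$ on the branch $|\theta|>\sigma$ gives
\[
\theta = \tfrac12\bigl(\tilde\lambda \pm \sqrt{\tilde\lambda^2-4\sigma^2}\bigr).
\]
Third, I compute the diagonal entry for an outlier:
\[
B_{ii}\to\theta\,(1-\sigma^2/\theta^2)=\theta-\sigma^2/\theta .
\]
Combined with $\theta+\sigma^2/\theta=\tilde\lambda$, this gives $(\theta-\sigma^2/\theta)^2=\tilde\lambda^2-4\sigma^2$, with the sign matching $\tilde\lambda$. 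So $B_{ii}\to\operatorname{sign}(\tilde\lambda)\sqrt{\tilde\lambda^2-4\sigma^2}$; the statement's $\sqrt{\cdot}$ should be read with this sign for negative outliers, or for nonnegative spikes. For bulk eigenvalues $|\tilde\lambda_i|\le 2\sigma$, $B_{ii}\to 0$. The off-diagonal entries $B_{ij}$ vanish because distinct outliers align with distinct $u_m$ and bulk vectors are delocalized relative to the $k$-dimensional spike space. Since all these limits are deterministic functions of $\tilde\lambda_i$ alone, the conditional expectation given $\tilde\Lambda$ converges to the diagonal matrix $\eta_{out}(\tilde\Lambda)$. Permutation equivariance and membership in $\mathcal{F}_{\text{GCA}}$ then follow by the pointwise form.

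The main obstacle is the passage from almost-sure limits of $B$ to limits of conditional expectations given $\tilde\Lambda$, together with the bulk contribution. The matrix $B$ has $n^2$ entries, and the $n-k$ bulk overlaps each vanish only like $O(1/n)$. To handle this, I would use that every entry is bounded, $|B_{ij}|\le\|A\|$, and apply dominated convergence conditionally. Concretely, I would argue that $\tilde\Lambda$ asymptotically determines the outlier locations, while the eigenvector overlaps concentrate around deterministic values. The conditional law then becomes degenerate around those values, so its expectation converges to the same limit. For the bulk, I would bound $\sum_{i \text{ bulk}}\|P_{\mathrm{spike}}\tilde u_i\|^2\le k$, show it is $o(1)$ per index via delocalization, and interpret the limit entrywise. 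A secondary issue is handling degenerate or equal spikes. I would assume distinct $\theta_m$; if spikes coincide, I would apply the same argument with projections onto the eigenspaces instead of individual vectors.
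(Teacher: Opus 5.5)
Your proposal is correct and follows essentially the same route as the paper: expand $B_{ij}=\sum_l \lambda_l\langle\tilde u_i,u_l\rangle\langle\tilde u_j,u_l\rangle$, use the Benaych-Georges--Nadakuditi outlier location $\lambda+\sigma^2/\lambda$ and overlap $1-\sigma^2/\lambda^2$ to get $B_{ii}\to\lambda-\sigma^2/\lambda$ and $B_{ij}\to 0$, and rewrite the limit as $\sqrt{\tilde\lambda^2-4\sigma^2}$. Your identity $(\theta-\sigma^2/\theta)^2=\tilde\lambda^2-4\sigma^2$ does this more cleanly than the paper's explicit inversion, which mislabels $\lambda\alpha(\lambda)$ as $\alpha(\lambda)$, and your extra points --- the sign for negative outliers (the paper's own proof only defines $\eta_{out}$ for $\tilde\lambda>2\sigma$), the bulk indices, and the passage from almost-sure limits to conditional expectations given $\tilde\Lambda$ using that $B-\eta_{out}(\tilde\Lambda)\to 0$ with $\eta_{out}(\tilde\Lambda)$ being $\tilde\Lambda$-measurable --- fill in steps the paper simply asserts.
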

See Appendix~\ref{app: low variance sbm shrinkage proof} for proof.
\begin{remark}
It follows immediately from Theorem~\ref{thm: low variance sbm shrinkage} and Lemma~\ref{lem:1} that in this asymptotic regime considered above, $\ell^\star_{\text{GCA}} = \ell^\star_{\text{SA}}$.
As such, the graph convolutional attention model can approximate the optimal spectral attention model in this asymptotic regime and
lead to an improvement in the minimum achievable loss over the linear attention model according to Theorem~\ref{thm: loss improvement}.
\end{remark}



\section{How Softmax Can Help Denoise Further}\label{sec: nonlinear theory}

In Section~\ref{sec: linear theory} we saw how a model that is linear (in the input tokens $\tilde U$) can learn to denoise the eigenvalues. 
Building on this observation, we now study how applying an additional nonlinearity, specifically the $\text{softmax}$ often used in attention-based models, can further improve denoising.
We will see that from a spectral view, this corresponds to denoising the \emph{eigenvectors}, beyond the eigenvalue denoising of Section~\ref{sec: linear theory}.

\paragraph{Problem Setup.} Consider $A \in \mathbb{R}^{n \times n}$ to be block diagonal with $k$ blocks of size $m_1, \ldots, m_k$ respectively. The entries within each block are all one and outside of the blocks are all zero. This leads to $A$ being a rank $k$ matrix with $U_k$ denoting the matrix of the top $k$ eigenvectors. We denote by $\hat{A} = \tilde{U} \Lambda \tilde{U}^\top$ the matrix with clean eigenvalues and noisy eigenvectors.

Consider the attention-based estimator introduced in \eqref{eq::graph_based_att}. We idealize the spectral denoising step studied in Section~\ref{sec: linear theory} as exact,  and set $W_V W_V^\top = \frac{1}{\alpha} W_Q W_K^\top = \Lambda$.
With this choice of $W_Q, Q_K, W_V$, the attention-based denoising map from~\eqref{eq::graph_based_att}, simplifies into an analytically tractable surrogate, which we use to study denoising under the assumption that the clean eigenvalues (and hence $\hat{A}$) are known:
\begin{equation}\label{eq: surrogate attention model}
    f_{\text{sm}}(\hat{A}; \alpha) = \text{softmax}(\alpha\hat{A}) \cdot \hat{A} \cdot (\text{softmax}(\alpha\hat{A}))^\top
\end{equation}
where $\text{softmax}(\cdot)$ applies a row-wise softmax operation. The model in~\eqref{eq: surrogate attention model} isolates the role of the softmax, with the parameter $\alpha$ controlling how sharp the softmax is.

\paragraph{Loss definitions and goal.} We define the denoising loss dependent on $\alpha$ as:
\begin{equation}
    \ell_{\text{\normalfont{soft}}}(\alpha) = \,\,\mathbb{E}_{\mathcal{E}} \left\| f(\hat{A}; \alpha) - A \right\|_F^2
\end{equation}
where $\mathcal{E}$ is the noise matrix. Our goal is to show that the minimum of this loss is smaller than the loss when we have the clean eigenvalues with the noisy eigenvectors. The latter we denote as $\ell_{\text{\normalfont{basis}}}$ since it is essentially the loss caused by the mismatch between the noisy eigenbasis $\tilde U$ and the original $U$:
\begin{equation}
    \ell_{\text{\normalfont{basis}}} = \,\,\mathbb{E}_{\mathcal{E}} \left\| \hat{A} - A \right\|_F^2
\end{equation}
Next, we will show that the softmax denoiser approximates projection onto the clean eigenbasis, which itself improves over $\ell_{\text{\normalfont{basis}}}$. Concretely, we proceed in two steps. First, in Lemma~\ref{lem: projection helps} we show projection onto the clean eigenbasis strictly improves over $\ell_{\text{\normalfont{basis}}}$. Then, in Theorem~\ref{thm: softmax reduce error} we show that for the correct choice of $\alpha$, the softmax denoiser approximates this projection well enough to inherit the improvement.

The connection between softmax and the aforementioned projection is more clear if we note that by replacing $\hat{A}$ in the softmax denoiser with the clean adjacency matrix $A$, we get
\begin{equation}
    \lim_{\alpha \to \infty} \text{softmax}(\alpha A) = P_U,
\end{equation}
where $P_U = U_k U_k^\top$ is the projection matrix onto the space of the principal eigenvectors of $A$. This motivates studying the loss attained by projecting onto the clean eigenbasis:
\begin{equation}
    \ell_{\text{\normalfont{proj}}} = \,\,\mathbb{E}_{\mathcal{E}} \left\| P_U \hat{A} P_U^\top - A \right\|_F^2
\end{equation}
The following lemma shows that this projection strictly reduces the loss, since it removes the components of the noise caused by the eigenbasis rotating outside of the initial subspace.

\begin{lemma}[Projection improves denoising]\label{lem: projection helps} Projecting onto the principal eigenbasis of the original graph strictly reduces the loss, i.e.,
    \begin{equation}
       \ell_{\text{\normalfont{proj}}} <  \ell_{\text{\normalfont{basis}}}
    \end{equation}
\end{lemma}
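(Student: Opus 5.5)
Since $A$ is block diagonal with all-ones blocks, $A = U_k \Lambda_k U_k^\top$ with $\Lambda_k = \mathrm{diag}(m_1,\dots,m_k)$. The remaining eigenvalues are zero. The matrix $\hat A = \tilde U \Lambda \tilde U^\top$ therefore uses only the top-$k$ noisy eigenvectors: $\hat A = \tilde U_k \Lambda_k \tilde U_k^\top$. The plan is to prove the inequality pointwise in $\mathcal{E}$, showing $\|P_U \hat A P_U - \hat A + \hat A - A\|_F^2 \le \|\hat A - A\|_F^2$ via an orthogonal (Pythagorean) decomposition. We then show strict inequality holds on an event of positive probability, which after taking $\mathbb{E}_{\mathcal{E}}$ gives $\ell_{\text{proj}} < \ell_{\text{basis}}$.

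\textbf{Step 1 (orthogonal decomposition).} The map $X \mapsto P_U X P_U$ is an orthogonal projection on the space of symmetric matrices equipped with the Frobenius inner product. It is idempotent and self-adjoint, because $\langle P_U X P_U, Y\rangle = \mathrm{tr}(P_U X P_U Y) = \langle X, P_U Y P_U\rangle$. Because $A = P_U A P_U$ lies in its range, we have
\begin{equation*}
\hat A - A = (P_U \hat A P_U - A) + (\hat A - P_U \hat A P_U),
\end{equation*}
and the two summands are Frobenius-orthogonal. Hence
\begin{equation*}
\|\hat A - A\|_F^2 = \|P_U \hat A P_U - A\|_F^2 + \|\hat A - P_U \hat A P_U\|_F^2 .
\end{equation*}
Taking expectations gives $\ell_{\text{basis}} = \ell_{\text{proj}} + \mathbb{E}_{\mathcal{E}}\|\hat A - P_U\hat A P_U\|_F^2$, so $\ell_{\text{proj}} \le \ell_{\text{basis}}$.

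\textbf{Step 2 (strictness).} It remains to show that $\mathbb{E}\|\hat A - P_U \hat A P_U\|_F^2 > 0$. This quantity vanishes only if $\hat A = P_U \hat A P_U$ almost surely. Since $\Lambda_k$ has strictly positive entries, that would require $\mathrm{range}(\tilde U_k) \subseteq \mathrm{range}(U_k)$, i.e., the top-$k$ noisy eigenspace coincides with the clean one almost surely. For the noise models considered (Gaussian/Wigner, or edge flips), I would exhibit a positive-probability event on which this fails. One way is by first-order perturbation theory: the component of $\tilde u_i$ outside $\mathrm{range}(U_k)$ is approximately $(I-P_U)\mathcal{E} u_i/\lambda_i$, which is nonzero unless $(I-P_U)\mathcal{E} U_k = 0$. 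For a continuous noise distribution, the set $\{(I-P_U)\mathcal{E}U_k = 0\}$ has measure zero. Alternatively, without perturbation theory, one can argue directly: if $\mathrm{range}(\tilde U_k) = \mathrm{range}(U_k)$, then $\tilde A$ commutes with $P_U$, which forces $(I-P_U)\mathcal{E}P_U = 0$. This is a proper linear subspace condition on $\mathcal{E}$, hence of probability less than one whenever $\mathcal{E}$ has a nondegenerate density, or for edge flips, whenever some flip couples distinct blocks with positive probability.

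The main obstacle is Step 2. We must make sure the degenerate case (noisy top-$k$ eigenspace equal to the clean one) has probability strictly less than one. We must also handle ambiguity in ordering and selecting the top $k$ noisy eigenvectors when $\tilde A$ has repeated or reordered eigenvalues. The commutation argument handles this cleanly. If $\tilde U_k$ spans $\mathrm{range}(U_k)$, then that space is $\tilde A$-invariant, so $(I-P_U)\tilde A P_U = (I-P_U)\mathcal{E}P_U = 0$. Strictness thus reduces to $\Pr[(I-P_U)\mathcal{E}P_U \neq 0] > 0$, which is immediate for Wigner noise with continuous entries.
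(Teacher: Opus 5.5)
Your proof is correct, but it takes a different route from the paper's. The paper restricts to the equal-block case $m_\ell = m$ and writes $\tilde U_k = U_k R + U_\perp S$. It computes explicitly $\ell_{\text{basis}} = 2m^2\,\mathbb{E}\|S\|_F^2$ and $\ell_{\text{proj}} = m^2\,\mathbb{E}\|S^\top S\|_F^2$. Since the eigenvalues of $S^\top S$ lie in $[0,1]$, it concludes $\ell_{\text{basis}} - \ell_{\text{proj}} \ge m^2\,\mathbb{E}\|S\|_F^2$.

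You instead note that $X \mapsto P_U X P_U$ is a Frobenius-orthogonal projection fixing $A$. This gives, pointwise in $\mathcal{E}$, the exact Pythagorean identity $\ell_{\text{basis}} - \ell_{\text{proj}} = \mathbb{E}\|\hat A - P_U \hat A P_U\|_F^2$. In the equal-block case this quantity equals $m^2\,\mathbb{E}[2\|S\|_F^2 - \|S^\top S\|_F^2]$, so the two computations agree.

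Your route has three advantages:
\begin{itemize}
\item It is shorter.
\item It needs no equal-block assumption. It holds for arbitrary block sizes, and indeed for any $A$ with $A = P_U A P_U$ and any $\hat A$.
\item Your Step 2 actually justifies strictness. The paper simply asserts $\mathbb{E}\|S\|_F^2 > 0$. Your invariant-subspace argument shows that zero leakage forces $(I-P_U)\mathcal{E}P_U = 0$, a proper linear condition on $\mathcal{E}$. It also makes explicit that some nondegeneracy of the noise is required; for $\mathcal{E}=0$ the lemma is false.
\end{itemize}

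What the paper's coordinate computation buys is explicit dependence on the leakage $S$. The formulas $\ell_{\text{basis}} = 2m^2\,\mathbb{E}\|S\|_F^2$ and $\ell_{\text{proj}} \le m^2\,\mathbb{E}\|S\|_F^4$ are reused in the proof of Theorem~\ref{thm: softmax reduce error}, where they are compared with the softmax error at leading order in $\|S\|_F$. With your approach you would still need that calculation downstream.
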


Of course, when denoising we do not have access to the clean adjacency matrix $A$, only to $\hat{A}$. The softmax denoiser $\text{softmax}(\alpha \hat{A})$ can therefore be viewed as an \emph{approximate} projection: when $\hat{A}$ is close to $A$, applying softmax with sufficiently large $\alpha$ should produce something close to $P_U$. The gap between $\ell_{\text{\normalfont{soft}}}(\alpha)$ and $\ell_{\text{\normalfont{proj}}}$ depends on the choice of $\alpha$: for very small or very large $\alpha$ the gap is large, but the following theorem shows that there exist intermediate choices of $\alpha$ for which the gap is smaller than the improvement gained by projection i.e., $\ell_{\text{\normalfont{basis}}} -\ell_{\text{\normalfont{proj}}}$, which in turn means that the softmax denoiser strictly improves upon $\ell_{\text{\normalfont{basis}}}$.

\begin{theorem}\label{thm: softmax reduce error}
     Assume that the noise is sufficiently small, i.e., $ \;\mathbb{E}\left[\|\mathcal{E}\|_F^2\right] \;\ll\; \min\!\left(1,\; \frac{m^2}{\log^2 k}\right).\;$ Then, there exists $\alpha^\star > 0 $ such that we have:
    \begin{equation}
        \ell_{\text{\normalfont{soft}}}(\alpha^\star) - \ell_{\text{\normalfont{proj}}} < \ell_{\text{\normalfont{basis}}} - \ell_{\text{\normalfont{proj}}}
    \end{equation}
    
\end{theorem}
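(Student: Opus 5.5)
The plan is to compare $f_{\text{sm}}(\hat A;\alpha)$ directly with the projection estimator $P_U\hat A P_U^\top$, and to show that for a well-chosen $\alpha^\star$ their difference is of smaller order than the gain $\ell_{\text{basis}}-\ell_{\text{proj}}$ from Lemma~\ref{lem: projection helps}.

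\textbf{Step 1 (size of the projection gain).} Write $\hat A = A + D$ with $D = \tilde U\Lambda\tilde U^\top - U\Lambda U^\top$. By Davis--Kahan, $D$ is first order in $\mathcal{E}$. Since $A = P_U A P_U$, orthogonality gives
\begin{equation*}
\ell_{\text{basis}}-\ell_{\text{proj}} = \mathbb{E}\bigl[\|D\|_F^2 - \|P_U D P_U\|_F^2\bigr] = \mathbb{E}\bigl[2\|P_U D P_U^\perp\|_F^2 + \|P_U^\perp D P_U^\perp\|_F^2\bigr].
\end{equation*}
To first order, $D \approx \Delta U\,\Lambda U^\top + U\Lambda\,\Delta U^\top$, where $\Delta U \approx P_U^\perp \mathcal{E} U\Lambda^{-1}$ is the rotation of the eigenbasis out of the clean subspace. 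This yields a lower bound $\ell_{\text{basis}}-\ell_{\text{proj}} \geq c\,\mathbb{E}\|P_U^\perp\mathcal{E}P_U\|_F^2$, which I expect to be of order $\mathbb{E}\|\mathcal{E}\|_F^2$ up to constants depending on $k$ and the block sizes. I will treat the gain as $\Theta(\epsilon^2)$ with $\epsilon^2 := \mathbb{E}\|\mathcal{E}\|_F^2$.

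\textbf{Step 2 (softmax versus projection).} Split the softmax error into a bias term and a perturbation term:
\begin{equation*}
\mathrm{softmax}(\alpha\hat A) - P_U = \bigl[\mathrm{softmax}(\alpha A) - P_U\bigr] + \bigl[\mathrm{softmax}(\alpha\hat A) - \mathrm{softmax}(\alpha A)\bigr].
\end{equation*}

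\emph{Bias term.} Row $i$ of $\alpha A$ equals $\alpha$ on the $m_j$ entries of its block and $0$ elsewhere. The softmax row therefore puts mass $e^\alpha/(m_j e^\alpha + n - m_j)$ on the block and $1/(m_j e^\alpha + n - m_j)$ off it. Comparing with $P_U$, whose block rows are $1/m_j$, the bias is $O\bigl(n e^{-\alpha}/m\bigr)$ entrywise mass. Its Frobenius contribution is therefore exponentially small in $\alpha$, roughly $\sqrt{k}\,(n/m)e^{-\alpha}$. Making this negligible requires $\alpha \gtrsim \log(k/\epsilon)$; this is where the $\log k$ in the hypothesis enters.

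\emph{Perturbation term.} The row-wise softmax Jacobian has operator norm at most $1$ in the argument, so this term is bounded by $\alpha\|D\|$ in an appropriate norm. Refining this, since the Jacobian at $\alpha A$ is nearly supported on within-block entries with weights of order $1/m$, gives a bound of order $\alpha\|D\|_F/m$.

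\textbf{Step 3 (choosing $\alpha^\star$).} Insert the two bounds into
\begin{equation*}
f_{\text{sm}} - P_U\hat A P_U = (S-P_U)\hat A S^\top + P_U\hat A (S-P_U)^\top, \qquad S := \mathrm{softmax}(\alpha\hat A),
\end{equation*}
and expand $\ell_{\text{soft}}-\ell_{\text{proj}}$. The cross term with the projection residual $P_U\hat A P_U - A$ is controlled by Cauchy--Schwarz. Using $\|\hat A\|\le m_{\max}$, the total gap is at most of order
\begin{equation*}
m\bigl(\sqrt{k}\,e^{-\alpha} + \alpha\epsilon/m\bigr)\cdot\epsilon + (\text{squares}).
\end{equation*}
Taking $\alpha^\star \approx \log(k m/\epsilon)$ makes the bias negligible. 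The remaining gap is then of order $\epsilon^2\log(\cdot)\,\epsilon/m$, and the assumption $\epsilon \ll m/\log k$, together with $\epsilon\ll1$, makes this a strict fraction of the $\Theta(\epsilon^2)$ gain from Step 1. That proves $\ell_{\text{soft}}(\alpha^\star)-\ell_{\text{proj}} < \ell_{\text{basis}}-\ell_{\text{proj}}$.

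\textbf{Main obstacle.} The delicate point is Step 2's perturbation term. A crude bound of $\alpha\|D\|$ times $\|\hat A\|\sim m$ gives an error of order $\epsilon$ rather than $\epsilon^2$ relative scale, and a first-order error cannot beat a second-order gain. So the cross term must be shown to cancel, or to be orthogonal to the residual, at first order, or else be controlled by the $1/m$ averaging in the softmax Jacobian. I would first try to show that the first-order softmax perturbation, applied to $A$, lies in directions whose inner product with $P_U\hat AP_U - A$ vanishes in expectation because $\mathbb{E}\mathcal{E}=0$. That would leave only second-order terms, which carry the extra $\alpha/m$ factor and are handled by the stated smallness condition.
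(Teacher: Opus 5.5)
\textbf{Gap: Step 3 does not close, and your proposed repair targets the wrong term.}

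Set $E := \mathrm{softmax}(\alpha\hat A) - P_U$, $\delta_0 := P_U\hat A P_U^\top - A = P_U D P_U$, and $\delta_1 := E\hat A P_U^\top + P_U\hat A E^\top + E\hat A E^\top$. Then $\ell_{\text{soft}}(\alpha) - \ell_{\text{proj}} = 2\,\mathbb{E}\langle\delta_0,\delta_1\rangle_F + \mathbb{E}\|\delta_1\|_F^2$.

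Your own Step~1 shows that $\delta_0$ is second order. So the cross term is already of higher order once $\alpha\|S\|_F \ll 1$ (here $S$ is the leakage, not your softmax matrix), and no cancellation from $\mathbb{E}\mathcal{E}=0$ is needed there. The real obstruction is the nonnegative term $\mathbb{E}\|\delta_1\|_F^2$, which no expectation argument can cancel. Your Step~2 bounds give $\|\delta_1\|_F \lesssim m\|E\|_F \lesssim \sqrt{k}\,n\,e^{-\alpha} + \alpha\|D\|_F$, hence $\mathbb{E}\|\delta_1\|_F^2 \lesssim \alpha^2\,\mathbb{E}\|D\|_F^2$. This is the same order as the gain $\ell_{\text{basis}}-\ell_{\text{proj}} \approx \mathbb{E}\|D\|_F^2$, but with an extra factor $\alpha^2$. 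Killing the bias forces $\alpha \gtrsim \log(k/\epsilon) \gg 1$, while beating the gain with this bound forces $\alpha = O(1)$. Your displayed estimate also silently drops these squares, and its cross term evaluates to $\alpha\epsilon^2$, not $\epsilon^3\log(\cdot)/m$.

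\textbf{The missing idea: $E$ almost annihilates the clean eigenvectors, which makes $\delta_1$ second order.} Every row of $\mathrm{softmax}(\alpha\hat A)$ sums to one, and the columns $u_\ell = m_\ell^{-1/2}\mathbf{1}_{B_\ell}$ of $U_k$ are block-constant. Hence for $i\in B_\ell$ one has exactly $(Eu_\ell)_i = -m_\ell^{-1/2}\sum_{j\notin B_\ell}\mathrm{softmax}(\alpha\hat A)_{ij}$, and similarly for $i\notin B_\ell$. The entrywise separation $\eta=\|\hat A - A\|_\infty<1/2$ then gives, in the equal-block case, $\|EU_k\|_F \le k\sqrt{k-1}\,e^{-\alpha(1-2\eta)}$ regardless of the noise size. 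This is the paper's Lemma~\ref{lem:EVk}.

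You were one step away. The first-order softmax perturbation applied to $A$ does not merely have vanishing inner product with the residual in expectation; it vanishes identically up to exponentially small terms. The softmax Jacobian at $\alpha A$ sends any row perturbation to a vector that, up to $O(e^{-\alpha})$, is supported on the row's own block and sums to zero, so it is orthogonal to every column of $U_k$.

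Consequently $E\hat A = EA + ED$, where $EA = (EU_k)\Lambda_k U_k^\top$ is exponentially small and $\|ED\|_F \le \|E\|_2\|D\|_F \lesssim (\alpha/m)\|D\|_F^2$ up to exponentially small terms. The paper writes this as $E\hat A = (E\tilde U_k)\Lambda_k\tilde U_k^\top$ with $\|E\tilde U_k\|_F \lesssim k^{3/2}e^{-\alpha\gamma} + \sqrt{2}\,\alpha\|S\|_F^2$, where $S = U_\perp^\top\tilde U_k$ is the leakage. Either way, $\mathbb{E}\|\delta_1\|_F^2 = O(m^2\alpha^2\,\mathbb{E}\|S\|_F^4)$, against a gain of at least $m^2\,\mathbb{E}\|S\|_F^2$.

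Only then does a window of admissible $\alpha$ open: $\alpha\gamma$ must be large enough for the exponential terms to be negligible, and $\alpha\|S\|_F\ll 1$. The paper takes $\alpha^\star = 1/(4s)$ and uses the smallness hypothesis to secure $\gamma = 1-2\eta \ge 1/2$.

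Your bias/perturbation split, the Jacobian bound $\|J\|_2\le 1/m$, and the Cauchy--Schwarz step all match the paper. What must be added is this exact annihilation $EU_k\approx 0$.
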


\section{Experiments}\label{sec:experiments}

Our experiments serve three purposes. We first demonstrate that
replacing the standard linear query--key projections with graph
convolutional attention improves graph denoising, with the per-dataset
improvement tracking the spectral diversity predicted by
Theorem~\ref{thm: loss improvement}. We then establish that these
denoising gains carry over to generative diffusion. Finally, we show
that combining our variant with R-PEARL matches vanilla DiGress generative performance while removing
its expensive spectral node features resulting in faster inference.

\subsection{Models and Baselines}

As a baseline in both denoising and generative experiments, we use a standard graph transformer consisting of multiple layers of multi-head self-attention. The input tokens are positional encodings only — specifically, the top $k$ eigenvectors of the noisy adjacency matrix. Given a noisy adjacency $\tilde A \in \mathbb{R}^{n \times n}$ the denoiser predicts adjacency logits
$\hat{A} \in \mathbb{R}^{n \times n}$ in a stack of $L$ multi-head attention blocks; the logits $\hat A$ are produced by a learned linear combination of the final block's per-head attention scores. See Appendix~\ref{app: arch details GT GCAT} for the full description. We compare this baseline, which we refer to as Graph Transformer (GT), to a variant which differs only in the Query and Key projections inside
each block.

\textbf{Graph Transformer (GT):}
\begin{equation}
Q^{(\ell)} = X^{(\ell)} W_Q^{(\ell)},
\qquad
K^{(\ell)} = X^{(\ell)} W_K^{(\ell)}.
\end{equation}

\textbf{Graph Convolutional Attention Transformer (GCAT):}
\begin{equation}
Q^{(\ell)} = 
\sum_{p=0}^{P-1} A^{\,p}\, X^{(\ell)}\, H_{Q,p}^{(\ell)}
,
\qquad
K^{(\ell)} = 
\sum_{p=0}^{P-1} A^{\,p}\, X^{(\ell)}\, H_{K,p}^{(\ell)}.
\end{equation}
For GCAT, the queries and keys become learnable polynomials of the noisy
adjacency of order $P$, allowing two nodes to attend on the basis of their $P$-hop
neighborhoods in the adjacency $A$.

\textbf{DiGress and Spectral Variants.} In order to validate the merit of GCA on a more recent graph diffusion baseline, we use DiGress. We also study the effect of incorporating spectral information explicitly into the architecture via 2 ablations: we replace the attention $Q$, $K$ projections with 1) a \emph{spectral attention} layer realised by a polynomial filter in the noisy Laplacian eigenvalues and 2) a \emph{graph convolutional attention} layer realised by polynomial filters in the adjacency itself i.e., the same construction used in GCAT above, now applied to the DiGress backbone while keeping the rest of the recipe unchanged. See Appendix~\ref{app:arch_details:digress} for architectural details.

Independently of the attention variant, we also replace DiGress's
eigendecomposition-based node features with R-PEARL, the random-feature
variant of the PEARL positional encoding~\citep{kanatsoulis2025learning}.
R-PEARL learns positional encodings that are simultaneously expressive,
stable, and scalable by processing a random noise signal with a GNN.
Crucially, this lets us sidestep the instability and expense of directly
computing eigenvectors on large graphs. See Appendix~\ref{app:arch_details:rpearl} for details.

\subsection{Datasets}
We use a mix of real benchmark graph datasets from
PyTorch Geometric's \texttt{TUDataset} collection, the \texttt{SPECTRE}
stochastic block model (SBM) benchmark of~\cite{martinkus2022spectre},
and synthetic SBMs sampled with controllable block-size
heterogeneity. All graphs are treated as undirected and unattributed: we
keep only the adjacency structure and discard any node or edge labels
supplied by the source dataset.

\textbf{Synthetic Datasets.} We construct synthetic SBM datasets with controllable graph diversity. Each graph has $n=200$ nodes, with block count sampled uniformly from $K\in\{2,3,4\}$ and maximum block sizes of $50$ and $60$. Block-size heterogeneity is governed by a Dirichlet concentration $\alpha \in \{0.1, 1.0, 10.0\}$; larger $\alpha$ yields more homogeneous block sizes and thus less diversity across graphs.

\textbf{Real Datasets.} We use five collections of variable-size unattributed graphs, all from TUDataset except DEEZER-EGO-NETS. PROTEINS~\citep{borgwardt2005proteins} has $1{,}113$ protein-structure graphs (up to $620$ nodes) with secondary-structure elements as nodes. ENZYMES~\citep{borgwardt2005proteins} has $600$ enzyme tertiary-structure graphs (up to $126$ nodes) across six EC classes. IMDB-BINARY~\citep{yanardag2015deep} provides $1{,}000$ movie co-appearance ego networks (up to $136$ nodes, small and dense), and COLLAB~\citep{yanardag2015deep} provides $5{,}000$ scientific-collaboration ego networks from three physics communities (up to $492$ nodes, heavy-tailed). DEEZER-EGO-NETS~\citep{rozemberczki2020characteristic} contains $9{,}629$ Deezer user ego networks with mutual-friendship edges (up to $363$ nodes).

\subsection{Denoising Experiments}

For our denoising experiments, we sample clean graphs from the training set,
add noise to the adjacency matrix, and then pass the noisy adjacency as input to the denoising model. We then train to minimize the loss between the model's prediction and the original noiseless graph; the validation loss we report throughout is this reconstruction error against the clean adjacency over graphs sampled from a validation subset not used in training. The noising process is discrete, similar to~\cite{vignac2022digress}, wherein each edge is flipped with some probability $\varepsilon$, independently from the other edges. The probability $\varepsilon$, which determines the noise level, is sampled for each graph in the batch from a list of values ranging from $0.05$ to $0.5$.

\begin{figure}[htbp]
    \centering
    \begin{subfigure}{0.33\textwidth}
        \centering
        \includegraphics[width=\linewidth]{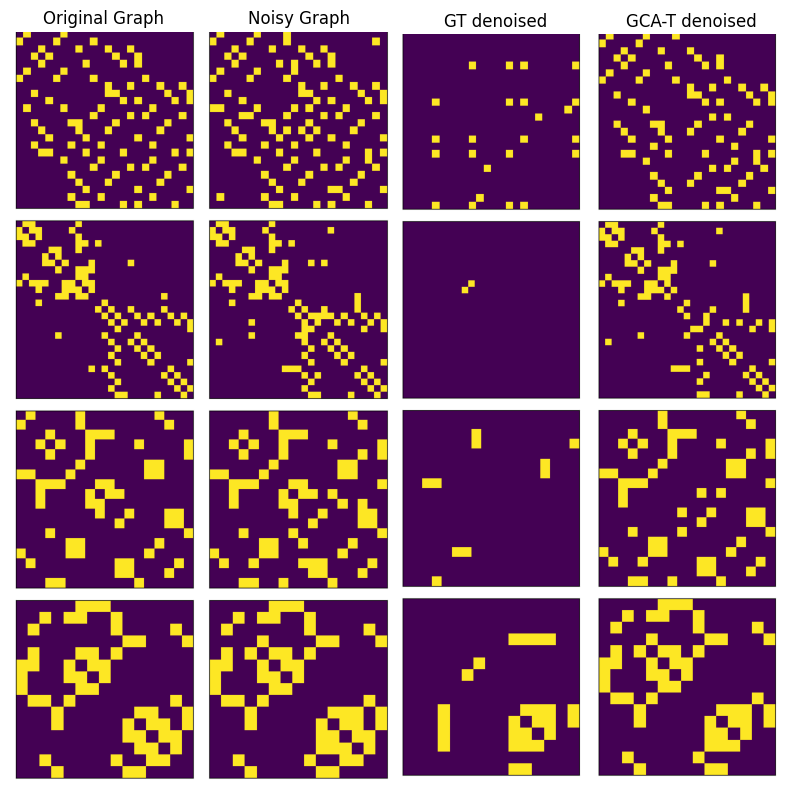}
        \caption{ENZYMES dataset}
        \label{fig:enzymes denoising}
    \end{subfigure}
    \hfill
    \begin{subfigure}{0.33\textwidth}
        \centering
        \includegraphics[width=\linewidth]{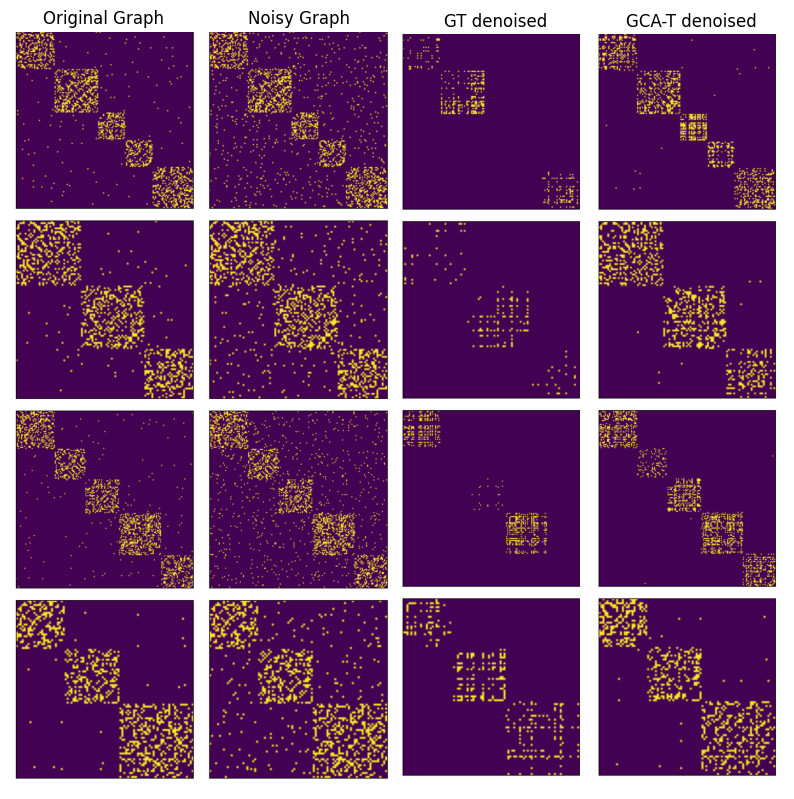}
        \caption{SPECTRE dataset}
        \label{fig:spectre denoising}
    \end{subfigure}
    \hfill
    \begin{subfigure}{0.32\textwidth}
        \centering
        \includegraphics[width=\linewidth]{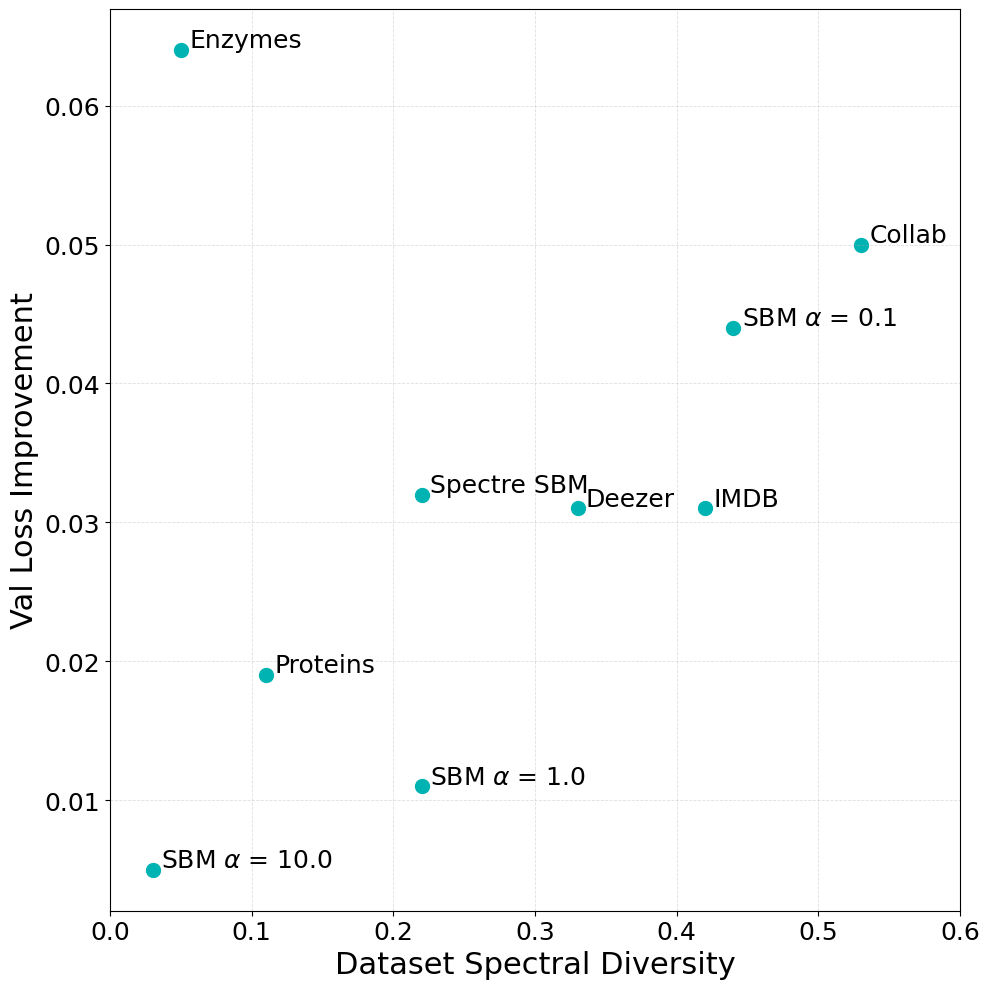}
        \caption{Diversity vs Improvement}
        \label{fig:diversity vs improvement}
    \end{subfigure}
    \caption{Quantitative and Qualitative Denoising comparison of GT and GCAT.}
    \label{fig:denoising}
\end{figure}

As we observe in Figure~\ref{fig:diversity vs improvement}, GCAT has improved denoising performance compared to the GT baseline consistently across all the datasets, as evidenced by the lower validation loss. Figures~\ref{fig:enzymes denoising} and~\ref{fig:spectre denoising} show the corresponding qualitative comparison on ENZYMES and SPECTRE, where GCAT's reconstructions more faithfully recover the structure of the original graph than GT's. Furthermore, Figure~\ref{fig:diversity vs improvement} shows that the margin of improvement for each dataset is strongly correlated with the estimated spectral diversity of the dataset, which is in line with our expectation from Theorem~\ref{thm: loss improvement}. This validates the idea that our proposed spectral diversity metric is indeed a strong predictor of the improvement gained by utilizing spectral information directly in the attention mechanism. We approximate the spectral diversity using a KNN approach detailed in Appendix~\ref{app:knn} to estimate $\mathbb{E}[B \mid \tilde\Lambda_k]$.

\subsection{Generative Experiments}

We evaluate the GCAT architecture in two generative settings.
\textbf{(1)}~Our main setting mirrors the denoising experiments and our
theoretical results: we use the same diffusion training and inference
pipeline as DiGress~\cite{vignac2022digress}, but with the GT or GCAT
architecture as the denoiser. \textbf{(2)}~We additionally integrate
GCAT into the full DiGress recipe, keeping all architecture,
hyperparameters, and implementation details as in the published code
base — except training steps, which we cap at 88k for SPECTRE-SBM and
300k for ENZYMES (from the published 550k) due to computational and time
constraints. MMDs are evaluated every $4{,}400$ optimiser steps for
SPECTRE-SBM, matching upstream's $100$-epoch $\times$
sample-every-$4$ cadence, and every $75{,}000$ for ENZYMES.

\begin{table}[h]
\centering
\small
\setlength{\tabcolsep}{4pt}
\begin{tabular}{lcccccc}
\toprule
Dataset & Recon LogP $\uparrow$ & NLL $\downarrow$ & Spec MMD $\downarrow$ & Clust MMD $\downarrow$ & Deg MMD $\downarrow$ & Orbit MMD $\downarrow$ \\
\midrule
\multirow{2}{*}{SBM ($\alpha=0.1$)} & -101.76 & 532.35 & 0.0346 & \textbf{0.0233} & 0.0496 & 0.0433 \\
 & \cellcolor{gray!20}\textbf{-30.37} & \cellcolor{gray!20}\textbf{505.94} & \cellcolor{gray!20}\textbf{0.0214} & \cellcolor{gray!20}0.0257 & \cellcolor{gray!20}\textbf{0.0268} & \cellcolor{gray!20}\textbf{0.0416} \\
\midrule
\multirow{2}{*}{SBM ($\alpha=1.0$)} & -112.70 & 576.67 & 0.0721 & \textbf{0.0157} & 0.0954 & 0.0240 \\
 & \cellcolor{gray!20}\textbf{-31.66} & \cellcolor{gray!20}\textbf{544.44} & \cellcolor{gray!20}\textbf{0.0247} & \cellcolor{gray!20}0.0163 & \cellcolor{gray!20}\textbf{0.0135} & \cellcolor{gray!20}\textbf{0.0220} \\
\midrule
\multirow{2}{*}{SBM ($\alpha=10$)} & -70.31 & 565.66 & 0.0562 & 0.0166 & 0.0047 & 0.1233 \\
 & \cellcolor{gray!20}\textbf{-30.88} & \cellcolor{gray!20}\textbf{548.21} & \cellcolor{gray!20}\textbf{0.0186} & \cellcolor{gray!20}\textbf{0.0156} & \cellcolor{gray!20}\textbf{0.0025} & \cellcolor{gray!20}\textbf{0.0495} \\
\midrule
\multirow{2}{*}{SPECTRE} & -1925.37 & 5044.54 & \textbf{0.0172} & 0.0743 & 0.0151 & \textbf{0.1405} \\
 & \cellcolor{gray!20}\textbf{-1024.65} & \cellcolor{gray!20}\textbf{4885.99} & \cellcolor{gray!20}0.0801 & \cellcolor{gray!20}\textbf{0.0718} & \cellcolor{gray!20}\textbf{0.0075} & \cellcolor{gray!20}0.1553 \\
\midrule
\multirow{2}{*}{ENZYMES} & -182.88 & 530.31 & 0.0502 & 0.0478 & 0.0956 & \textbf{0.0541} \\
 & \cellcolor{gray!20}\textbf{-28.26} & \cellcolor{gray!20}\textbf{424.79} & \cellcolor{gray!20}\textbf{0.0458} & \cellcolor{gray!20}\textbf{0.0431} & \cellcolor{gray!20}\textbf{0.0458} & \cellcolor{gray!20}0.2692 \\
\midrule
\multirow{2}{*}{PROTEINS} & -354.86 & 968.25 & 0.0885 & 0.0993 & 0.1949 & 0.0616 \\
 & \cellcolor{gray!20}\textbf{-122.28} & \cellcolor{gray!20}\textbf{838.50} & \cellcolor{gray!20}\textbf{0.0310} & \cellcolor{gray!20}\textbf{0.0514} & \cellcolor{gray!20}\textbf{0.0257} & \cellcolor{gray!20}\textbf{0.0331} \\
\midrule
\multirow{2}{*}{IMDB} & -55.13 & 275.21 & 0.1249 & 0.3591 & 0.1405 & \textbf{0.3800} \\
 & \cellcolor{gray!20}\textbf{-13.39} & \cellcolor{gray!20}\textbf{208.99} & \cellcolor{gray!20}\textbf{0.0278} & \cellcolor{gray!20}\textbf{0.2608} & \cellcolor{gray!20}\textbf{0.0319} & \cellcolor{gray!20}0.4133 \\
\midrule
\multirow{2}{*}{COLLAB} & -1489.94 & 3588.22 & \textbf{0.1465} & \textbf{0.0270} & \textbf{0.1337} & \textbf{0.2144} \\
 & \cellcolor{gray!20}\textbf{-1189.15} & \cellcolor{gray!20}\textbf{3136.62} & \cellcolor{gray!20}0.3429 & \cellcolor{gray!20}0.3399 & \cellcolor{gray!20}0.3593 & \cellcolor{gray!20}0.9954 \\
\midrule
\multirow{2}{*}{DEEZER} & -177.88 & 502.30 & \textbf{0.0147} & \textbf{0.0226} & \textbf{0.0235} & \textbf{0.3136} \\
 & \cellcolor{gray!20}\textbf{-49.26} & \cellcolor{gray!20}\textbf{414.70} & \cellcolor{gray!20}0.0289 & \cellcolor{gray!20}0.0568 & \cellcolor{gray!20}0.0370 & \cellcolor{gray!20}0.5548 \\
\bottomrule
\end{tabular}
\vspace{0.5cm}
\caption{Comparison of generation metrics between GT (white rows) and GCAT (grey rows). Bold marks the better value within each pair. $\uparrow$/$\downarrow$ indicate higher/lower is better.}
\label{tab:interleaved_results}
\end{table}

\textbf{GCAT vs GT Results.} As the results in Table~\ref{tab:interleaved_results} show, the gains in denoising loss translate to uniform improvement in logP and NLL across datasets, when using GCAT in generative diffusion. With the exception of COLLAB and DEEZER, these improved training metrics typically also translate to improved structural metrics in the generated samples.
This improvement can also qualitatively be seen in generated graphs when compared to reference graphs from the datasets in Figure~\ref{fig:3x2grid}.

\begin{figure}[htbp]
\centering

\begin{subfigure}{0.48\textwidth}
    \centering
    Two-Community SBM Graphs ($\alpha = 10$)
    \includegraphics[width=\linewidth]{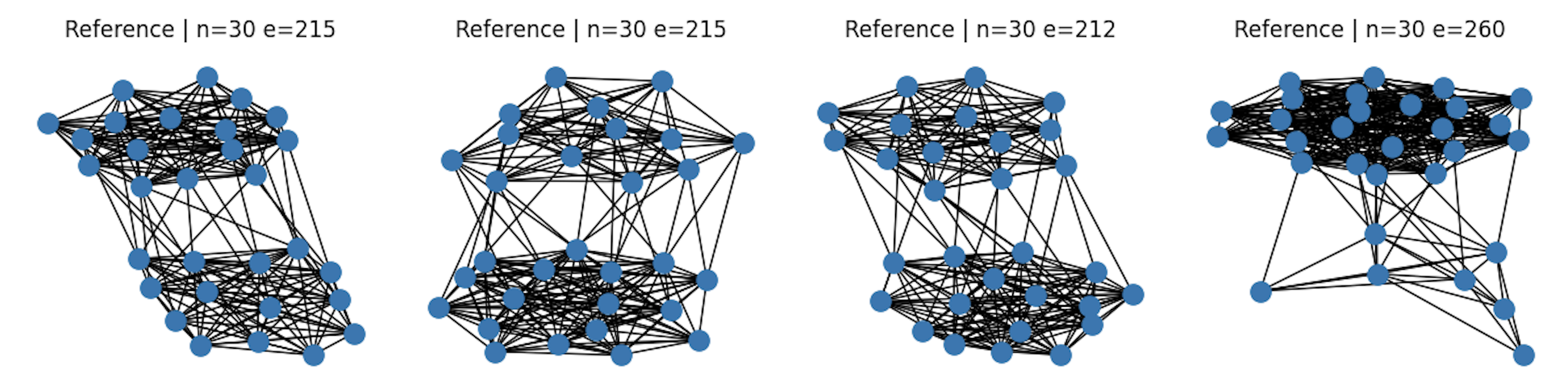}
\end{subfigure}
\hfill
\begin{subfigure}{0.48\textwidth}
    \centering
    Deezer Ego Net Graphs
    \includegraphics[width=\linewidth]{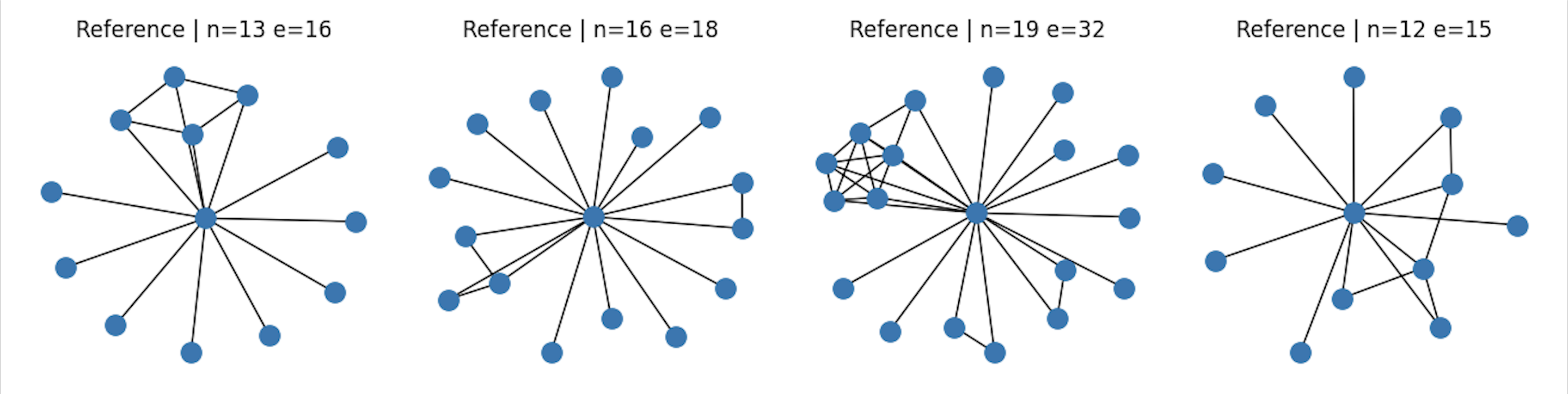}
\end{subfigure}

\begin{subfigure}{0.48\textwidth}
    \centering
    \includegraphics[width=\linewidth]{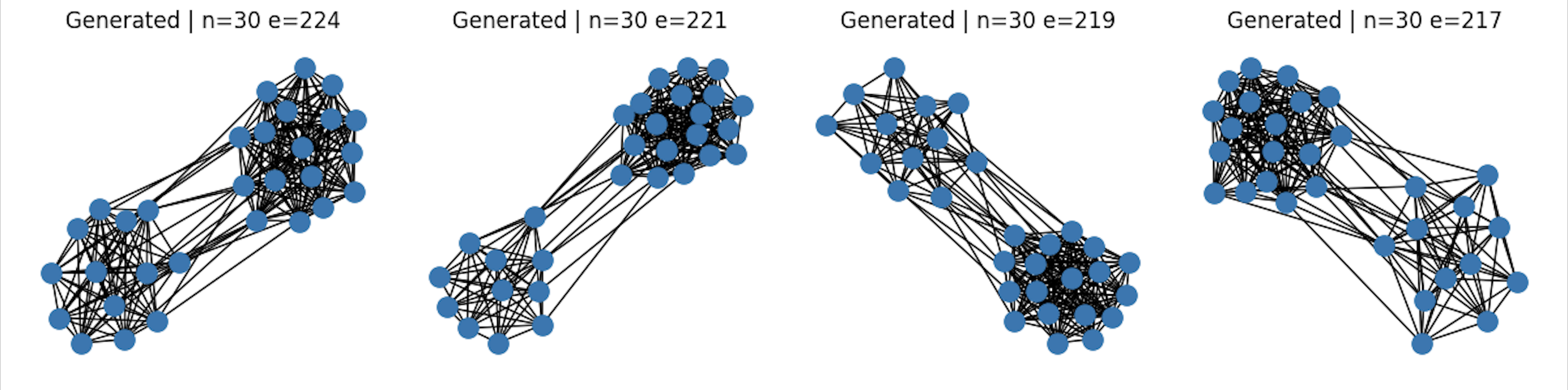}
\end{subfigure}
\hfill
\begin{subfigure}{0.48\textwidth}
    \centering
    \includegraphics[width=\linewidth]{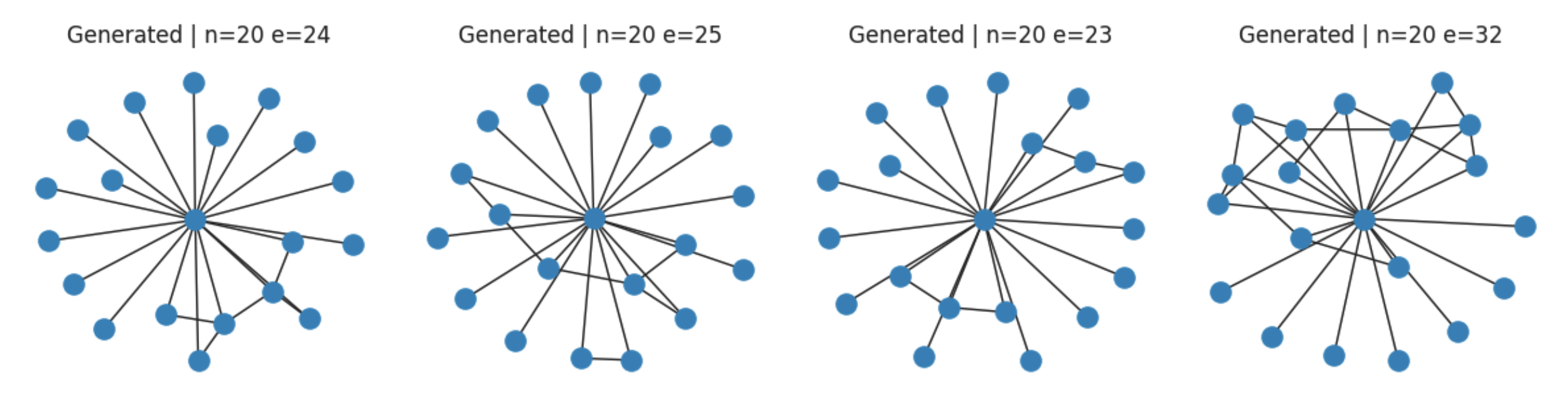}
\end{subfigure}

\begin{subfigure}{0.48\textwidth}
    \centering
    \includegraphics[width=\linewidth]{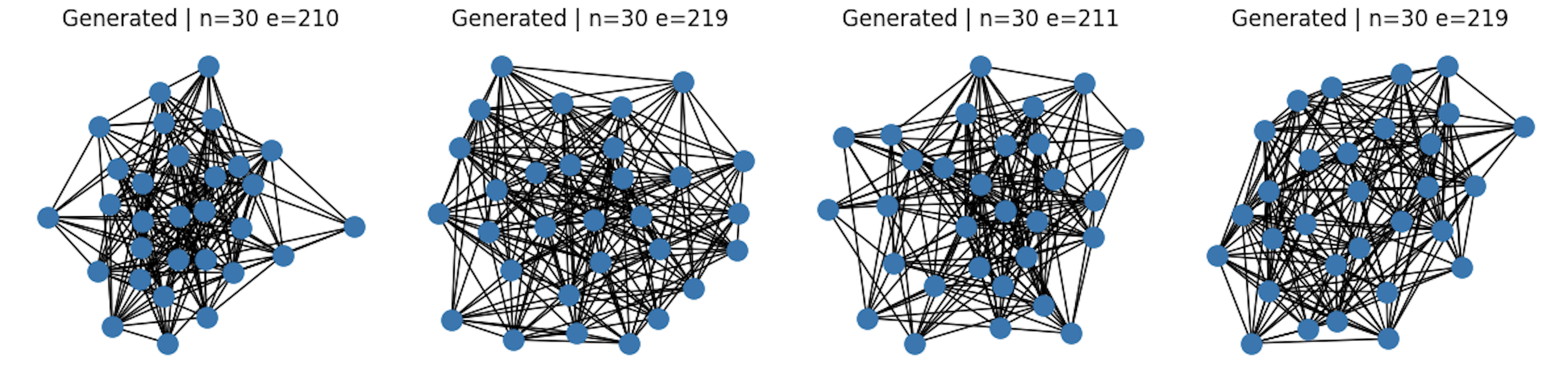}
\end{subfigure}
\hfill
\begin{subfigure}{0.48\textwidth}
    \centering
    \includegraphics[width=\linewidth]{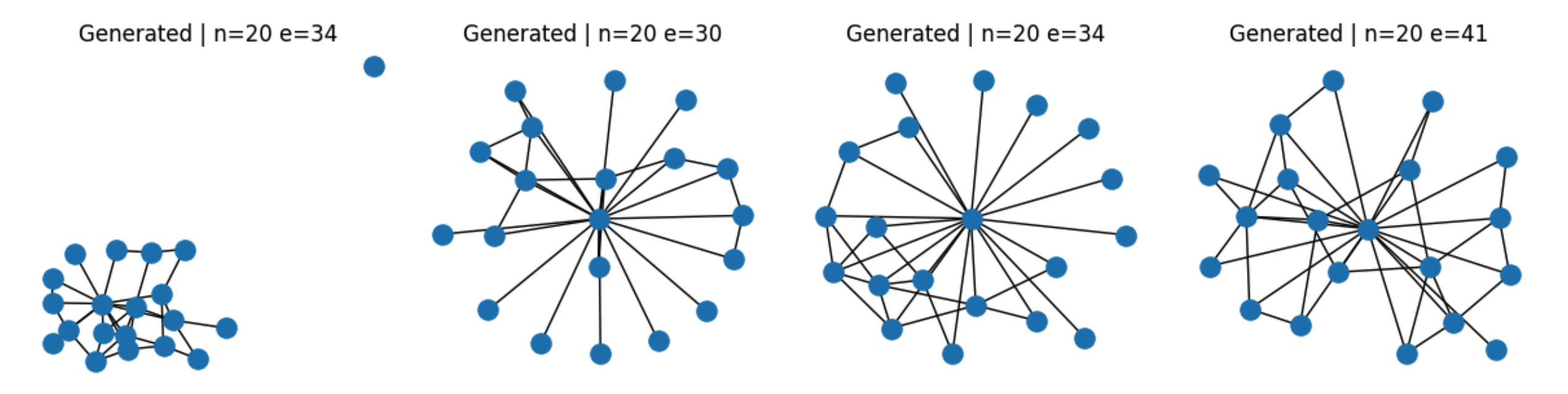}
\end{subfigure}
\vspace{0.25cm}
\caption{\textbf{Top:} Reference graphs sampled from the dataset, \textbf{Middle:} Graphs generated using with GCAT, \textbf{Bottom:} Graphs generated using GT, \textbf{Left:}  two-community SBM graphs, and \textbf{Right:} and ego-network graphs.} 
\label{fig:3x2grid}
\end{figure}

\textbf{DiGress Results.}
We evaluate on SPECTRE-SBM~\citep{martinkus2022spectre} and ENZYMES~\citep{borgwardt2005proteins}, two standard literature benchmarks. On SPECTRE-SBM we follow the protocol of DiGress~\citep{vignac2022digress}: the fixture contains 200 graphs of up to 200 nodes drawn from a 2--5-community stochastic block model. On ENZYMES we follow HiGen~\citep{karami2023higen}. Following HiGen, we report raw $\mathrm{MMD}^2$ on the degree, clustering-coefficient, orbit-count, and normalised-Laplacian-spectrum distributions. Each evaluation samples 40 graphs on both datasets.

We observe that the variants leveraging spectral and graph convolutional attention roughly achieve parity with vanilla DiGress in generation quality as measured by MMDs, including when using R-PEARL instead of the eigen-decomposition based features. This gives some evidence that it is indeed sufficient to incorporate the spectral information via GCA + R-PEARL, and expensive eigen-decomposition is not required. Full numbers and qualitative samples are in Appendix~\ref{app:digress_results}; per-metric training trajectories in Table~\ref{tab:digress_ablations_uncapped_steps}, Appendix~\ref{app:digress_more_results}.

Crucially, we see in Table~\ref{tab:inference-time-compact} that this parity comes with a noticeable speedup in inference (up to $19\%$ for the PEARL + GCA variant on SBM) from eliminating the node feature computation and eigendecomposition cost. Because wall-clock time here is dominated by whether an eigendecomposition is computed per step, the payoff scales with graph size: on the larger SBM graphs ($n_{\max}{=}200$) all R-PEARL variants are faster, whereas on the small ENZYMES graphs ($n_{\max}{=}126$) the fixed overhead of the random-feature GNN can swamp the saving (e.g. PEARL is $+14\%$). This suggests that as graph size grows, using our variant with R-PEARL and Graph Convolutional attention could potentially lead to even larger gains in inference time without sacrificing generation quality.

\begin{table}[h]
  \centering
  \small
  \begin{tabular}{llrr}
    \toprule
    Dataset & Variant & Params & Inference vs base \\
    \midrule
    \multirow{4}{*}{SBM ($n_{\max}{=}200$)}
      & DiGress (eigh) & 7.1M & --- \\
      & PEARL & 7.2M & -6\% \\
      & PEARL + GCA & 11.9M & \textbf{-19\%} \\
    \midrule
    \multirow{4}{*}{ENZYMES ($n_{\max}{=}126$)}
      & DiGress (eigh) & 7.1M & --- \\
      & PEARL & 7.2M & +14\% \\
      & PEARL + GCA & 11.9M & \textbf{-14\%} \\
    \bottomrule
  \end{tabular}
  \caption{Inference-cycle wall time relative to vanilla DiGress (negative \% = faster). Inference-cycle time is wall-clock-derived per validation cycle (sampling + scoring + I/O). Params are total model parameters.}
   \label{tab:inference-time-compact}
\end{table}

\section{Conclusions}

We studied graph denoising from a spectral perspective and used this view to clarify why attention-based architectures such as DiGress can perform well in graph denoising and diffusion. Our analysis shows that standard linear attention is fundamentally limited under a denoising objective, since it can only learn an average spectral denoising rule over the training distribution, which becomes suboptimal when graphs exhibit heterogeneous spectra. In contrast, for graph distributions with community-like spectra and sufficiently high spectral diversity, attention mechanisms that adapt to the noisy input spectrum can implement an optimal shrinkage-like estimator, thereby provably improving over estimators that are agnostic to the observed spectrum. We formalized this ideal through Spectral Attention, identified spectral diversity as the quantity governing its advantage, and proposed Graph Convolutional Attention (GCA) as a practical permutation-equivariant realization based on graph-filtered query and key projections. In the SBM regime, we showed that GCA asymptotically matches the ideal spectral mechanism, while the softmax nonlinearity can provide additional denoising by approximately projecting noisy eigenvectors onto the clean eigenspace. Empirically, the theory is reflected in practice: replacing standard attention with GCA consistently improves graph denoising across synthetic and real datasets, and the gains scale with spectral diversity. The same phenomenon transfers to diffusion models, where preserving this shrinkage-style capacity makes it possible to remove costly spectral node features while remaining competitive, as in the DiGress ablations, and to improve performance on suitable datasets by explicitly incorporating this inductive bias. Overall, our results suggest that spectral adaptation is a key ingredient for effective graph denoising and generative diffusion, and provide both a theoretical explanation and a practical architectural principle for designing stronger graph diffusion models.

\clearpage
\bibliography{dd-bib}
\bibliographystyle{abbrv} %

\clearpage

~\\
\centerline{{\fontsize{14}{14}\selectfont \textbf{Supplementary Materials for }}}

\vspace{6pt}
\centerline{\fontsize{13.5}{13.5}\selectfont \textbf{
	``Understanding Graph Denoising''}}

\vspace{6pt}

\appendix

\section{Limitations and Future Directions}
\label{app:limitations}
Expanding the scope of the experiments to more graph benchmark datasets, both real and synthetic could potentially help further validate the merit of GCA and illuminate its potential failure modes.

Furthermore, the exact role of R-PEARL (and in general other types of positional encodings that could be used instead of eigenvectors) can be explored further both theoretically and in practice. Studying the interaction between R-PEARL embeddings  and  the GCA architecture is a promising direction. On the theoretical side, parts of our analysis are limited to Stochastic Block Models. Considering families of graphs in future work could further strengthen the theoretical message ad our understanding of graph denoising and diffusion.

Finally, our empirical comparisons on SOTA graph diffusion is limited in number of training seeds, dataset variety and hyperparameter ablations. More computational resources and time is required to make these results more statistically meaningful and the results should therefore be seen as exploratory evidence,  and cannot support definitive conclusions.

\section{Experimental Details and Additional Results}\label{app: experiments}

\subsection{DiGress generative ablation results}\label{app:digress_results}

The headline numbers and generated samples for the DiGress ablation
panel referenced in Section~\ref{sec:experiments} are reproduced
below. Table~\ref{tab:digress_ablations} reports the best per-metric
value per variant within the per-panel-minimum step horizon (88k for
SPECTRE-SBM, 300k for ENZYMES); Figure~\ref{fig:gen_samples} shows
reference graphs alongside generated samples from the upstream
DiGress baseline and the R-PEARL + GCAT variant. Hyperparameters,
optimisation, and evaluation cadence shared across these runs are
documented in Appendix~\ref{app:arch_details:digress}.

\begin{table}[h]
\centering
\small
\setlength{\tabcolsep}{4pt}
\resizebox{\textwidth}{!}{%
\begin{tabular}{llccccc}
\toprule
Dataset & Variant & Spec MMD$^2$ $\downarrow$ & Clust MMD$^2$ $\downarrow$ & Deg MMD$^2$ $\downarrow$ & Orbit MMD$^2$ $\downarrow$ & SBM acc.\ $\uparrow$ \\
\midrule
 \multirow{4}{*}{SPECTRE-SBM} & DiGress (baseline) & 0.2102 & 0.1306 & \textbf{0.1803} & \textbf{0.0909} & \textbf{0.5625} \\
  & \cellcolor{gray!20}DiGress + R-PEARL & \cellcolor{gray!20}0.2102 & \cellcolor{gray!20}0.1297 & \cellcolor{gray!20}0.1813 & \cellcolor{gray!20}0.0991 & \cellcolor{gray!20}0.5312 \\
  & DiGress + R-PEARL + Spec.\ Attn. & 0.2088 & 0.1303 & 0.2001 & 0.0948 & 0.4062 \\
  & \cellcolor{gray!20}DiGress + R-PEARL + GCAT & \cellcolor{gray!20}\textbf{0.2081} & \cellcolor{gray!20}\textbf{0.1292} & \cellcolor{gray!20}0.1833 & \cellcolor{gray!20}0.0962 & \cellcolor{gray!20}0.2500 \\
\midrule
 \multirow{4}{*}{ENZYMES} & DiGress (baseline) & 0.1960 & 0.1042 & \textbf{0.1803} & 0.1056 & --- \\
  & \cellcolor{gray!20}DiGress + R-PEARL & \cellcolor{gray!20}0.1881 & \cellcolor{gray!20}\textbf{0.0967} & \cellcolor{gray!20}0.1814 & \cellcolor{gray!20}\textbf{0.0971} & \cellcolor{gray!20}--- \\
  & DiGress + R-PEARL + Spec.\ Attn. & \textbf{0.1876} & 0.0974 & 0.1852 & 0.1332 & --- \\
  & \cellcolor{gray!20}DiGress + R-PEARL + GCAT & \cellcolor{gray!20}0.1993 & \cellcolor{gray!20}0.0981 & \cellcolor{gray!20}0.1924 & \cellcolor{gray!20}0.2384 & \cellcolor{gray!20}--- \\
\bottomrule
\end{tabular}%
}
\caption{DiGress ablations across the four viable variants over training steps up to 88k (SPECTRE-SBM) / 300k (ENZYMES), following HiGen~\citep{karami2023higen} in reporting the best value per metric across all logged validation cycles for each run. Bold marks the best value per dataset $\times$ metric.}
\label{tab:digress_ablations}
\end{table}

\begin{figure}[htbp]
\centering

\begin{subfigure}{0.45\textwidth}
    \centering
    SPECTRE-SBM
    \includegraphics[width=\linewidth]{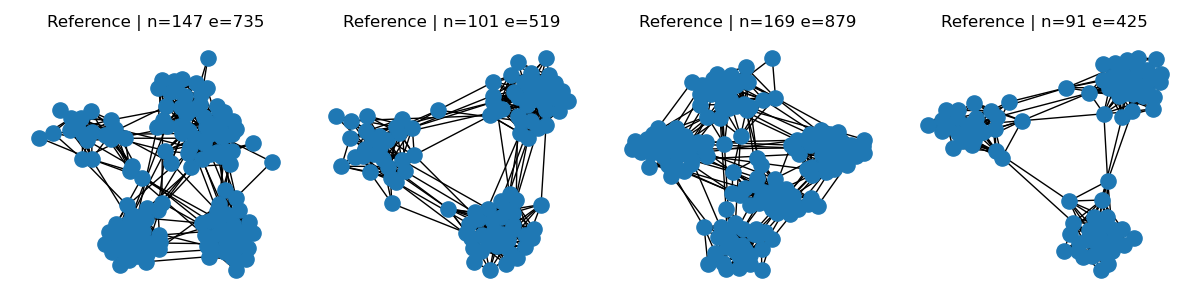}
\end{subfigure}
\hfill
\begin{subfigure}{0.45\textwidth}
    \centering
    ENZYMES
    \includegraphics[width=\linewidth]{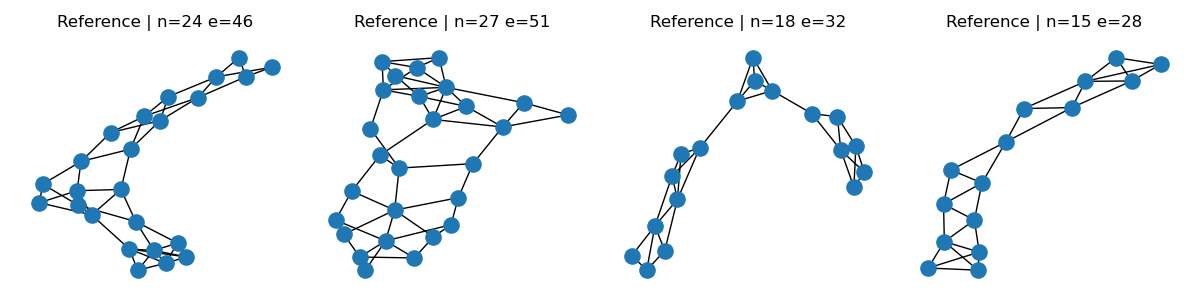}
\end{subfigure}

\begin{subfigure}{0.45\textwidth}
    \centering
    \includegraphics[width=\linewidth]{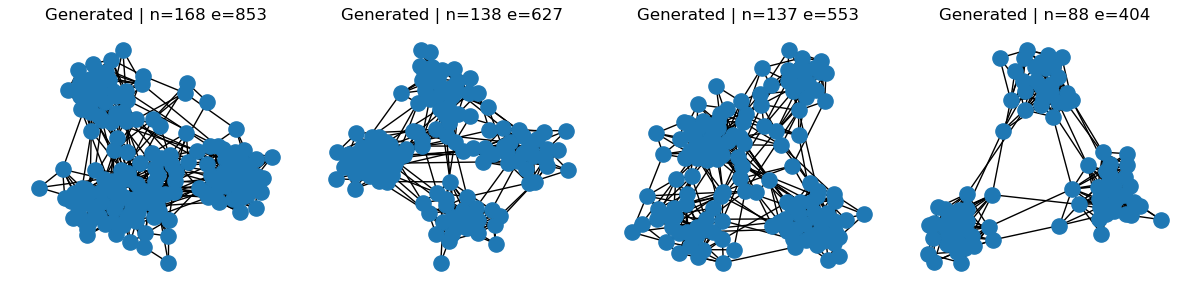}
\end{subfigure}
\hfill
\begin{subfigure}{0.45\textwidth}
    \centering
    \includegraphics[width=\linewidth]{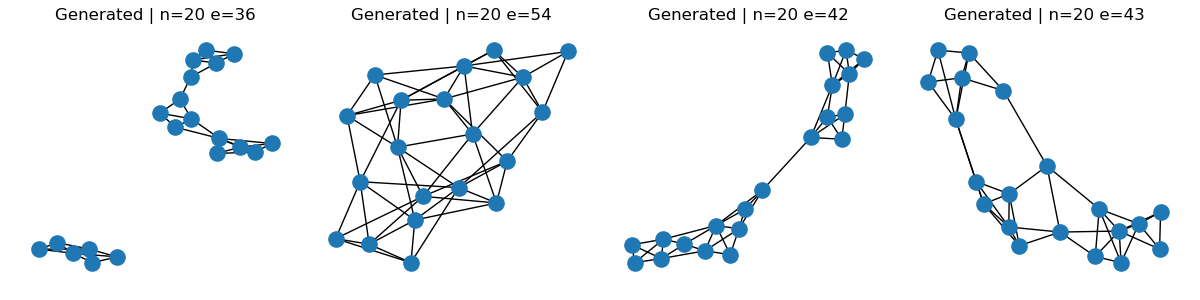}
\end{subfigure}

\begin{subfigure}{0.45\textwidth}
    \centering
    \includegraphics[width=\linewidth]{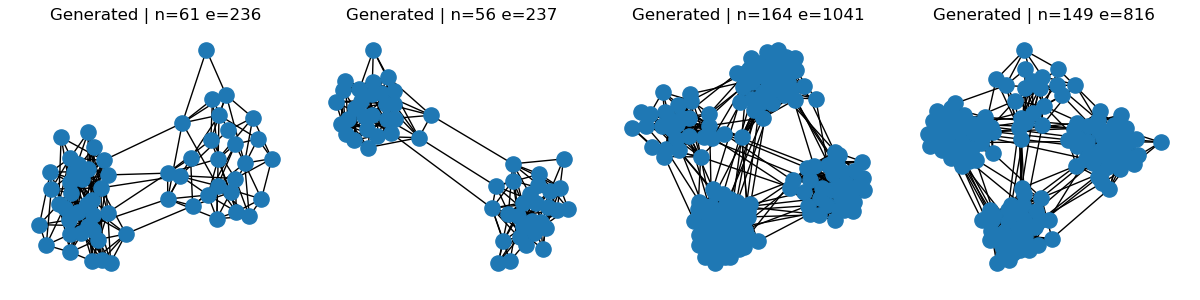}
\end{subfigure}
\hfill
\begin{subfigure}{0.45\textwidth}
    \centering
    \includegraphics[width=\linewidth]{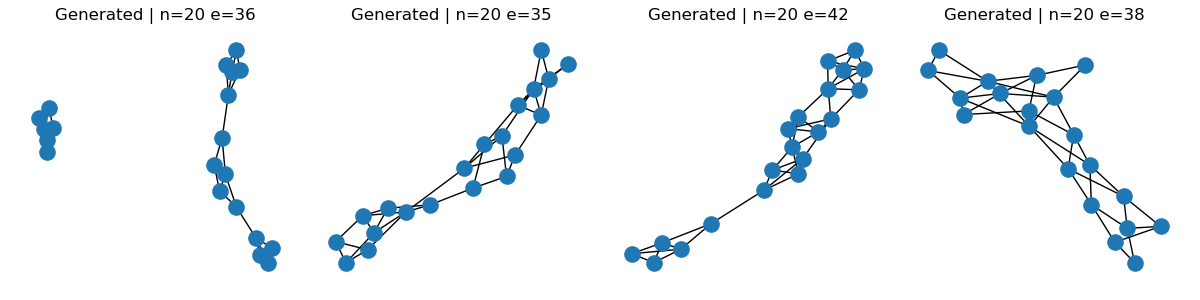}
\end{subfigure}

\caption{\textbf{Top:} Reference graphs from the dataset, \textbf{Middle:} Generated with the DiGress + R-PEARL + GCAT variant. \textbf{Bottom:} Generated with DiGress baseline.}
\label{fig:gen_samples}
\end{figure}

\subsection{Datasets}
\label{sec:datasets}

We evaluate on a mix of \textbf{real benchmark graphs} drawn from
PyTorch Geometric's \texttt{TUDataset} collection, the \textbf{SPECTRE}
stochastic block model (SBM) benchmark~\citep{martinkus2022spectre},
and \textbf{synthetic SBMs} sampled with controllable block-size
heterogeneity. All graphs are treated as undirected and unattributed: we
keep only the adjacency structure and discard any node or edge labels
supplied by the source dataset. Unless stated otherwise, we use a
$70/10/20$ train/validation/test split with a fixed seed.

\subsubsection{Real datasets}
\label{sec:datasets:real}

We use five collections of variable-size unattributed graphs.

\begin{itemize}
  \item \textbf{PROTEINS}~\citep{borgwardt2005proteins}, from
    TUDataset: $1\,113$ protein-structure graphs in which nodes are
    secondary-structure elements connected when they are neighbours in
    the amino-acid sequence or in 3D space. Graph size ranges up to
    $620$ nodes, with most graphs in the few-tens to low-hundreds
    range.
  \item \textbf{ENZYMES}~\citep{borgwardt2005proteins}, from TUDataset:
    $600$ enzyme tertiary-structure graphs spanning the six EC
    top-level classes. Graph size ranges up to $126$ nodes, typically
    $\approx 30$--$60$ nodes per graph.
  \item \textbf{IMDB-BINARY}~\citep{yanardag2015deep}, from TUDataset:
    $1\,000$ movie-collaboration ego networks in which nodes are
    actors/actresses and edges connect those that appeared together in
    a film. Graph size ranges up to $136$ nodes with a strongly skewed
    distribution toward smaller, dense ego networks.
  \item \textbf{COLLAB}~\citep{yanardag2015deep}, from TUDataset:
    $5\,000$ scientific-collaboration ego networks built from three
    physics communities. Graph size ranges up to $492$ nodes, with
    heavy-tailed size and degree distributions.
  \item \textbf{DEEZER-EGO-NETS}~\citep{rozemberczki2020characteristic}:
    $9\,629$ ego networks of Deezer users, with edges representing
    mutual-friend relationships in the ego's neighbourhood. Graph size
    ranges up to $363$ nodes; the distribution is dominated by
    ego networks of a few dozen nodes.
\end{itemize}

In all five cases we drop the original node and edge attributes and
predict the binary adjacency only. This matches the unconditional
graph-generation setting used in prior work~\citep{
you2018graphrnn,liao2019efficient,vignac2022digress}. Splits use the
PyG-default ordering with a fixed seed of $42$ and a $70/10/20$
train/val/test ratio.

\subsubsection{SPECTRE SBM benchmark}
\label{sec:datasets:spectre}

We additionally use the SBM benchmark released with the SPECTRE
evaluation suite~\citep{martinkus2022spectre} and adopted as the
canonical SBM evaluation setting by DiGress~\citep{vignac2022digress}.
The fixture contains $200$ SBM graphs generated with $2$--$5$
communities of variable size, intra-block edge probability
$p_{\mathrm{intra}}\approx 0.30$, and inter-block edge probability
$p_{\mathrm{inter}}\approx 0.005$; node counts range over
$44 \leq n \leq 187$. We use the upstream $128/32/40$
train/val/test split (\texttt{torch.randperm(200)} with seed~$0$) so
that our results are byte-comparable with published DiGress numbers.
Quantitative evaluation reports the SPECTRE-style maximum mean
discrepancy (MMD) over degree, clustering, and orbit-count
distributions, plus the SBM-likelihood Wald-test accuracy whose null
distribution uses the same $(p_{\mathrm{intra}},p_{\mathrm{inter}})$
parameters.

\subsubsection{Synthetic SBMs with controllable block-size heterogeneity}
\label{sec:datasets:synthetic-sbm}

To probe how each method handles community structure under controlled
conditions, we generate synthetic SBM datasets at two graph sizes,
$n=100$ and $n=200$ (presets \texttt{sbm\_n100}, \texttt{sbm\_n200}).
Block counts are sampled uniformly from $K\in\{2,3,4\}$, with maximum
block sizes of $50$ and $60$ respectively. Edge probabilities default
to $p_{\mathrm{intra}}=1.0$ and $p_{\mathrm{inter}}=0.0$ (i.e.\
disjoint cliques), but can be varied along ranges to expose the model
to a spectrum of community contrast.

The key controlled axis is \emph{block-size heterogeneity}, governed by
a Dirichlet concentration parameter $\alpha$ (in code:
\texttt{graph\_config.block\_size\_alpha}). For each synthetic graph we
sample a block-size partition
\begin{equation}
  (s_1, \ldots, s_K) \;\sim\; n \cdot \mathrm{Dirichlet}(\alpha, \ldots, \alpha),
  \label{eq:dirichlet-blocks}
\end{equation}
rounded to integers summing to $n$, with a minimum block size of one.
The concentration $\alpha$ smoothly interpolates between two regimes:
small $\alpha$ (e.g.\ $\alpha\to 0^{+}$) concentrates mass on a single
block and produces strongly \emph{unbalanced} communities, while large
$\alpha$ (e.g.\ $\alpha\gg 1$) flattens the simplex and recovers
near-\emph{equal-sized} blocks (the limit $\alpha\to\infty$ matches the
equal-partition baseline). This single knob lets us study how
generative models behave under increasing block-size imbalance while
holding $n$, $K$, and $(p_{\mathrm{intra}},p_{\mathrm{inter}})$ fixed.
We sweep $\alpha$ over a logarithmic grid in the experiments of
Section~\ref{sec:experiments}. Block sizes are re-sampled per graph;
intra- and inter-block edges are then drawn independently from
$\mathrm{Bernoulli}(p_{\mathrm{intra}})$ and
$\mathrm{Bernoulli}(p_{\mathrm{inter}})$ as in the standard SBM
definition.

\subsection{Architectural details}
\label{app:arch_details}

\subsubsection{Graph Transformer and Graph Convolutional Attention Transformer}\label{app: arch details GT GCAT}
\paragraph{Shared pipeline.}
We extract the top-$k$ eigenvectors $V \in \mathbb{R}^{n \times k}$ of
$A$ and lift them to the model dimension $d$:
\begin{equation}
X^{(0)} = V\, W_{\mathrm{in}},
\qquad W_{\mathrm{in}} \in \mathbb{R}^{k \times d}.
\end{equation}
Each of the first $L-1$ blocks applies a residual multi-head attention
update with $h$ heads of dimension $d_k = d_v = d/h$:
\begin{equation}
X^{(\ell+1)} = \mathrm{LN}\!\left(
X^{(\ell)} + \mathrm{MHA}^{(\ell)}\!\big(A,\,X^{(\ell)}\big)
\right),
\qquad \ell = 0,\dots,L-2,
\end{equation}
where, given queries, keys, and values $Q^{(\ell)}, K^{(\ell)}, V^{(\ell)}$, each head computes
\begin{equation}
S_i^{(\ell)} = \frac{Q_i^{(\ell)} {K_i^{(\ell)}}^{\!\top}}{\sqrt{d_k}},
\qquad
\mathrm{Attn}_i^{(\ell)}
= \mathrm{softmax}\!\big(S_i^{(\ell)}\big)\,V_i^{(\ell)},
\qquad i = 1,\dots,h,
\end{equation}
The final block produces no node features; its per-head scores
$S^{(L-1)} \in \mathbb{R}^{h \times n \times n}$ are mixed into the
edge logits by a learned linear combination,
\begin{equation}
\hat{A} = \sum_{i=1}^{h} w_i\, S_i^{(L-1)} + b,
\qquad w \in \mathbb{R}^{h}.
\end{equation}

The two models differ only in the $Q$/$K$ projections inside
each block.

\textbf{Graph Transformer}
\begin{equation}
Q^{(\ell)} = X^{(\ell)} W_Q^{(\ell)},
\qquad
K^{(\ell)} = X^{(\ell)} W_K^{(\ell)}.
\end{equation}

\textbf{Graph Convolutional Transformer}
\begin{equation}
Q^{(\ell)} = \mathrm{LN}\!\left(
\sum_{p=0}^{P-1} A^{\,p}\, X^{(\ell)}\, W_{Q,p}^{(\ell)}
\right),
\qquad
K^{(\ell)} = \mathrm{LN}\!\left(
\sum_{p=0}^{P-1} A^{\,p}\, X^{(\ell)}\, W_{K,p}^{(\ell)}
\right).
\end{equation}
The queries and keys become learnable polynomials of the noisy
adjacency, allowing two nodes to attend on the basis of their $P$-hop
neighborhoods in $A$.

\subsubsection{Generative Experiments with GT and GCAT Denoisers}

The setup is similar to the one described in~\ref{app:arch_details:digress}. The hyperparameters specific to these runs are provided in Table~\ref{tab:modattn_shared}.

\begin{table}[h]
\centering
\caption{Hyperparameters shared across all GCAT and GT runs in our generative experiments}
\label{tab:modattn_shared}
\small
\begin{tabular}{lll}
\toprule
\textbf{Block} & \textbf{Setting} & \textbf{Value} \\
\midrule
Architecture
 & model class                       & GT/GCAT \\
 & embedding source                  & top-$k$ eigenvectors \\
 & embedding dimension $k$           & $16$ \\
 & depth $L$                         & $3$ \\
 & model width $d_{\text{model}}$    & $128$ \\
 & attention heads                   & $8$ \\
 & dropout                           & $0.0$ \\
 & graph-conv filter taps            & $2$ \\
 & timestep conditioning             & none (time-unconditional) \\
\midrule
Diffusion
 & forward process                   & categorical \\
 & noise schedule                    & cosine-iDDPM \citep{nichol2021improved} \\
 & timesteps $T$                     & $500$ \\
 & limit distribution                & empirical marginal of train loader \\
 & sampler                           & ancestral\\
\midrule
Optimisation
 & optimiser                         & AdamW with AMSGrad \\
 & learning rate                     & $2 \times 10^{-4}$ \\
 & weight decay                      & $10^{-12}$ \\
 & LR scheduler                      & none \\
 & gradient clipping                 & none \\
 & EMA                               & none \\
 & loss                              & categorical CE + VLB, $\lambda_E = 5$ \\
\midrule
Evaluation
 & val-loss cadence                  & every $1{,}000$ steps \\
 & sample/MMD cadence                & once at end of training \\
 & samples per generative eval       & $128$ \\
 & clustering kernel $\sigma$        & $0.1$ \\
\bottomrule
\end{tabular}%

\end{table}

\subsubsection{R-PEARL}
\label{app:arch_details:rpearl}

The encoder samples $s = 32$ standard Gaussian features per node, propagates them through three message-passing layers over the symmetrically normalised adjacency $D^{-1/2} \tilde A D^{-1/2}$, and projects the result to a $d_{\text{PEARL}} = 16$-dimensional per-node embedding. Each message-passing layer applies a two-layer MLP with ReLU activation, hidden width $64$, and LayerNorm, adding a residual connection where input and output widths match. The resulting embedding is concatenated to per-node cycle counts at lengths $3, 4, 5$ to form the per-node features; per graph we keep the normalised graph size $n_{\mathrm{valid}} / n_{\max}$ and cycle counts at lengths $3, 4, 5, 6$. Total widths $\Delta(d_X, d_E, d_y) = (3 + d_{\text{PEARL}}, 0, 5)$, compared with $(6, 0, 11)$ for DiGress's eigendecomposition path. To make evaluation deterministic, we fix the eval-time features to a single seeded Gaussian draw of shape $(1, n_{\max}, s)$, shared across each evaluation batch. Training continues to draw fresh features per forward pass, so the model sees the full R-PEARL feature distribution during optimisation; at evaluation we therefore report one specific R-PEARL realisation rather than a Monte Carlo average over draws.

\subsubsection{DiGress and variants}
\label{app:arch_details:digress}

Our generative DiGress runs reproduce the upstream SPECTRE-SBM recipe of DiGress~\citep{vignac2022digress} and cross-validate against the publicly available SBM checkpoint shipped with GDPO~\citep{liu2024graphdiffusion}, which was retrained from the upstream recipe verbatim and provides an independent set of weights for sanity checks.

\paragraph{Architecture.} The denoiser is the upstream DiGress XEy graph transformer: a stack of $L = 8$ blocks that jointly update node features $X$, edge features $E$, and a global vector $y$, with FiLM-style cross-modulation between the three streams. Hidden widths are $d_X = 256$, $d_E = 64$, $d_y = 64$ with $8$ attention heads, point-wise feed-forward widths $(d_{\mathrm{ff}X}, d_{\mathrm{ff}E}, d_{\mathrm{ff}y}) = (256, 64, 2048)$ for SPECTRE-SBM and $(256, 64, 256)$ for ENZYMES, input/output MLP widths $(128, 64, 128)$, and a learned timestep channel concatenated to $y$. We use the $K = 1$ abstract-node convention (one node class, two edge classes) for both datasets. The SPECTRE-SBM choice $d_{\mathrm{ff}y} = 2048$ matches the actual shape of the GDPO SBM checkpoint, which was trained at this width despite the value $256$ recorded in Vignac's published config; we adopt the checkpoint's shape so the released weights remain loadable into our model. ENZYMES, where no comparable checkpoint is released, uses Vignac's $d_{\mathrm{ff}y} = 256$.

\paragraph{Structural augmentation.} For the vanilla DiGress runs we follow upstream DiGress in augmenting the inputs with hand-crafted structural features. Per node, we concatenate cycle counts at lengths $3, 4, 5$ (scaled by $1/10$ and clipped at $1$), an indicator that the node does not lie in the largest connected component, and the two lowest non-trivial eigenvectors of the noisy graph Laplacian evaluated at the node. Per graph, we add the normalised graph size $n_{\mathrm{valid}} / n_{\max}$, cycle counts at lengths $3, 4, 5, 6$ (also scaled and clipped), the number of connected components, and the five lowest non-trivial Laplacian eigenvalues. Total feature widths $\Delta(d_X, d_E, d_y) = (6, 0, 11)$. Cycle counts are computed from powers of the binary noisy adjacency; the spectral channels require an eigendecomposition of the noisy Laplacian at every diffusion step. The R-PEARL alternative described in Section~\ref{app:arch_details:rpearl} drops the spectral channels and replaces them with the R-PEARL embedding.

\paragraph{Diffusion process.} A categorical forward process with the IDDPM cosine schedule~\citep{nichol2021improved} on $T = 1000$ steps and an empirical-marginal limit distribution measured from the training loader at setup. We use the unmodified IDDPM cosine schedule (DiGress's edge-class-dependent variant for molecular datasets is not used here, since SBM and ENZYMES are binary-edge graphs). The reverse process predicts class probabilities at each step; sampling is ancestral.

\paragraph{Optimisation.} AdamW with learning rate $2 \times 10^{-4}$, weight decay $10^{-12}$, AMSGrad enabled, no learning-rate scheduler, no gradient clipping, and no EMA. The training loss is categorical cross-entropy with edge weight $\lambda_E = 5$ on top of the standard variational lower bound. Seed $666$ matches the GDPO retrain so the run lands in a basin comparable to the released checkpoint; Vignac's repository defaults to seed $0$. Batch size is $12$, and the published $128/32/40$ train/validation/test split of the 200-graph SPECTRE fixture matches the upstream byte-for-byte.

\paragraph{Schedule.} $5.5 \times 10^{5}$ optimiser steps. Upstream specifies $50{,}000$ epochs at batch size $12$ on $128$ training graphs, equivalent to $\approx 5.5 \times 10^{5}$ steps; we adopt the step-based form. Reported runs reach $\approx 88$k steps for SPECTRE-SBM and $\approx 300$k for ENZYMES under our compute budget (Section~\ref{sec:experiments}). Validation and full sampling/MMD evaluation share a step-based gate: every $4{,}400$ steps for SPECTRE-SBM (matching upstream Vignac's $100$-epoch validation $\times$ sample-every-$4$ cadence) and every $75{,}000$ steps for ENZYMES (panel-calibrated for evaluation cost). Each evaluation samples $40$ graphs at clustering kernel $\sigma = 0.1$.

\paragraph{Variants.} We sweep two architectural axes: the input-augmentation choice (DiGress's structural augmentation above or the R-PEARL embeddings of Section~\ref{app:arch_details:rpearl}) and the attention-projection choice. The first attention-side replacement substitutes $Q/K/V$ with polynomial spectral filters in the noisy Laplacian eigenvalues using $k_{\text{spec}} = 16$ eigenpairs and three polynomial taps. The second replaces $Q/K/V$ with three-tap polynomials in the adjacency itself, run in two normalisations: the symmetrically normalised adjacency $D^{-1/2} A D^{-1/2}$ and the raw adjacency $A$. Of the six combinations on this $2 \times 4$ grid (two augmentations $\times$ four projection choices), five are evaluated; the cross of DiGress's structural augmentation with the graph-convolution attention is not currently run.

\begin{table}[h]
\centering
\caption{Hyperparameters shared across all DiGress runs in our generative experiments. Datasets only differ at the points marked in Table~\ref{tab:digress_dataset_overrides}.}
\label{tab:digress_shared}
\small
\begin{tabular}{lll}
\toprule
\textbf{Block} & \textbf{Setting} & \textbf{Value} \\
\midrule
Architecture & depth $L$ & $8$ \\
 & hidden widths $(d_X, d_E, d_y)$ & $(256, 64, 64)$ \\
 & attention heads & $8$ \\
 & input/output MLP widths $(X, E, y)$ & $(128, 64, 128)$ \\
 & input/output class dims $(C_X, C_E, C_y)$ & $(1, 2, 0)$ \\
 & timestep injection & $t/T \in [0,1]$ appended to $y$ \\
\midrule
Diffusion & forward process & categorical \\
 & noise schedule & cosine \citep{nichol2021improved} \\
 & timesteps $T$ & $1{,}000$ \\
 & limit distribution & empirical marginal of train loader \\
 & sampler & ancestral \\
\midrule
Optimisation & optimiser & AdamW with AMSGrad \\
 & learning rate & $2 \times 10^{-4}$ \\
 & weight decay & $10^{-12}$ \\
 & LR scheduler & none \\
 & gradient clipping & none \\
 & EMA & none \\
 & loss & categorical CE + VLB, $\lambda_E = 5$ \\
\midrule
Schedule & total optimiser steps (target) & $5.5 \times 10^{5}$ \\
 & reached: SPECTRE-SBM / ENZYMES & $\approx 88$k / $\approx 300$k \\
 & val + sample/MMD cadence (SPECTRE-SBM) & every $4{,}400$ steps \\
 & val + sample/MMD cadence (ENZYMES) & every $75{,}000$ steps \\
 & samples per evaluation & $40$ \\
 & clustering kernel $\sigma$ & $0.1$ \\
\midrule
Data & batch size & $12$ \\
 & seed & $666$ (matches GDPO) \\
\bottomrule
\end{tabular}
\end{table}

\begin{table}[h]
\centering
\caption{Settings that differ across the attention-projection ablations and the two datasets. The input-augmentation axis (DiGress's structural augmentation versus R-PEARL) is described separately in Section~\ref{app:arch_details:rpearl} and combines with each row of the attention table.}
\label{tab:digress_dataset_overrides}
\small
\resizebox{\textwidth}{!}{%
\begin{tabular}{lll}
\toprule
\multicolumn{3}{c}{\textbf{Attention-projection ablations}} \\
\midrule
\textbf{Variant} & \textbf{$Q$/$K$/$V$ projection} & \textbf{Polynomial taps} \\
\midrule
Linear attention & linear & --- \\
Spectral attention & polynomial in $\tilde \Lambda$ ($k_{\text{spec}} = 16$ eigenpairs of $\tilde A$) & $3$ \\
Graph-conv attention (normalised) & polynomial in $D^{-1/2} A D^{-1/2}$ & $3$ \\
Graph-conv attention (raw) & polynomial in $A$ & $3$ \\
\midrule
\multicolumn{3}{c}{\textbf{Dataset-specific overrides}} \\
\midrule
\textbf{Setting} & \textbf{SPECTRE-SBM} & \textbf{ENZYMES} \\
\midrule
$d_{\mathrm{ff}y}$ & $2048$ (matches GDPO checkpoint) & $256$ (Vignac default) \\
$n_{\max}$ & $200$ & $126$ \\
Train/val/test split & $128/32/40$ on the published $200$-graph fixture & $70/10/20$ percentage split with fixed seed \\
\bottomrule
\end{tabular}%
}
\end{table}

\paragraph{Per-step wall-clock.} Table~\ref{tab:perf_step_time} reports the mean per-step optimiser wall-clock time logged by each post-fix run. The relative cost depends strongly on graph size: on the $n=200$ SPECTRE-SBM fixture the $O(n^3)$ Laplacian eigendecomposition that DiGress's structural augmentation performs at every diffusion step dominates, and replacing it with R-PEARL gives a $\sim$5\% per-step speedup; on ENZYMES, where graphs typically have $\le 60$ nodes, the eigendecomposition is cheap, the R-PEARL message-passing fixed cost is not amortised, and every R-PEARL variant is slower per step than the DiGress baseline.

\begin{table}[h]
\centering
\small
\setlength{\tabcolsep}{4pt}
\begin{tabular}{llrrrr}
\toprule
Dataset & Variant & $n_{\text{steps}}$ & Mean (s) & Median (s) & Speedup vs vignac \\
\midrule
 \multirow{4}{*}{SPECTRE-SBM} & DiGress (baseline) & 2254 & 0.5184 & 0.5127 & --- \\
  & DiGress + R-PEARL & 2372 & 0.4937 & 0.4857 & +4.8\% \\
  & DiGress + R-PEARL + Spec.\ Attn. & 2200 & 0.5345 & 0.5277 & -3.1\% \\
  & DiGress + R-PEARL + GCAT & 2390 & 0.4933 & 0.4875 & +4.8\% \\
\midrule
 \multirow{4}{*}{ENZYMES} & DiGress (baseline) & 9977 & 0.1218 & 0.1068 & --- \\
  & DiGress + R-PEARL & 7404 & 0.1647 & 0.1498 & -35.2\% \\
  & DiGress + R-PEARL + Spec.\ Attn. & 6207 & 0.1962 & 0.1813 & -61.1\% \\
  & DiGress + R-PEARL + GCAT & 8768 & 0.1391 & 0.1277 & -14.2\% \\
\bottomrule
\end{tabular}
\caption{Per-step optimiser wall-clock time for the four DiGress variants on each dataset, computed as the mean and median of \texttt{impl-perf/train/step\_time\_s} over all logged training steps in the run. ``Speedup vs vignac'' is $(t_{\text{vignac}} - t_{\text{variant}}) / t_{\text{vignac}}$; negative values mean the variant is slower than the DiGress baseline. Validation-step wall-clock is not logged separately in our runs and is therefore omitted.}
\label{tab:perf_step_time}
\end{table}

\subsection{Additional DiGress results}\label{app:digress_more_results}

Table~\ref{tab:digress_ablations} in the main body reports the
per-metric best across validation cycles up to the per-panel minimum
step (88k for SPECTRE-SBM, 300k for ENZYMES) so longer-running variants
do not gain extra ``best'' attempts. Table~\ref{tab:digress_ablations_uncapped_steps}
below removes that cap and instead annotates each best with the
optimiser step at which it occurred (as a superscript), so the
trajectory of each metric across training is visible at a glance.

\begin{table}[h]
\centering
\small
\setlength{\tabcolsep}{4pt}
\resizebox{\textwidth}{!}{%
\begin{tabular}{llccccc}
\toprule
Dataset & Variant & Spec MMD$^2$ $\downarrow$ & Clust MMD$^2$ $\downarrow$ & Deg MMD$^2$ $\downarrow$ & Orbit MMD$^2$ $\downarrow$ & SBM acc.\ $\uparrow$ \\
\midrule
 \multirow{4}{*}{SPECTRE-SBM} & DiGress (baseline) & 0.2102$^{(22k)}$ & 0.1292$^{(110k)}$ & \textbf{0.1803}$^{(88k)}$ & \textbf{0.0909}$^{(22k)}$ & \textbf{0.6250}$^{(110k)}$ \\
  & \cellcolor{gray!20}DiGress + R-PEARL & \cellcolor{gray!20}0.2102$^{(44k)}$ & \cellcolor{gray!20}0.1297$^{(66k)}$ & \cellcolor{gray!20}0.1813$^{(22k)}$ & \cellcolor{gray!20}0.0991$^{(88k)}$ & \cellcolor{gray!20}0.5312$^{(88k)}$ \\
  & DiGress + R-PEARL + Spec.\ Attn. & 0.2088$^{(88k)}$ & 0.1303$^{(44k)}$ & 0.2001$^{(44k)}$ & 0.0948$^{(88k)}$ & 0.4062$^{(44k)}$ \\
  & \cellcolor{gray!20}DiGress + R-PEARL + GCAT & \cellcolor{gray!20}\textbf{0.2081}$^{(44k)}$ & \cellcolor{gray!20}\textbf{0.1292}$^{(88k)}$ & \cellcolor{gray!20}0.1833$^{(44k)}$ & \cellcolor{gray!20}0.0962$^{(66k)}$ & \cellcolor{gray!20}0.3750$^{(110k)}$ \\
\midrule
 \multirow{4}{*}{ENZYMES} & DiGress (baseline) & 0.1960$^{(150k)}$ & \textbf{0.0961}$^{(375k)}$ & \textbf{0.1803}$^{(150k)}$ & 0.1037$^{(375k)}$ & --- \\
  & \cellcolor{gray!20}DiGress + R-PEARL & \cellcolor{gray!20}0.1881$^{(150k)}$ & \cellcolor{gray!20}0.0967$^{(75k)}$ & \cellcolor{gray!20}0.1814$^{(225k)}$ & \cellcolor{gray!20}\textbf{0.0971}$^{(300k)}$ & \cellcolor{gray!20}--- \\
  & DiGress + R-PEARL + Spec.\ Attn. & \textbf{0.1876}$^{(225k)}$ & 0.0974$^{(75k)}$ & 0.1852$^{(225k)}$ & 0.1332$^{(150k)}$ & --- \\
  & \cellcolor{gray!20}DiGress + R-PEARL + GCAT & \cellcolor{gray!20}0.1993$^{(225k)}$ & \cellcolor{gray!20}0.0981$^{(75k)}$ & \cellcolor{gray!20}0.1818$^{(375k)}$ & \cellcolor{gray!20}0.1171$^{(375k)}$ & \cellcolor{gray!20}--- \\
\bottomrule
\end{tabular}%
}
\caption{DiGress ablations, best per metric across all logged validation cycles for each post-fix run, with the optimiser step at which the best was achieved as a superscript ($^{(\text{step})}$). No panel-minimum cap is applied here (contrast Table~\ref{tab:digress_ablations}), so each variant is scored over its full training horizon. The annotation exposes when each metric peaks during training. Bold marks the best value per dataset $\times$ metric.}
\label{tab:digress_ablations_uncapped_steps}
\end{table}

On SPECTRE-SBM, MMDs and the SBM-accuracy Wald test peak at different
points along training. Most MMD bests sit in the first $\approx 22$--$88$k
optimiser steps; the SBM accuracy of three of the four variants only
peaks at $88$--$110$k. The R-PEARL + GCAT variant illustrates the
asymmetry: best spectral MMD by step $44$k but best SBM accuracy at
step $110$k. The pattern is suggestive rather than universal -- the
vanilla DiGress run reaches its best clustering MMD only at $110$k,
and pearl + spectral attention peaks SBM accuracy at $44$k. We read
the table as evidence that MMD-style sample quality and Wald-test
community structure are not perfectly co-monotone in training, and
leave a more careful longitudinal analysis to future work.

\subsection{Spectral diversity metric}
\label{app:knn}

We estimate the improvement-gap surrogate of Theorem~\ref{thm: loss improvement} non-parametrically and report it on a normalised scale that supports cross-dataset comparison. The estimand, the plug-in estimator that approximates it from a finite sample, the frame-alignment step that makes the cross-graph average meaningful, the bias floor introduced by the plug-in, and the permutation-null calibration that removes it are detailed below.

\paragraph{Population estimand.} For each graph $i$ in a dataset, let $A_i$ be its clean adjacency, $\tilde A_i = A_i + \mathcal{E}_i$ a noisy observation, and $(\hat V_{k,i}, \tilde \Lambda_{k,i})$ the noisy top-$k$ eigenpair of $\tilde A_i$. Define $B_i = \hat V_{k,i}^\top A_i \hat V_{k,i}$, the clean adjacency expressed in the noisy top-$k$ eigenbasis. Theorem~\ref{thm: loss improvement} identifies the gap between linear- and spectral-attention losses with $g = \mathbb{E}\|\mathbb{E}[B \mid \tilde\Lambda_k] - \mathbb{E}[B]\|_F^2$, the variance of the optimal spectral predictor $f^\star(\tilde\Lambda_k) = \mathbb{E}[B \mid \tilde\Lambda_k]$. Dividing by the total variance $\mathrm{tr}(\mathrm{Cov}\, B) = \mathbb{E}\|B - \mathbb{E}[B]\|_F^2$ yields the fraction of variance explained,
\begin{equation}
\mathrm{FVE} \;=\; \frac{g}{\mathrm{tr}(\mathrm{Cov}\, B)} \;\in\; [0, 1],
\end{equation}
which by the law of total variance equals the $R^2$ of $f^\star$ when used to predict $B$ from $\tilde\Lambda_k$ --- equivalently, the relative loss improvement of optimal spectral attention over optimal linear attention (Propositions~\ref{prop: minimum lin loss}--\ref{prop: minimum filter loss}).

\paragraph{Plug-in estimator.} The population quantities $f^\star$, $g$, and $\mathrm{tr}(\mathrm{Cov}\, B)$ are inaccessible; we replace each by a finite-sample plug-in. Given $N$ graphs with paired observations $\{(B_i, \tilde\Lambda_{k,i})\}_{i=1}^{N}$, we estimate the optimal predictor by kNN regression on the noisy eigenvalues,
\begin{equation}
\hat B_i \;=\; \frac{1}{m} \sum_{j \in \mathcal{N}_m(i)} B_j,
\end{equation}
where $\mathcal{N}_m(i)$ is the set of $m$ graphs other than $i$ whose eigenvalue vectors are closest to $\tilde\Lambda_{k,i}$ in Euclidean distance. The plug-in estimates of $g$ and the total variance use the same sample,
\begin{equation}
\hat g \;=\; \frac{1}{N} \sum_{i=1}^{N} \|\hat B_i - \bar B\|_F^2,
\qquad
\widehat{\mathrm{tr}(\mathrm{Cov}\, B)} \;=\; \frac{1}{N} \sum_{i=1}^{N} \|B_i - \bar B\|_F^2,
\end{equation}
with $\bar B = \tfrac{1}{N} \sum_i B_i$. The sample-level statistic is their ratio
\begin{equation}
\widehat{\mathrm{FVE}} \;=\; \frac{\hat g}{\widehat{\mathrm{tr}(\mathrm{Cov}\, B)}}.
\end{equation}
The estimator is non-parametric: no analytic form for $f^\star$ is assumed.

\paragraph{Frame alignment.} The matrix $B_i$ is not invariant to the gauge of $\hat V_{k,i}$: replacing $\hat V_{k,i}$ by $\hat V_{k,i} Q_i$ for any $Q_i \in O(k)$ maps $B_i \to Q_i^\top B_i Q_i$. Averaging matrices across different per-graph frames adds incommensurable quantities and inflates $\hat g$, because $Q_i$ is partly predictable from $\tilde\Lambda_{k,i}$ (both derive from the same noisy decomposition). We therefore align every $\hat V_{k,i}$ via orthogonal Procrustes against a single dataset-wide reference frame, the extrinsic Grassmannian mean of the clean top-$k$ eigenbases (top-$k$ left singular vectors of the stacked clean eigenvector blocks). With one common frame, the residual rotation cannot co-vary with $\tilde\Lambda_k$, and the convention-artefact channel that would otherwise feed a spurious signal into the estimator closes.

\paragraph{Bias floor.} Under the null $B \perp \tilde\Lambda_k$, the neighbours $\mathcal{N}_m(i)$ are effectively a random subsample of the $B_j$, so $\mathrm{Var}(\hat B_i) \approx \mathrm{Var}(B)/m$ and $\widehat{\mathrm{FVE}}$ acquires a floor
\begin{equation}
\mathbb{E}\bigl[\widehat{\mathrm{FVE}} \;\big|\; B \perp \tilde\Lambda_k\bigr] \;\approx\; \frac{1}{m} - \frac{1}{N-1}.
\end{equation}
For $(m, N) = (10, 200)$ this is approximately $0.095$. Comparing $\widehat{\mathrm{FVE}}$ directly to zero is therefore uninformative.

\paragraph{Permutation-null calibration.} We estimate the bias floor on each dataset by shuffling the conditioning vectors with a uniformly random permutation $\pi$ and recomputing the same plug-in ratio,
\begin{equation}
\widehat{\mathrm{FVE}}_{\text{null}} \;=\; \widehat{\mathrm{FVE}}\!\bigl(\{(B_i,\, \tilde\Lambda_{k,\pi(i)})\}\bigr).
\end{equation}
The shuffle preserves both marginals and breaks only the joint, so its expectation matches the kNN bias to leading order. The reported quantity is the calibrated margin $\widehat{\mathrm{FVE}}_{\text{real}} - \widehat{\mathrm{FVE}}_{\text{null}}$, computed per seed before averaging across seeds (paired differencing reduces variance because real and null share the same kNN bias). We require this margin to exceed $0.10$ and $\widehat{\mathrm{FVE}}_{\text{null}}$ to remain below $0.30$; the latter rules out small-$N$ regimes in which the estimator's variance grows faster than the signal.

\paragraph{Hyperparameters.} We use $m = 10$ neighbours and sweep $k \in \{4, 8, 16, 32\}$. The bias-floor formula at $N \approx 200$ gives $\widehat{\mathrm{FVE}}_{\text{null}} \approx 0.10$, which is empirically observed across the eight datasets. Noise levels are $\varepsilon \in \{0.01, 0.05, 0.1, 0.15, 0.2\}$, matching the denoising-experiment grid. Five seeds are used for the paired margin and its standard error. The headline figure reports the Gaussian-noise condition at $\varepsilon = 0.1$; the DiGress-noise variant produces the same qualitative ordering.

\paragraph{Cross-checks.} Two estimator variants run alongside the kNN estimator. A binning estimator partitions graphs into quantile buckets of the spectral gap $\lambda_1 - \lambda_2$ and computes the classical between-group variance; it agrees with kNN when both are restricted to the 1-D feature, providing an independent estimate of the leading-eigenvalue contribution to the gap. A frame-invariant estimator replaces $B_i$ with its orthogonal-conjugation invariants $\bigl(\mathrm{tr}(B_i),\, \|B_i\|_F^2,\, \text{sorted eigvals}(B_i)\bigr)$ and runs through the same kNN pipeline; the Procrustes-aligned and per-graph paths must agree on this target by construction. Both cross-checks track the headline ordering across the eight datasets in the study.

\paragraph{Datasets and results.} We compute the calibrated FVE margin on a sweep of four synthetic stochastic block models with controlled diversity and four real graph benchmarks (\texttt{spectre\_sbm}, \texttt{enzymes}, \texttt{proteins}, \texttt{collab}). The synthetic-diversity result shown in Figure~\ref{fig:diversity vs improvement} is monotone across all $k$ and replicates under both noise models; on the real benchmarks the calibrated margin exceeds the calibration threshold on seven of eight datasets, with the failure being the designed null control (the $\mathrm{diversity}=0$ synthetic SBM).

\subsection{Compute resources}\label{app:compute}

\paragraph{Hardware overview.} The denoising experiments of
Section~\ref{sec:experiments} (Figure~\ref{fig:denoising}) and the
denoising-metric runs feeding Figure~\ref{fig:diversity vs improvement}
were trained on a single workstation with two NVIDIA RTX A5000 GPUs
($24$\,GB GDDR6 VRAM each), $64$ physical / $128$ logical CPU cores,
and $\approx 270$\,GB host RAM; each run is a single-GPU job. The
DiGress generative experiments
(Tables~\ref{tab:digress_ablations} and \ref{tab:perf_step_time}) were
trained on Modal (\href{https://modal.com}{modal.com}) using its
\texttt{fast} GPU tier, which provisions a single NVIDIA A100 ($40$\,GB
HBM2) per container with $\approx 32$\,GB host RAM. The
spectral-diversity FVE estimation (Appendix~\ref{app:knn}) is CPU-only
and runs on a local workstation; the full sweep
(8 datasets $\times$ 5 seeds $\times$ 2 frame modes $\times$ 2 noise
types $\times$ 5 noise levels $\times$ 4 estimators $\times$ 4 values
of $k$ $\times$ real/null) completes in $\approx 60$\,minutes.

\paragraph{Wall-clock and total compute (reported runs).}
The denoising panel (W\&B projects \texttt{graph-denoising-final-2}
and \texttt{graph-denoising-final-3}, $127$ runs total) ran in median
$\approx 1.6$\,minutes and at most $\approx 32$\,minutes per run,
for a cumulative budget of $\approx 6$ single-GPU-hours on the RTX
A5000s. The eight reported DiGress configurations (four variants
$\times$ two datasets) each accumulated $\approx 17.8$\,h of Modal
A100 wall-clock at the data freeze, with preempt-resume across the
$24$\,h container timeout, reaching $\approx 110$k--$120$k optimiser
steps on SPECTRE-SBM and $\approx 310$k--$500$k on ENZYMES (the
ENZYMES graphs are an order of magnitude smaller in node count, so
the per-step cost is correspondingly smaller; see
Table~\ref{tab:perf_step_time}). Cumulatively the eight DiGress runs
consume $\approx 142$ single-A100-hours.

\paragraph{Discarded compute.} Initial exploration and iteration until
the GDPO baseline was matched amounted to an additional
$\approx 370$ single-A100-hours, with no further details.

\section{Proofs}\label{app: proofs}

\subsection{Proof of Theorem~\ref{thm: low variance sbm shrinkage}}\label{app: low variance sbm shrinkage proof}
\begin{proof}
    We first show that for a given graph $A$, $B := \tilde{U}^\top A \tilde{U}$ tends to a deterministic matrix as $n \to \infty$. We begin by writing $B$ as:

\begin{equation}
    B = \tilde{U}^\top A \tilde{U} = \tilde{U}^\top U \Lambda U^\top \tilde{U} 
\end{equation}

Consider that $A$ is low rank therefore we can write:
\begin{equation}
    A = \sum_{i = 1}^r \lambda_i u_i u_i^\top
\end{equation}

Thus we have:
\begin{equation}
    B_{ij} = \sum_{l = 1}^r \lambda_l (\tilde{u}_i^\top u_l)(\tilde{u}_j^\top u_l)
\end{equation}

According to Lemma \ref{lem: leading-order overlap} we have:
\begin{equation}
    \lim_{n \to \infty} B_{ii} = \lim_{n \to \infty} \sum_{l = 1}^r \lambda_l |\tilde{u}_i^\top u_l|^2 = \lambda_i \lim_{n \to \infty} |\tilde{u}_i^\top u_i|^2  + \sum_{l = 1, l \neq i}^r \lambda_l \lim_{n \to \infty} |\tilde{u}_i^\top u_l|^2
\end{equation}
The sum over $l \neq i$ tends to zero almost surely, thus we have:
\begin{equation}
    \lim_{n \to \infty} B_{ii} = \lambda_i \alpha(\lambda_i) =  \lambda_i (1 - \frac{\sigma^2}{\lambda_i^2}) = \lambda_i - \frac{\sigma^2}{\lambda_i}
\end{equation}

for the off-diagonal elements we have:
\begin{equation}
    \lim_{n \to \infty} B_{ij} = \lim_{n \to \infty} \sum_{l = 1}^r \lambda_l (\tilde{u}_i^\top u_l)(\tilde{u}_j^\top u_l) = 0,
\end{equation}
where since $i \neq j$, one of the inner products will almost surely tend to zero regardless of the value of $l$.

So, in summary we have for a given fixed graph $A$, $B$ tends to a deterministic matrix as $n \to \infty$:
\begin{equation}
    \lim_{n \to \infty} B = \alpha(\Lambda)
\end{equation}

Furthermore, from Lemma \ref{lem: leading-order overlap} we have the deterministic relationship
between the noisy and clean eigenvalues:
\begin{equation}
    \rho(\lambda_i) = \tilde{\lambda}_i = \lambda_i + \frac{\sigma^2}{\lambda_i}
\end{equation}
Thus we have:
\begin{equation}
    \lambda_i = \frac{\tilde{\lambda}_i + \sqrt{\tilde{\lambda}_i^2 - 4 \sigma^2}}{2}
\end{equation}
Finally we can write:
\begin{equation}
    \alpha(\lambda_i)  = \alpha\left(\frac{\tilde{\lambda}_i + \sqrt{\tilde{\lambda}_i^2 - 4 \sigma^2}}{2}\right) = 1 - \frac{\sigma^2}{\left(\frac{\tilde{\lambda}_i + \sqrt{\tilde{\lambda}_i^2 - 4 \sigma^2}}{2}\right)^2} = \sqrt{\tilde{\lambda}_i^2 - 4 \sigma^2}
\end{equation}

Finally we define the function $\eta_{out}(\cdot)$ as:
\begin{equation}
    \eta_{out}(\tilde{\lambda}) = \begin{cases}
        \sqrt{\tilde{\lambda}^2 - 4 \sigma^2} & \text{if } \tilde{\lambda} > 2\sigma \\
        0 & \text{if } \tilde{\lambda} \leq 2\sigma
    \end{cases}
\end{equation}

Thus in the limit almost surely we have:
\begin{equation}
    \lim_{n \to \infty} B = \eta_{out}(\tilde{\Lambda})
\end{equation}

\end{proof}
The following is a restatement of Theorem 2.1 and Theorem 2.2 
from~\cite{benaychgeorges2010eigenvalueseigenvectorsfinitelow}.

\begin{lemma}[Leading-order overlap for spiked Wigner]
\label{lem: leading-order overlap}
Let $W_n$ be an $n \times n$ real symmetric (or complex Hermitian) 
Wigner matrix with i.i.d.\ entries (up to symmetry) satisfying 
$\mathbb{E}[W_{ij}] = 0$ and $\mathbb{E}[W_{ij}^2] = \sigma^2/n$ for $i \neq j$, 
with finite fourth moment. The empirical spectral distribution of $W_n$ 
converges almost surely to the semicircle law on $[-2\sigma, 2\sigma]$, with 
edge $b = 2\sigma$ and Stieltjes transform
\begin{equation}
g(z) := \int \frac{1}{z-x}\,d\mu_{sc}(x) 
= \frac{z - \sqrt{z^2 - 4\sigma^2}}{2\sigma^2}, 
\qquad z \in \mathbb{R},\ |z| > 2\sigma,
\end{equation}
where the branch of the square root is chosen so that $g(z) \to 0$ as 
$z \to \infty$.

Let $A_n$ be a deterministic (or random, independent of $W_n$) 
symmetric matrix of fixed rank $r$, with spectral decomposition
\begin{equation}
A_n = \sum_{i=1}^{r} \lambda_i\, u_i u_i^\top = U \Lambda U^\top,
\end{equation}
where $\lambda_1 > \lambda_2 > \cdots > \lambda_r > 0$ are fixed (distinct) and 
$u_1, \ldots, u_r \in \mathbb{R}^n$ are orthonormal. 
(Negative spikes are handled symmetrically.)

Define the perturbed matrix $\tilde{A}_n := W_n + A_n$, and 
let $\tilde\lambda_1 \geq \tilde\lambda_2 \geq \cdots \geq \tilde\lambda_n$ be its 
eigenvalues with corresponding orthonormal eigenvectors 
$\tilde{u}_1, \ldots, \tilde{u}_n$.

Let $r_+ \in \{0, 1, \ldots, r\}$ denote the number of supercritical spikes, 
i.e., the number of indices $i$ with $\lambda_i > \sigma$. Then the following 
hold almost surely as $n \to \infty$:

\begin{enumerate}
\item[(i)] \emph{Outlier eigenvalue locations.} For each $i \in \{1, \ldots, r_+\}$,
\begin{equation}
\tilde\lambda_i \xrightarrow{a.s.} \rho(\lambda_i) 
:= \lambda_i + \frac{\sigma^2}{\lambda_i} = g^{-1}(1/\lambda_i),
\end{equation}
and $\rho(\lambda_i) > 2\sigma$. For $i > r_+$, $\tilde\lambda_i \to 2\sigma$.

\item[(ii)] \emph{Eigenvector overlap --- diagonal.} For each 
$i \in \{1, \ldots, r_+\}$,
\begin{equation}
|\langle \tilde{u}_i, u_i\rangle|^2 
\xrightarrow{a.s.} \alpha(\lambda_i) 
:= \frac{-1}{\lambda_i^2\, g'(\rho(\lambda_i))} 
= 1 - \frac{\sigma^2}{\lambda_i^2}.
\end{equation}

\item[(iii)] \emph{Eigenvector overlap --- cross terms.} For 
$i, j \in \{1, \ldots, r_+\}$ with $i \neq j$,
\begin{equation}
\langle \tilde{u}_i, u_j\rangle \xrightarrow{a.s.} 0.
\end{equation}
\end{enumerate}
\end{lemma}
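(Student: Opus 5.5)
The plan is to follow the resolvent (``master equation'') approach of \cite{benaychgeorges2010eigenvalueseigenvectorsfinitelow}, which reduces the $n$-dimensional eigenproblem for $\tilde A_n = W_n + A_n$ to an $r \times r$ problem. Write $R(z) := (zI - W_n)^{-1}$ for $z$ outside the spectrum of $W_n$.

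\textbf{Step 1: reduction to an $r\times r$ equation.} If $z$ is an eigenvalue of $\tilde A_n$ but not of $W_n$, the eigenvector equation $(zI - W_n)\tilde u = U\Lambda U^\top \tilde u$ gives $\tilde u = R(z) U \Lambda U^\top \tilde u$. Setting $w := \Lambda U^\top \tilde u \neq 0$ yields
\begin{equation}
\bigl(I_r - \Lambda\, U^\top R(z) U\bigr) w = 0 .
\end{equation}
Hence such $z$ are exactly the zeros of $\det\bigl(I_r - \Lambda U^\top R(z) U\bigr)$.

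\textbf{Step 2: two random-matrix inputs.} First, by Bai--Yin (which uses the finite fourth moment), $\|W_n\| \to 2\sigma$ almost surely. So for any $\delta>0$, eventually no eigenvalue of $W_n$ lies outside $[-2\sigma-\delta,\,2\sigma+\delta]$, and $R(z)$ is analytic there. Second, an isotropic law: $U^\top R(z) U \to g(z) I_r$ almost surely, uniformly on compact subsets of $\mathbb{C}\setminus[-2\sigma-\delta,2\sigma+\delta]$. For fixed $z$ this follows from the semicircle law together with independence of $U$ and $W_n$: a conjugation/concentration argument, or, in the Gaussian case, orthogonal invariance. Uniformity then follows from equicontinuity of these analytic functions (Vitali/Montel).

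\textbf{Step 3: eigenvalue locations, part (i).} The limiting determinant is $\prod_i \bigl(1 - \lambda_i g(z)\bigr)$. Since $g$ is decreasing from $1/\sigma$ to $0$ on $(2\sigma,\infty)$, the equation $g(z) = 1/\lambda_i$ has a root there iff $\lambda_i > \sigma$. Inverting $g$ gives the root $\rho(\lambda_i) = \lambda_i + \sigma^2/\lambda_i > 2\sigma$. Hurwitz's theorem (Rouché) transfers these simple zeros to the finite-$n$ determinant. This gives exactly $r_+$ outliers converging to $\rho(\lambda_i)$. All remaining eigenvalues are then trapped near the bulk, and eigenvalue interlacing (rank-$r$ perturbation) pins $\tilde\lambda_i$, $i>r_+$, to $2\sigma$.

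\textbf{Step 4: eigenvectors, parts (ii) and (iii).} Normalization gives
\begin{equation}
1 = \tilde u^\top \tilde u = w^\top U^\top R(z)^2 U w = -\,w^\top U^\top R'(z) U w .
\end{equation}
By the uniform convergence of Step 2, the derivative also converges, so $U^\top R'(z) U \to g'(z) I_r$ and therefore $\|w\|^2 \to -1/g'(\rho(\lambda_i))$. Because the spikes are distinct, the kernel of $I - \Lambda\, g(\rho(\lambda_i)) I$ is spanned by $e_i$, so $w/\|w\|$ converges to $\pm e_i$ up to sign. Since $U^\top \tilde u = \Lambda^{-1} w$, we get $\langle \tilde u_i, u_j\rangle \to 0$ for $j \ne i$, which is (iii). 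For (ii), $|\langle \tilde u_i, u_i\rangle|^2 \to \|w\|^2/\lambda_i^2 = -1/\bigl(\lambda_i^2 g'(\rho(\lambda_i))\bigr)$. Finally, a direct computation with $\sqrt{\rho^2 - 4\sigma^2} = \lambda_i - \sigma^2/\lambda_i$ gives $g'(\rho(\lambda_i)) = -1/(\lambda_i^2 - \sigma^2)$, hence $\alpha(\lambda_i) = 1 - \sigma^2/\lambda_i^2$.

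\textbf{Main obstacle.} The hardest step is Step 2: proving the isotropic convergence $U^\top R(z) U \to g(z) I$ and its derivative, almost surely and uniformly in $z$. For general (non-invariant) Wigner entries this needs concentration of quadratic forms of the resolvent. I would first prove it for fixed $z$ via a variance bound plus Borel--Cantelli, then upgrade to uniform convergence by analyticity. The no-outliers-for-$W_n$ input (Bai--Yin) is the other essential ingredient.
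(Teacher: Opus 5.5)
Your architecture is the same one used in the result the paper relies on. The paper gives no argument of its own, only a pointer to Theorems 2.1--2.2 of \cite{benaychgeorges2010eigenvalueseigenvectorsfinitelow}. Their proof is exactly your reduction to $\det(I_r-\Lambda U^\top R(z)U)=0$, convergence of $U^\top R(z)U$, a Rouch\'e/Hurwitz argument for the outliers, and the normalization $1=-w^\top U^\top R'(z)Uw$ for the overlaps. Steps 1, 3 and 4 are sound as written, including $g'(\rho(\lambda_i))=-1/(\lambda_i^2-\sigma^2)$ and the kernel argument giving $w/\|w\|\to\pm e_i$. One small correction: ``$\tilde\lambda_i\to 2\sigma$ for $i>r_+$'' holds only for \emph{fixed} $i$, which is what your interlacing argument actually gives.

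The gap is Step 2, and it sits exactly where the lemma as stated goes beyond its source. You claim $U^\top R(z)U\to g(z)I_r$ ``follows from the semicircle law together with independence of $U$ and $W_n$''. That is not valid in the stated generality. For deterministic $U$ independence is vacuous, and the semicircle law controls only $\frac1n\mathrm{tr}\,R(z)$, not quadratic forms $u^\top R(z)u$ along fixed directions.

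Benaych-Georges and Nadakuditi obtain this step only by assuming that either the noise or the perturbation is orthogonally invariant, e.g.\ GOE noise, or Haar-distributed $u_i$ independent of $W_n$. Your Gaussian remark is the right instance of this. Then, conditionally on $W_n$, $U^\top R(z)U$ has mean $\frac1n\mathrm{tr}\,R(z)\,I_r$ with exponential concentration, and Borel--Cantelli closes the argument.

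For a deterministic $A_n$ and a non-invariant Wigner matrix with only four moments, you need a genuine isotropic law, and the fix you sketch does not deliver it. Take $u=e_1$ and let $R^{(1)}$ be the resolvent of $W_n$ with its first row and column removed. The fluctuation of $u^\top R(z)u$ then contains $\sum_k (W_{1k}^2-\sigma^2/n)R^{(1)}_{kk}$. This is an average of $n$ conditionally independent terms whose rescaled versions have merely finite variance. So the variance is of order $1/n$, the Chebyshev tails are of order $1/n$ and not summable, and you only get convergence in probability.

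You can close the gap in one of two ways. First, restrict the hypotheses to the invariant setting, where your proof is complete. Second, supply an almost-sure isotropic law for Wigner resolvents away from the bulk, e.g.\ Bai--Pan-type results on eigenvector distributions of Wigner matrices, or the Knowles--Yin isotropic law under stronger moment assumptions. The second route requires truncation together with higher-moment or complete-convergence estimates, not just a second-moment bound.
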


\subsection{Proof of Lemma~\ref{lem:1}} \label{app:lem1}
We start by setting
\[
\sum_{l=0}^{K-1}\tilde{\Lambda}^l H_Q^{(l)}=\mathbf{I}.
\]
This choice is feasible, for instance by taking $H_Q^{(0)}=\mathbf{I}$ and
$H_Q^{(l)}=\mathbf{0}$ for all $l\ge 1$. Next, choose each $H_K^{(l)}$ to be a
constant diagonal matrix, i.e., $H_K^{(l)}=h_k^{(l)}\mathbf{I}$. Under these
choices, Eq.~\eqref{eq:productGF} reduces to the spectral filter
\[
    f(U)
    =
    U
    \left(
    \sum_{l=0}^{L-1} h_k^{(l)} \Lambda^l
    \right)^\top
    U^\top .
\]
Since $\Lambda$ is diagonal, the matrix
\[
    \sum_{l=0}^{L-1} h_k^{(l)} \Lambda^l
\]
acts independently on each eigenvalue. Equivalently, it implements the
pointwise polynomial filter
\[
    w(\lambda)
    =
    \sum_{l=0}^{L-1} h_k^{(l)} \lambda^l .
\]

It remains to show that such a polynomial filter can approximate
$\eta_{\mathrm{out}}$. Although $\eta_{\mathrm{out}}$ is not analytic at the
thresholds $\lambda=\pm 2\sigma$, it is continuous on any compact interval
$[a,b]\subset\mathbb{R}$. Therefore, by the Weierstrass approximation theorem
\cite{rudin1976principles}, for every $\varepsilon>0$ there exists a real
polynomial $P$ such that
\[
    \sup_{\lambda\in[a,b]}
    \left|
    P(\lambda)-\eta_{\mathrm{out}}(\lambda)
    \right|
    <\varepsilon .
\]
Writing
\[
    P(\lambda)=\sum_{l=0}^{L-1} h_k^{(l)}\lambda^l
\]
for sufficiently large $L$, we obtain a choice of coefficients
$\{h_k^{(l)}\}_{l=0}^{L-1}$ such that the corresponding graph filter
approximates $\eta_{\mathrm{out}}$ uniformly on $[a,b]$. This completes the
proof.

\subsection{Proof of Proposition~\ref{prop: minimum lin loss}}\label{app: prop minimum lin loss proof}

We begin by expanding the loss using the fact that $\|M\|_F^2 = \mathrm{tr}(M^\top M)$:
\begin{equation}
    \ell(W) = \mathbb{E}\,\mathrm{tr}\!\Big[(\tilde{U} W \tilde{U}^\top)^2 - 2A\,\tilde{U} W \tilde{U}^\top + A^2\Big]
\end{equation}
Further, using $\tilde{U}^\top \tilde{U} = I$ and the cyclic property of trace:
\begin{itemize}
    \item $\mathrm{tr}[(\tilde{U} W \tilde{U}^\top)^2] = \mathrm{tr}[\tilde{U} W^2 \tilde{U}^\top] = \mathrm{tr}[W^2]$
    \item $\mathrm{tr}[A\,\tilde{U} W \tilde{U}^\top] = \mathrm{tr}[W\,\tilde{U}^\top A \tilde{U}]$
\end{itemize}
So the loss reduces to:
\begin{equation}
    \ell(W) = \mathbb{E} \left[ \mathrm{tr}(W^2) - 2\,\mathrm{tr}(W\,B) + \mathrm{tr}(A^2) \right], \quad \text{where } B = \tilde{U}^\top A \tilde{U}
\end{equation}

which we can write as:
\begin{equation}
    \ell(W) = \mathbb{E} \left[ \mathrm{tr}(W^2) - 2\,\mathrm{tr}(W\,B) + \mathrm{tr}(B^2) + \mathrm{tr}(A^2 - B^2) \right]
\end{equation}

\begin{equation}
    \ell(W) = \mathbb{E} \left[ \left\| W - B \right\|_F^2 + \left\| A\right\|_F^2 - \left\| B \right\|_F^2 \right]
\end{equation}

The minimization reduces to:
\begin{equation}
    W^* = \argmin_{W} \,\,\mathbb{E} \left[ \left\| W - B \right\|_F^2 \right] = \mathbb{E}_{A,\,\mathcal{E}} [B]
\end{equation}

Define $c := \mathbb{E} \left[ \left\| A \right\|_F^2 - \left\| B \right\|_F^2 \right]$. Then the minimum achievable loss is:
\begin{equation}
    \ell_{\text{LA}} := \ell(W^*) = \mathbb{E} \left[ \left\| \mathbb{E}[B] - B \right\|_F^2  \right] + c = \mathrm{tr}(\mathrm{Cov}(\mathrm{vec}(B))) + c
\end{equation}
\subsection{Proof of Proposition~\ref{prop: minimum filter loss}}\label{app: prop minimum filter loss proof}
\begin{equation}
    \ell(f) = \mathbb{E} \left[ \left\| \tilde{U} g(\tilde{\Lambda}) \tilde{U}^\top - A \right\|_F^2 \right]
\end{equation}

Similarly to the linear case, we can write the loss as:
\begin{equation}
    \ell(f) = \mathbb{E} \left[ \left\| g(\tilde{\Lambda}) - B \right\|_F^2 + \left\| A\right\|_F^2 - \left\| B \right\|_F^2 \right]
\end{equation}

The minimization reduces to:
\begin{equation}
    g^*(\cdot) = \argmin_{g} \,\,\mathbb{E} \left[ \left\| g(\tilde{\Lambda}) - B \right\|_F^2 \right]
\end{equation}

Apply the tower property (law of iterated expectations):
\begin{equation}
    \mathbb{E}\!\left[\|g(\tilde{\Lambda}) - B\|_F^2\right] = \mathbb{E}_{\tilde{\Lambda}}\!\left[\mathbb{E}\!\left[\|g(\tilde{\Lambda}) - B\|_F^2 \;\middle|\; \tilde{\Lambda}\right]\right]
\end{equation}
Since $g(\tilde{\Lambda})$ is deterministic given $\tilde{\Lambda}$, the inner conditional expectation is a pointwise squared-error problem for each realization of $\tilde{\Lambda}$. The minimizer of $\mathbb{E}[\|c - B\|_F^2 \mid \tilde{\Lambda}]$ over a constant $c$ is the conditional mean:
\begin{equation}
    g^*(\tilde{\Lambda}) = \mathbb{E}\!\left[B \;\middle|\; \tilde{\Lambda}\right] = \mathbb{E}\!\left[\tilde{U}^\top A\, \tilde{U} \;\middle|\; \tilde{\Lambda}\right]
\end{equation}

In contrast to the minimizer in the linear attention class which was the unconditional mean $W^*= \mathbb{E}[B]$,

 with spectral attention the minimizer is $g^*(\tilde{\Lambda}) = \mathbb{E}[B \mid \tilde{\Lambda}]$, i.e., the conditional mean given the noisy eigenvalues. Then the minimum achievable loss becomes:
\begin{equation}
    \ell^\star_{\text{SA}} := \ell(g^\star) = \mathbb{E}_{A,\,\mathcal{E}}\!\left[\|B - \mathbb{E}[B \mid \tilde{\Lambda}]\|_F^2\right] + c
\end{equation}

\begin{equation}
    \Rightarrow \ell^\star_{\text{SA}} = \mathbb{E}_{\tilde{\Lambda}} \left[ \underbrace{\mathbb{E}_{B | \tilde{\Lambda}}\!\left[\|B - \mathbb{E}[B \mid \tilde{\Lambda}]\|_F^2\right]}_{\mathrm{tr}\left(\mathrm{Cov}(\mathrm{vec}(B) \mid \tilde{\Lambda})\right)} \right] + c
\end{equation}

\subsection{Proof of Theorem~\ref{thm: loss improvement}}\label{app: proof loss improvement}

By the law of total variance:
\begin{equation}
    \mathrm{Cov}(\mathrm{vec}(B)) = \mathbb{E}_{\tilde{\Lambda}} \left[ \mathrm{Cov}(\mathrm{vec}(B) \mid \tilde{\Lambda}) \right] + \mathrm{Cov}(\mathbb{E}[\mathrm{vec}(B) \mid \tilde{\Lambda}])
\end{equation}

Taking the trace of both sides, we have:
\begin{equation}
    \ell^\star_{\text{LA}} = \ell^\star_{\text{SA}} + \mathrm{tr}(\mathrm{Cov}(\mathbb{E}[\mathrm{vec}(B) \mid \tilde{\Lambda}]))
\end{equation}

Thus, the improvement is:

\begin{equation}
   \ell^\star_{\text{LA}} - \ell^\star_{\text{SA}} = \mathrm{tr}(\mathrm{Cov}(\mathbb{E}[\mathrm{vec}(B) \mid \tilde{\Lambda}]))
\end{equation}
which can also be written as:
\begin{equation}
    \ell^\star_{\text{LA}} - \ell^\star_{\text{SA}} = \mathbb{E}\left[\left\|\mathbb{E}[B \mid \tilde{\Lambda}] - \mathbb{E}[B]\right\|_F^2\right]
\end{equation}

\subsection{Proof of Lemma~\ref{lem: projection helps}}\label{app: proof of projection helps}
\begin{proof}

    We work in the equal block case: $m_\ell = m$ for all $\ell$, $n = km$, $\lambda_{\max} = m$.
    Write $\tilde{U}_k = U_k R + U_\perp S$ where $U_\perp \in \mathbb{R}^{n \times (n-k)}$ collects the eigenvectors
    orthogonal to the principal eigenspace, $R \in \mathbb{R}^{k \times k}$, $S \in \mathbb{R}^{(n-k) \times k}$,
    and $R^\top R + S^\top S = I_k$ (orthonormality of $\tilde{U}_k$).
    The matrix $S$ measures the eigenvector leakage out of the true principal eigenspace.

    We first characterize $\ell_{\text{basis}}$ and $\ell_{\text{proj}}$ in terms of $S$.
    \paragraph{Characterization of $\ell_{\text{basis}}$.}
    Recall that
    \begin{equation}
        \ell_{\text{basis}} = \mathbb{E}_{\mathcal{E}} \left\| \hat{A} - A \right\|_F^2.
    \end{equation}
    In the equal block case $\Lambda_k = mI_k$, so
    \begin{equation}\label{eq:Ahat_minus_A_equal}
        \hat{A} - A = m(\tilde{U}_k\tilde{U}_k^\top - U_k U_k^\top) = m(\tilde{P} - P_U)
    \end{equation}
    where $\tilde{P} = \tilde{U}_k\tilde{U}_k^\top$ is the projector onto the noisy principal eigenspace.
    Substituting $\tilde{U}_k = U_k R + U_\perp S$ and using $U_k^\top U_k = I_k$, $U_k^\top U_\perp = 0$, $U_\perp^\top U_\perp = I_{n-k}$, we expand
    \begin{equation}
        \tilde{U}_k \tilde{U}_k^\top
        = U_k R R^\top U_k^\top + U_k R S^\top U_\perp^\top + U_\perp S R^\top U_k^\top + U_\perp S S^\top U_\perp^\top,
    \end{equation}
    and therefore
    \begin{equation}\label{eq:PtildeP_decomp}
        \tilde{P} - P_U
        = U_k (R R^\top - I_k) U_k^\top + U_k R S^\top U_\perp^\top + U_\perp S R^\top U_k^\top + U_\perp S S^\top U_\perp^\top.
    \end{equation}
    The four terms in~\eqref{eq:PtildeP_decomp} are mutually Frobenius-orthogonal, since their row and column spaces lie in either $\mathrm{range}(U_k)$ or $\mathrm{range}(U_\perp)$, which are orthogonal. Hence
    \begin{equation}\label{eq:PtildeP_norm_split}
        \|\tilde{P} - P_U\|_F^2
        = \|R R^\top - I_k\|_F^2 + 2\|R S^\top\|_F^2 + \|S S^\top\|_F^2.
    \end{equation}
    Using $R^\top R = I_k - S^\top S$ and the cyclic property of the trace,
    \begin{equation}\label{eq:RRt_minus_I_norm}
        \|R R^\top - I_k\|_F^2 = \mathrm{tr}((R^\top R)^2) - 2\,\mathrm{tr}(R^\top R) + k = \|S^\top S\|_F^2,
    \end{equation}
    \begin{equation}
        \|R S^\top\|_F^2 = \mathrm{tr}(R^\top R\, S^\top S) = \mathrm{tr}((I_k - S^\top S)\, S^\top S) = \|S\|_F^2 - \|S^\top S\|_F^2,
    \end{equation}
    Substituting into~\eqref{eq:PtildeP_norm_split} gives
    \begin{equation}
        \|\tilde{P} - P_U\|_F^2 = 2\|S\|_F^2.
    \end{equation}
    Taking expectations,
    \begin{equation}\label{eq:lbasis_equal}
        \ell_{\text{basis}} = m^2\, \mathbb{E}\left[\|\tilde{P} - P_U\|_F^2\right] = 2 m^2\, \mathbb{E}\left[\|S\|_F^2\right]
    \end{equation}

    \paragraph{Characterization of $\ell_{\text{proj}}$.}
    The projected estimator is $P_U \hat{A} P_U^\top$ with $P_U = U_k U_k^\top$. Since $\hat{A} = m\, \tilde{U}_k \tilde{U}_k^\top$ and $U_k^\top \tilde{U}_k = R$,
    \begin{equation}
        P_U \hat{A} P_U^\top
        = m\, U_k (U_k^\top \tilde{U}_k)(\tilde{U}_k^\top U_k) U_k^\top
        = m\, U_k R R^\top U_k^\top.
    \end{equation}
    The clean matrix is $A = m\, U_k U_k^\top = U_k (m I_k) U_k^\top$, so
    \begin{equation}
        P_U \hat{A} P_U^\top - A = m\, U_k (R R^\top - I_k) U_k^\top.
    \end{equation}
    Since $U_k$ has orthonormal columns,
    \begin{equation}
        \| P_U \hat{A} P_U^\top - A \|_F^2 = m^2\, \|R R^\top - I_k\|_F^2 = m^2\, \|S^\top S\|_F^2,
    \end{equation}
    where the last equality is~\eqref{eq:RRt_minus_I_norm}. Taking expectations,
    \begin{equation}\label{eq:lproj_equal_lemma}
        \ell_{\text{proj}} = m^2\, \mathbb{E}\left[\|S^\top S\|_F^2\right].
    \end{equation}

    \paragraph{The Improvement.}
    Combining~\eqref{eq:lbasis_equal} and~\eqref{eq:lproj_equal_lemma},
    \begin{equation}
        \Delta \ell_{\text{proj}} = \ell_{\text{basis}} - \ell_{\text{proj}} = m^2\, \mathbb{E}\!\left[2\|S\|_F^2 - \|S^\top S\|_F^2\right].
    \end{equation}
    Since $R^\top R + S^\top S = I_k$ with $R^\top R \succeq 0$, we have $S^\top S \preceq I_k$, so all eigenvalues of $S^\top S$ lie in $[0,1]$. Letting $s_1,\ldots,s_k \in [0,1]$ denote these eigenvalues,
    \begin{equation}
        2\|S\|_F^2 - \|S^\top S\|_F^2 = \sum_{i=1}^k s_i(2 - s_i) \geq \sum_{i=1}^k s_i = \|S\|_F^2,
    \end{equation}
    and thus we have
    \begin{equation}
        \Delta \ell_{\text{proj}} \geq m^2\, \mathbb{E}\left[\|S\|_F^2\right] > 0.
    \end{equation}
    Therefore, projecting onto the true principal eigenspace strictly decreases the denoising loss which concludes the proof.
\end{proof}

\subsection{Proof of Theorem~\ref{thm: softmax reduce error}}\label{app: proof of softmax reduce error}
\begin{proof}

    Let $E(\alpha) = \mathrm{sm}(\alpha\hat{A}) - P_U$ and $\lambda_{\max} = \max_i m_i = \|\hat{A}\|_2$.
    Then we have:
\begin{align}
    f(\hat{A};\alpha) - A &= (P_U + E)\,\hat{A}\,(P_U + E)^\top - A \notag\\
    &= \underbrace{(P_U\hat{A}P_U^\top - A)}_{\delta_0} \;+\; \underbrace{E\hat{A}P_U^\top + P_U\hat{A}E^\top + E\hat{A}E^\top}_{\delta_1}
\end{align}
where we used $P_U A P_U^\top = U_k U_k^\top (U_k \Lambda_k U_k^\top) U_k U_k^\top = U_k \Lambda_k U_k^\top = A$,
so that $P_U\hat{A}P_U^\top - A = P_U(\hat{A} - A)P_U^\top = \delta_0$.
Expanding the squared Frobenius norm:
\begin{equation}
    \|f(\hat{A};\alpha) - A\|_F^2 = \|\delta_0\|_F^2 + 2\langle \delta_0,\, \delta_1\rangle_F + \|\delta_1\|_F^2
\end{equation}
Taking expectations and noting $\mathbb{E}[\|\delta_0\|_F^2] = \ell_{\mathrm{proj}}$ gives~\eqref{eq:gap_exact}.
\begin{equation}\label{eq:gap_exact}
    \ell(\alpha) = \ell_{\mathrm{proj}} + 2\,\mathbb{E}\!\left[\langle \delta_0,\, \delta_1 \rangle_F\right] + \mathbb{E}\!\left[\|\delta_1\|_F^2\right]
\end{equation}
Applying Cauchy--Schwarz to the cross term:
\begin{equation}
    |\mathbb{E}[\langle \delta_0, \delta_1\rangle_F]| \leq \mathbb{E}[\|\delta_0\|_F\|\delta_1\|_F] \leq \sqrt{\mathbb{E}[\|\delta_0\|_F^2]}\;\sqrt{\mathbb{E}[\|\delta_1\|_F^2]} = \sqrt{\ell_{\mathrm{proj}}}\;\sqrt{\mathbb{E}[\|\delta_1\|_F^2]}
\end{equation}
Thus we have:
\begin{equation}
    \ell(\alpha) \leq \ell_{\mathrm{proj}} + 2\sqrt{\ell_{\mathrm{proj}}}\;\sqrt{\mathbb{E}[\|\delta_1\|_F^2]} + \mathbb{E}\!\left[\|\delta_1\|_F^2\right]
\end{equation}
It remains to bound $\mathbb{E}[\|\delta_1\|_F^2]$ and show that:
\begin{equation}\label{eq:gap_bound}
    2\sqrt{\ell_{\mathrm{proj}}}\;\sqrt{\mathbb{E}[\|\delta_1\|_F^2]} + \mathbb{E}\!\left[\|\delta_1\|_F^2\right] \;<\; \ell_{\mathrm{basis}} - \ell_{\mathrm{proj}}
\end{equation}

\noindent \textbf{Bounding $\mathbb{E}[\|\delta_1\|_F^2]$.}

Since $\hat{A} = \tilde{U}_k \Lambda_k \tilde{U}_k^\top$ and $P_U = U_k U_k^\top$, denoting $\Omega = \tilde{U}_k^\top U_k \in \mathbb{R}^{k \times k}$ (which satisfies $\|\Omega\|_2 \leq 1$), every term in $\delta_1$ factors through $E\tilde{U}_k$:
\begin{align}
    E\hat{A}P_U^\top &= (E\tilde{U}_k)\,\Lambda_k\,\Omega\, U_k^\top, \label{eq:EAP}\\
    P_U\hat{A}E^\top &= U_k\,\Omega^\top\Lambda_k\,(E\tilde{U}_k)^\top, \label{eq:PAE}\\
    E\hat{A}E^\top &= (E\tilde{U}_k)\,\Lambda_k\,(E\tilde{U}_k)^\top. \label{eq:EAE}
\end{align}
Therefore, by sub-multiplicativity:
\begin{equation}\label{eq:delta1_tight}
    \|\delta_1\|_F \leq \lambda_{\max}\!\left(2\|E\tilde{U}_k\|_F + \|E\tilde{U}_k\|_F^2\right)
\end{equation}
Now we turn to bounding $\|E\tilde{U}_k\|_F$.

\noindent \textbf{Bounding $\|E\tilde{U}_k\|_F$.} Write $\tilde{U}_k = U_k + \Delta U_k$. Then we have:
\begin{equation}
    \|E\tilde{U}_k\|_F = \|E(U_k + \Delta U_k)\|_F = \|E U_k\|_F + \|E \Delta U_k\|_F
\end{equation}

The Following Lemma allows us to bound $\|E \Delta U_k\|_F$:
\begin{lemma}[Bound on $EU_k$]\label{lem:EVk}
    Define $\eta = \|\hat{A} - A\|_\infty := \max_{i,j}|\hat{A}_{ij} - A_{ij}|$ and $\gamma = 1 - 2\eta$.
    Assume $\gamma > 0$ (i.e., the noise is small enough that in-block and cross-block entries of $\hat{A}$ are separated). Then
    \begin{equation}\label{eq:EVk_bound}
        \|E U_k\|_F^2 \;\leq\; e^{-2\alpha\gamma} \sum_{\ell=1}^{k}\left(\frac{(n - m_\ell)^2}{m_\ell^2} + m_\ell \!\sum_{\ell' \neq \ell} \frac{1}{m_{\ell'}}\right).
    \end{equation}
    In the equal block case ($m_\ell = m$, $n = km$), this simplifies to
    \begin{equation}\label{eq:EVk_equal}
        \|E U_k\|_F \;\leq\; k\sqrt{k-1}\; e^{-\alpha\gamma}.
    \end{equation}
\end{lemma}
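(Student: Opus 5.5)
The plan is to compute $EU_k$ entrywise, using the explicit form of the clean eigenvectors. For the block-diagonal all-ones $A$, the top-$k$ eigenvectors can be taken as $u_\ell = \mathbf{1}_{\mathcal{B}_\ell}/\sqrt{m_\ell}$, where $\mathcal{B}_\ell$ is the index set of block $\ell$. Hence $P_U=\sum_\ell \frac{1}{m_\ell}\mathbf{1}_{\mathcal{B}_\ell}\mathbf{1}_{\mathcal{B}_\ell}^\top$, and
\[
(P_U u_{\ell'})_i = \frac{\mathbb{1}[i\in\mathcal{B}_{\ell'}]}{\sqrt{m_{\ell'}}}.
\]
Write $S=\mathrm{softmax}(\alpha\hat A)$ and, for a row $i\in\mathcal{B}_\ell$, let $q_{i,\ell'}=\sum_{j\in\mathcal{B}_{\ell'}}S_{ij}$ be the softmax mass that row $i$ places on block $\ell'$. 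Then $(Su_{\ell'})_i=q_{i,\ell'}/\sqrt{m_{\ell'}}$. Setting $q_i=\sum_{\ell'\neq\ell}q_{i,\ell'}$, the row-$i$ entries of $EU_k$ are
\[
(EU_k)_{i\ell}=-\frac{q_i}{\sqrt{m_\ell}},\qquad (EU_k)_{i\ell'}=\frac{q_{i,\ell'}}{\sqrt{m_{\ell'}}}\quad(\ell'\neq\ell),
\]
using $q_{i,\ell}=1-q_i$. So everything reduces to bounding the leaked softmax masses.

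For the leakage bound, the definition of $\eta$ gives $\hat A_{ij}\ge 1-\eta$ for $j$ in the same block as $i$ and $\hat A_{ij}\le\eta$ otherwise. Each off-block exponential is therefore at most $e^{-\alpha\gamma}$ times any in-block exponential, and the partition function satisfies $Z_i\ge m_\ell e^{\alpha(1-\eta)}$. This yields
\[
q_{i,\ell'}\le \frac{m_{\ell'}}{m_\ell}e^{-\alpha\gamma},\qquad q_i\le\frac{n-m_\ell}{m_\ell}e^{-\alpha\gamma}.
\]
Summing the squared entries over the $m_\ell$ rows of block $\ell$, the diagonal column contributes $m_\ell\cdot\frac{(n-m_\ell)^2}{m_\ell^2}\cdot\frac{1}{m_\ell}e^{-2\alpha\gamma}$, which is the first term of \eqref{eq:EVk_bound}. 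The off-diagonal columns contribute $\sum_{\ell'\ne\ell} m_\ell\, q_{i,\ell'}^2/m_{\ell'}$. Summing over $\ell$ then gives the bound.

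The main obstacle is matching the exact form $m_\ell\sum_{\ell'\neq\ell}1/m_{\ell'}$ of the second term. That form corresponds to the cruder per-block estimate $q_{i,\ell'}\le e^{-\alpha\gamma}$, whereas my estimate gives $m_{\ell'}/m_\ell$ in its place; the two agree when blocks are balanced. I would therefore state the estimate in the form $q_{i,\ell'}\le\min\{1,\tfrac{m_{\ell'}}{m_\ell}e^{-\alpha\gamma}\}$ and check carefully which of the two expressions is the correct general statement. Whichever of the two is right, the equal-block case is unaffected, and that is the case used downstream in Theorem~\ref{thm: softmax reduce error} and Lemma~\ref{lem: projection helps}. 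There, with $m_\ell=m$ and $n=km$, each block contributes $(k-1)^2+(k-1)=k(k-1)$. Summing over $k$ blocks gives $\|EU_k\|_F^2\le k^2(k-1)e^{-2\alpha\gamma}$, and taking square roots gives \eqref{eq:EVk_equal}.
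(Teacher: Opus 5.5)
Your argument is the paper's argument. Both bound the entries of $EU_k$ by the softmax mass a row leaks onto other blocks, using $Z_i \ge m_\ell e^{\alpha(1-\eta)}$ and $e^{\alpha\hat A_{ij}}\le e^{\alpha\eta}$ off-block. The only difference is bookkeeping. You group the entries of $EU_k$ by row block, whereas the paper groups them by column, so its $\ell$-th summand in \eqref{eq:EVk_bound} is $\|Eu_\ell\|^2$.

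The obstacle you flag is not real, and your derivation already proves \eqref{eq:EVk_bound} exactly as stated, for arbitrary block sizes. In the paper's column-$\ell$ summand, the term $m_\ell\sum_{\ell'\neq\ell}1/m_{\ell'}$ comes from the rows $i\in B_{\ell'}$ of the other blocks. Each such row places mass at most $\frac{m_\ell}{m_{\ell'}}e^{-\alpha\gamma}$ on block $\ell$. This is your own estimate $q_{i,\ell'}\le\frac{m_{\ell'}}{m_\ell}e^{-\alpha\gamma}$ with the roles of $\ell,\ell'$ swapped, not the cruder (and generally false) $q_{i,\ell'}\le e^{-\alpha\gamma}$. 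Hence each of those $m_{\ell'}$ rows contributes at most $\frac{m_\ell}{m_{\ell'}^2}e^{-2\alpha\gamma}$. After summing over $\ell$, both bookkeepings give the same sum over ordered pairs of distinct blocks:
\[
\sum_{\ell=1}^{k}\sum_{\ell'\neq\ell}\frac{m_{\ell'}}{m_\ell}\;=\;\sum_{\ell=1}^{k} m_\ell\sum_{\ell'\neq\ell}\frac{1}{m_{\ell'}},
\]
which follows by renaming $\ell\leftrightarrow\ell'$. Your diagonal-column term $\frac{(n-m_\ell)^2}{m_\ell^2}$ already matches the paper's. So you can drop the $\min\{1,\cdot\}$ hedge and the ``whichever is right'' caveat. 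The general bound holds as stated, and your equal-block specialization to $k\sqrt{k-1}\,e^{-\alpha\gamma}$ is correct.
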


Using Lemma~\ref{lem:EVk} we can write in the equal block case:
\begin{equation}\label{eq:EVtilde_bound}
    \|E\tilde{U}_k\|_F \leq k\sqrt{k-1}\; e^{-\alpha\gamma} + \|E\Delta U_k\|_F
\end{equation}

Now we will write the LHS and RHS of the inequality~\eqref{eq:gap_bound} up to leading order in $\|S\|_F$.

\vspace{0.3cm}
\noindent \textbf{RHS of~\eqref{eq:gap_bound} to leading order.} From Lemma~\ref{lem: projection helps} we recall,
\begin{equation}
    \ell_{\text{basis}} = 2m^2 \mathbb{E}\left[\|S\|_F^2\right], \qquad
    \ell_{\text{proj}} = m^2\, \mathbb{E}\left[\|S^\top S\|_F^2\right].
\end{equation}
Since the eigenvalues of $S^\top S$ lie in $[0,1]$, $\|S^\top S\|_F^2 \leq \|S\|_F^4$. Hence, to leading order in $\|S\|_F$,
\begin{equation}\label{eq:RHS_leading}
    \ell_{\text{basis}} - \ell_{\text{proj}} = 2m^2\, \mathbb{E}\left[\|S\|_F^2\right] + O\!\left(\mathbb{E}\|S\|_F^4\right).
\end{equation}

\vspace{0.3cm}
\noindent \textbf{LHS of~\eqref{eq:gap_bound} to leading order.} We first bound $\|E\tilde{U}_k\|_F$. Decompose $E = E_1 + E_2$ with
\begin{equation}
    E_1 = \mathrm{sm}(\alpha A) - P_U, \qquad E_2 = \mathrm{sm}(\alpha\hat{A}) - \mathrm{sm}(\alpha A).
\end{equation}
$E_1$ is deterministic (independent of noise) and captures the finite-$\alpha$ bias of the softmax; a direct calculation shows $\|E_1\|_F = O(\sqrt{n}\, e^{-\alpha})$, decaying exponentially in $\alpha$. The noise term $E_2$ is controlled by the following Lemma (See Section~\ref{app: proof of E2_bound} for proof).

\begin{lemma}[Linearized softmax bound]\label{lem:E2_bound}
    In the equal-block case, to leading order in $\|S\|_F$,
    \begin{equation}\label{eq:E2_bound}
        \|E_2\|_F \;\leq\; \sqrt{2}\,\alpha\,\|S\|_F + O(\|S\|_F^2).
    \end{equation}
\end{lemma}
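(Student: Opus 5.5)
We linearize the row-wise softmax around the clean argument $\alpha A$. Write $\hat A = A + D$ with $D := \hat A - A = m(\tilde P - P_U)$ in the equal-block case, using~\eqref{eq:Ahat_minus_A_equal}. For a single row $z\in\mathbb{R}^n$ the softmax has Jacobian $J(z) = \mathrm{diag}(p) - pp^\top$, where $p = \mathrm{sm}(z)$. Hence each row of $E_2$ satisfies
\[
\mathrm{sm}(\alpha \hat A)_{i,:} - \mathrm{sm}(\alpha A)_{i,:} = \alpha\, J(\alpha A_{i,:})\, D_{i,:}^\top + O(\alpha^2\|D_{i,:}\|^2).
\]
It therefore suffices to bound $\|J(\alpha A_{i,:})\|_2$ and $\|D\|_F$, and then sum over the rows.

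\textbf{Step 1: the clean Jacobian.} For node $i$ in block $\ell$, the row $\alpha A_{i,:}$ equals $\alpha$ on the $m$ in-block entries and $0$ elsewhere. So $p$ is (up to $e^{-\alpha}$ corrections) uniform, equal to $1/m$ on block $\ell$. The Jacobian $J = \mathrm{diag}(p) - pp^\top$ is positive semidefinite with eigenvalues bounded by $\max_j p_j \le 1$; in fact they are bounded by $1/m$, plus exponentially small terms. I will only use the crude bound $\|J\|_2\le 1$ so that the constant matches the statement. This gives
\[
\|E_2\|_F^2 \le \alpha^2 \sum_i \|D_{i,:}\|^2 + \text{h.o.} = \alpha^2\|D\|_F^2 + \text{h.o.}
\]

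\textbf{Step 2: relating $D$ to $S$.} From Lemma~\ref{lem: projection helps} we have $\|\tilde P - P_U\|_F^2 = 2\|S\|_F^2$, so $\|D\|_F = \sqrt2\, m\|S\|_F$. This leaves an extra factor $m$ compared with the claim. To remove it I will use the sharper Jacobian bound $\|J\|_2 \le 1/m + O(e^{-\alpha})$. The reason is that on block $\ell$ the restricted Jacobian is $\tfrac1m(I - \tfrac1m\mathbf 1\mathbf 1^\top)$, whose norm is $1/m$. The $1/m$ then cancels the $m$, yielding $\|E_2\|_F \le \sqrt2\,\alpha\|S\|_F$ at first order. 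The exponentially small off-block mass is absorbed into the $e^{-\alpha}$ terms already accounted for through $E_1$ in Lemma~\ref{lem:EVk}, or treated as a constant multiple that vanishes at large $\alpha$.

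\textbf{Step 3: the remainder.} The second-order Taylor remainder is bounded row-wise by $\tfrac12 \sup\|\nabla^2 \mathrm{sm}\|\,\alpha^2\|D_{i,:}\|^2$. The softmax Hessian is uniformly bounded, so the remainder summed over the rows is $O(\alpha^2 m^2\|S\|_F^2)$ in the crude bound, or $O(\|S\|_F^2)$ once $\alpha$ and $m$ are treated as fixed. This is the $O(\|S\|_F^2)$ term in the statement.

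\textbf{Main obstacle.} The main obstacle is Step 2's cancellation of $m$. Getting the $1/m$ Jacobian bound requires $p$ to be concentrated on the clean block, which holds only for $\alpha$ large relative to $\log k$. It also requires that the perturbation $D$ does not move mass to other blocks at first order, which is the linearization regime; with small noise, the condition $\gamma>0$ from Lemma~\ref{lem:EVk} supports this. If that sharper bound proves delicate, I would fall back to stating the bound with the explicit factor $\|J\|_2\, m$ and absorbing it into the constants.
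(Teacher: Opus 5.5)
Your overall route is the paper's: linearize the row-wise softmax at $\alpha A$, bound the clean Jacobian $J(\alpha A_{i,:})$ in operator norm by $1/m$, and let that $1/m$ cancel the $m$ in $\|\hat A - A\|_F = \sqrt{2}\,m\|S\|_F$. The gap is in Step 2.

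\textbf{The gap.} You only establish $\|J\|_2 \le 1/m + O(e^{-\alpha})$, by comparing with the $\alpha\to\infty$ block-restricted Jacobian. You then propose to absorb the correction into $E_1$ or into a large-$\alpha$ regime, and that absorption is not valid. The correction is multiplied by $\alpha\|\hat A - A\|_F = \sqrt{2}\,\alpha m\|S\|_F$, so it is a first-order-in-$\|S\|_F$ piece of $E_2$ itself.
\begin{itemize}
\item It cannot be moved into $E_1 = \mathrm{sm}(\alpha A) - P_U$, which is a separate, noise-independent matrix.
\item It is not $O(\|S\|_F^2)$.
\item Lemma~\ref{lem:EVk} controls only $EU_k$, not a Jacobian applied to an arbitrary perturbation row $D_{i,:}$.
\end{itemize}
As written you would get $\|E_2\|_F \le \sqrt{2}\,\alpha\,(1 + O(m e^{-\alpha}))\|S\|_F + O(\|S\|_F^2)$. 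This is weaker than the statement and only approaches it for large $\alpha$. Separately, your Step 1 remark that $\|J\|_2\le 1$ ``matches the statement'', and your fallback of keeping an explicit factor $m$, would both leave the bound off by a factor of $m$.

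\textbf{The fix.} You nearly had the missing observation in Step 1. Since $v^\top J v = \sum_j p_j v_j^2 - (p^\top v)^2 \ge 0$ and $pp^\top \succeq 0$, you have $0 \preceq J \preceq \mathrm{diag}(p)$, hence $\|J\|_2 \le \max_j p_j$. For a clean row in block $\ell$ the maximum is the in-block probability $a = 1/(m + (n-m)e^{-\alpha})$. This is at most $1/m$ \emph{exactly}, for every $\alpha \ge 0$: the off-block mass lowers the in-block weights rather than adding to them.

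The paper reaches the same bound by computing the full spectrum $\{0, a, b, nab\}$ of $J$ (with $b = a e^{-\alpha}$) and checking $a \ge b$ and $a \ge nab$. Your sandwich argument is a shorter path to it. With $\|J\|_2 \le 1/m$ in hand there is no exponential correction and no requirement $\alpha \gg \log k$. There is also no need for the perturbation to avoid moving mass across blocks, since an operator-norm bound holds in every direction.

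Your Step 3 is fine and consistent with the paper's leading-order treatment. There you bound the row-wise Taylor remainder using the uniformly bounded softmax Hessian and the triangle inequality, giving $O(\alpha^2 m^2\|S\|_F^2)$.
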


Since $\|E_1\|_F$ decays exponentially in $\alpha$ (and is dominated by $\|E_2\|_F$ for $\alpha\gamma \gg \log k$ and $\|S\|_F > 0$),
\begin{equation}
    \|E\|_F \leq \|E_1\|_F + \|E_2\|_F = \sqrt{2}\,\alpha\,\|S\|_F + O(\|S\|_F^2).
\end{equation}
Furthermore, $\|\Delta U_k\|_F = \|S\|_F + O(\|S\|_F^2)$ to leading order (the within-eigenspace component $\|R - I_k\|_F$ is $O(\|S\|_F^2)$). Substituting into~\eqref{eq:EVtilde_bound},
\begin{equation}
    \|E\tilde{U}_k\|_F \leq k\sqrt{k-1}\, e^{-\alpha\gamma} + \sqrt{2}\,\alpha\|S\|_F^2 + o(\|S\|_F^2).
\end{equation}
Assume $\alpha\gamma \gg \log k$ so the exponential term is negligible; then
\begin{equation}\label{eq:EVtilde_leading}
    \|E\tilde{U}_k\|_F = \sqrt{2}\,\alpha\|S\|_F^2 + o(\|S\|_F^2),
\end{equation}
and the quadratic term $\|E\tilde{U}_k\|_F^2 = O(\|S\|_F^4)$ is higher order. Plugging into~\eqref{eq:delta1_tight} with $\lambda_{\max} = m$,
\begin{equation}
    \|\delta_1\|_F \leq m\!\left(2\|E\tilde{U}_k\|_F + \|E\tilde{U}_k\|_F^2\right) = 2\sqrt{2}\,m\alpha\|S\|_F^2 + o(\|S\|_F^2),
\end{equation}
so
\begin{equation}\label{eq:delta1_sq_leading}
    \mathbb{E}\|\delta_1\|_F^2 \leq 8 m^2 \alpha^2\, \mathbb{E}\|S\|_F^4 + o\!\left(\mathbb{E}\|S\|_F^4\right).
\end{equation}
Since $\sqrt{\ell_{\text{proj}}} \leq m\sqrt{\mathbb{E}\|S\|_F^4}$ (using $\|S^\top S\|_F^2 \leq \|S\|_F^4$), the cross term satisfies
\begin{equation}
    2\sqrt{\ell_{\text{proj}}}\sqrt{\mathbb{E}\|\delta_1\|_F^2} \leq 4\sqrt{2}\, m^2 \alpha\, \mathbb{E}\|S\|_F^4 + o\!\left(\mathbb{E}\|S\|_F^4\right).
\end{equation}
Combining,
\begin{equation}\label{eq:LHS_leading}
    \text{LHS of~\eqref{eq:gap_bound}} \;\leq\; m^2\alpha\!\left(4\sqrt{2} + 8\alpha\right)\, \mathbb{E}\|S\|_F^4 + o\!\left(\mathbb{E}\|S\|_F^4\right).
\end{equation}

\vspace{0.3cm}
\noindent \textbf{Resulting condition.} Combining~\eqref{eq:RHS_leading} and~\eqref{eq:LHS_leading}, the bound~\eqref{eq:gap_bound} holds to leading order whenever
\begin{equation}\label{eq:leading_condition}
    \alpha\!\left(4\sqrt{2} + 8\alpha\right) \frac{\mathbb{E}\|S\|_F^4}{\mathbb{E}\left[\|S\|_F^2\right]} \;<\; 2.
\end{equation}

Denote $\beta := 2\dfrac{\mathbb{E}\left[\|S\|_F^2\right]}{\mathbb{E}\|S\|_F^4}$. Then the condition~\eqref{eq:leading_condition} becomes
\begin{equation}\label{eq:leading_condition_beta}
    \alpha\!\left(4\sqrt{2} + 8\alpha\right)\;<\; \beta.
\end{equation}
Solving the quadratic $8\alpha^2 + 4\sqrt{2}\,\alpha - \beta = 0$, the condition holds for $\alpha \in (0, \alpha_+)$ where
\begin{equation}\label{eq:alpha_plus}
    \alpha_+ \;=\; \frac{\sqrt{2}}{4}\!\left(-1 + \sqrt{1 + \beta}\right).
\end{equation}

\paragraph{Compatibility of the two conditions on $\alpha$.}
The proof so far rests on two conditions on the parameter $\alpha$:
\begin{itemize}
    \item[\textbf{(A)}] $\alpha \in (0, \alpha_+)$, so that the LHS leading-order bound~\eqref{eq:LHS_leading} is dominated by the RHS leading-order bound~\eqref{eq:RHS_leading} (this is the bound~\eqref{eq:leading_condition_beta}).
    \item[\textbf{(B)}] $\alpha\gamma \gg \log k$, so that the exponential term $k\sqrt{k-1}\,e^{-\alpha\gamma}$ in~\eqref{eq:EVtilde_bound} is negligible compared to the noise term.
\end{itemize}
These conditions pull in opposite directions: (A) caps $\alpha$ from above, while (B) requires $\alpha$ to grow at least like $\log k$. To show that they can be met simultaneously, we make the following assumption on the noise.

\begin{assumption}\label{ass:noise_small}
    There exists $s$ such that $\|S\|_F \leq s$ almost surely, with
    \begin{equation}\label{eq:noise_small_assumption}
        s \;\ll\; \min\!\left(\frac{1}{m},\; \frac{1}{\log k}\right).
    \end{equation}
\end{assumption}

Note that $\eta = \|\hat{A}-A\|_\infty$ is not an independent quantity: by the entry-wise bound $\|M\|_\infty \leq \|M\|_F$ together with $\|\hat{A}-A\|_F = m\sqrt{2}\|S\|_F$ from~\eqref{eq:lbasis_equal},
\begin{equation}\label{eq:eta_bound}
    \eta \;\leq\; \|\hat{A}-A\|_F \;=\; m\sqrt{2}\,\|S\|_F \;\leq\; m\sqrt{2}\,s.
\end{equation}
Therefore Assumption~\ref{ass:noise_small} caps both $\|S\|_F$ \emph{and} $\eta$. The two bounds in~\eqref{eq:noise_small_assumption} will turn out to control $\gamma$ (via $\eta \lesssim ms$) and the exponential term (via $\alpha\gamma \gtrsim 1/s$), making (A) and (B) compatible.

\smallskip
\noindent\textbf{Lower bound on $\alpha_+$.}
Since $\|S\|_F^2 \leq s^2$ a.s., we have the pointwise bound $\|S\|_F^4 \leq s^2\,\|S\|_F^2$. Taking expectations,
\begin{equation}
    \mathbb{E}\|S\|_F^4 \;\leq\; s^2\, \mathbb{E}\|S\|_F^2
    \quad\Longrightarrow\quad
    \beta \;=\; 2\,\frac{\mathbb{E}\|S\|_F^2}{\mathbb{E}\|S\|_F^4} \;\geq\; \frac{2}{s^2}.
\end{equation}
Since $=\alpha_+ = \tfrac{\sqrt{2}}{4}(\sqrt{1+\beta}-1)$ is monotone increasing in $\beta$ we can write:
\begin{equation}\label{eq:alpha_plus_lower}
    \alpha_+ \;\geq\; \tfrac{\sqrt{2}}{4}\!\left(\sqrt{1+2/s^2}-1\right) \;\geq\; \tfrac{\sqrt{2}}{4}\!\left(\tfrac{\sqrt{2}}{s}-1\right) \;=\; \tfrac{1}{2s} - \tfrac{\sqrt{2}}{4},
\end{equation}
where we used $\sqrt{1+2/s^2} \geq \sqrt{2}/s$. Thus the cap $\alpha_+$ is at least of order $1/s$, which is large under Assumption~\ref{ass:noise_small}.

\smallskip
\noindent\textbf{Lower bound on $\gamma$.}
Combining $\gamma = 1 - 2\eta$ with~\eqref{eq:eta_bound},
\begin{equation}\label{eq:gamma_lower}
    \gamma \;\geq\; 1 - 2m\sqrt{2}\,s.
\end{equation}
The condition $ms \ll 1$ from Assumption~\ref{ass:noise_small} ensures $2m\sqrt{2}\,s \leq 1/2$ for $s$ small enough, hence $\gamma \geq 1/2$.

\smallskip
\noindent\textbf{Choosing $\alpha^\star$ to satisfy both (A) and (B).}
Set $\alpha^\star := \dfrac{1}{4s}$. We verify each condition:
\begin{itemize}
    \item \textbf{Condition (A):} $\alpha^\star < \alpha_+$. By~\eqref{eq:alpha_plus_lower} this reduces to $\tfrac{1}{4s} < \tfrac{1}{2s} - \tfrac{\sqrt{2}}{4}$, i.e., $s < \tfrac{1}{\sqrt{2}}$, which is automatic since $s \ll 1/\log k$.
    \item \textbf{Condition (B):} $\alpha^\star\gamma \gg \log k$. By~\eqref{eq:gamma_lower}, once $ms \ll 1$ so that $\gamma \geq 1/2$ we have,
    \begin{equation}
        \alpha^\star \gamma \;\geq\; \tfrac{1}{4s}\cdot\tfrac{1}{2} \;=\; \tfrac{1}{8s} \;\gg\; \log k,
    \end{equation}
    where the last step uses $s \ll 1/\log k$.
\end{itemize}
This is precisely why both parts of~\eqref{eq:noise_small_assumption} are needed: $ms \ll 1$ guarantees $\gamma$ stays bounded away from $0$, and $s \log k \ll 1$ then makes $\alpha^\star\gamma$ large compared to $\log k$. Hence $\alpha^\star$ lies in $(0, \alpha_+)$ \emph{and} makes the exponential term in~\eqref{eq:EVtilde_bound} negligible, so all the leading-order bounds derived in the proof apply at $\alpha = \alpha^\star$.

\smallskip
\noindent\textbf{Conclusion.}
At $\alpha = \alpha^\star$, the condition~\eqref{eq:leading_condition_beta} holds strictly, which by~\eqref{eq:LHS_leading} and~\eqref{eq:RHS_leading} is equivalent to the gap bound~\eqref{eq:gap_bound}. Therefore
\begin{equation}
    \epsilon(\alpha^\star) \;=\; \ell_{\text{soft}}(\alpha^\star) - \ell_{\text{proj}} \;<\; \ell_{\text{basis}} - \ell_{\text{proj}} \;=\; \Delta\ell_{\text{proj}},
\end{equation}
proving the theorem.

\paragraph{Translation to the noise scale.}
We close by translating Assumption~\ref{ass:noise_small} into a condition on the adjacency-level noise $\mathcal{E} := \tilde{A} - A$. In the equal-block case the spectral gap of $A$ between the $k$-th principal eigenvalue and the trivial part of the spectrum equals $\Delta = m$. By the Davis--Kahan theorem (in $\sin\Theta$ form),
\begin{equation}\label{eq:davis_kahan}
    \|S\|_F \;\leq\; \frac{\|P_\perp\, \mathcal{E}\, V_k\|_F}{\Delta} \;\leq\; \frac{\|\mathcal{E}\|_F}{m}.
\end{equation}
Squaring and taking expectations,
\begin{equation}\label{eq:sigma2_from_E}
    \sigma^2 \;=\; \mathbb{E}\|S\|_F^2 \;\leq\; \frac{\mathbb{E}\|\mathcal{E}\|_F^2}{m^2}.
\end{equation}
Translating the two parts of~\eqref{eq:noise_small_assumption} (with $s^2$ scaling like $\mathbb{E}\|\mathcal{E}\|_F^2 := \sigma^2$, assuming concentration of $\|S\|_F^2$ around its mean):
\begin{itemize}
    \item $s \ll 1/m$ becomes $\sigma^2 \ll 1/m^2$, i.e., $\mathbb{E}\|\mathcal{E}\|_F^2 \ll 1$.
    \item $s \ll 1/\log k$ becomes $\sigma^2 \ll 1/\log^2 k$, i.e., $\mathbb{E}\|\mathcal{E}\|_F^2 \ll m^2/\log^2 k$.
\end{itemize}
Combining, Assumption~\ref{ass:noise_small} holds provided
\begin{equation}\label{eq:noise_scale_condition}
    \;\mathbb{E}\|\mathcal{E}\|_F^2 \;\ll\; \min\!\left(1,\; \frac{m^2}{\log^2 k}\right).\;
\end{equation}
which is the assumption stated in the theorem.

For independent edge flips with probability $p$, $\mathbb{E}[\mathcal{E}_{ij}^2] = p(1-p)$ for each entry, so $\mathbb{E}\|\mathcal{E}\|_F^2 = n(n-1)\,p(1-p) \approx n^2 p$. Substituting and using $n = km$,
\begin{equation}
    p \;\ll\; \min\!\left(\frac{1}{n^2},\; \frac{1}{k^2 \log^2 k}\right).
\end{equation}

\end{proof}

\subsubsection{Proof of Lemma~\ref{lem:EVk}}\label{app: proof of EVk}

\begin{proof}
We bound $(Eu_\ell)_i$ separately for $i$ in-block ($i \in B_\ell$) and cross-block ($i \in B_{\ell'},\, \ell' \neq \ell$). Write $S = \mathrm{sm}(\alpha\hat{A}) = P_U + E$ and $Z_i = \sum_{j'=1}^n e^{\alpha\hat{A}_{ij'}}$ for the softmax normalizer.

\vspace{0.2cm}
\noindent\textbf{Case 1: $i \in B_\ell$.}
Since $\sum_j E_{ij} = 0$ and $(P_U)_{ij} = 0$ for $j \notin B_\ell$:
\begin{equation}
    \sum_{j \in B_\ell} E_{ij} = -\sum_{j \notin B_\ell} E_{ij} = -\sum_{j \notin B_\ell} S_{ij}.
\end{equation}
For $j \notin B_\ell$: $\hat{A}_{ij} \leq \eta$ (cross-block), so $e^{\alpha\hat{A}_{ij}} \leq e^{\alpha\eta}$.
For $j' \in B_\ell$: $\hat{A}_{ij'} \geq 1 - \eta$ (same-block), so $Z_i \geq m_\ell\, e^{\alpha(1-\eta)}$.
Therefore
\begin{equation}
    \left|\sum_{j \in B_\ell} E_{ij}\right| \leq \frac{(n - m_\ell)\,e^{\alpha\eta}}{m_\ell\, e^{\alpha(1-\eta)}} = \frac{n - m_\ell}{m_\ell}\,e^{-\alpha\gamma}
\end{equation}
and $|(Ev_\ell)_i| \leq \frac{n-m_\ell}{m_\ell^{3/2}}\,e^{-\alpha\gamma}$.

\vspace{0.2cm}
\noindent\textbf{Case 2: $i \in B_{\ell'},\; \ell' \neq \ell$.}
Now $(P_U)_{ij} = 0$ for all $j \in B_\ell$, so $\sum_{j \in B_\ell} E_{ij} = \sum_{j \in B_\ell} S_{ij}$.
For $j \in B_\ell$: $\hat{A}_{ij} \leq \eta$ (cross-block relative to $i$). For $j' \in B_{\ell'}$: $Z_i \geq m_{\ell'}\, e^{\alpha(1-\eta)}$. Thus
\begin{equation}
    \sum_{j \in B_\ell} S_{ij} \leq \frac{m_\ell\,e^{\alpha\eta}}{m_{\ell'}\,e^{\alpha(1-\eta)}} = \frac{m_\ell}{m_{\ell'}}\,e^{-\alpha\gamma}
\end{equation}
and $|(Ev_\ell)_i| \leq \frac{\sqrt{m_\ell}}{m_{\ell'}}\,e^{-\alpha\gamma}$.

\vspace{0.2cm}
\noindent\textbf{Combining.} We have
\begin{equation}
    \|Ev_\ell\|^2 = \sum_{i \in B_\ell}(Ev_\ell)_i^2 + \sum_{\ell' \neq \ell}\sum_{i \in B_{\ell'}}(Ev_\ell)_i^2
    \leq e^{-2\alpha\gamma}\!\left(\frac{(n-m_\ell)^2}{m_\ell^2} + m_\ell\!\sum_{\ell' \neq \ell}\frac{1}{m_{\ell'}}\right).
\end{equation}
Summing over $\ell = 1, \ldots, k$ gives~\eqref{eq:EVk_bound}. Setting $m_\ell = m$ and $n = km$ yields~\eqref{eq:EVk_equal}.
\end{proof}

\subsubsection{Proof of Lemma~\ref{lem:E2_bound}}\label{app: proof of E2_bound}
\begin{proof}
    Let $\sigma:\mathbb{R}^n \to \mathbb{R}^n$ denote the row-wise softmax with Jacobian $J(x) = \mathrm{diag}(\sigma(x)) - \sigma(x)\sigma(x)^\top$. For row $i \in B_\ell$ of $A$, $\sigma(\alpha A_{i,:}) = a\,\mathbf{1}_{B_\ell} + b\,\mathbf{1}_{B_\ell^c}$ with $a = 1/(m + (n-m)e^{-\alpha})$ and $b = a\,e^{-\alpha}$. The eigenvalues of $J(\alpha A_{i,:})$ are $0$ (on $\mathbf{1}$), $a$ (multiplicity $m-1$, on sum-zero perturbations within $B_\ell$), $b$ (multiplicity $n-m-1$, on sum-zero perturbations within $B_\ell^c$), and $nab$ (on $\mathrm{span}\{(n-m)\mathbf{1}_{B_\ell} - m\,\mathbf{1}_{B_\ell^c}\}$). For $\alpha \geq 0$, $a \geq b$ and $a \geq nab$, so
    \begin{equation}
        \|J(\alpha A_{i,:})\|_2 \;=\; a \;\leq\; \tfrac{1}{m}.
    \end{equation}
    Taylor-expanding $\sigma$ around $\alpha A_{i,:}$ row-wise (the second-order remainder is controlled by the uniformly bounded Hessian of softmax), squaring, and summing over rows,
    \begin{equation}
        \|E_2\|_F^2 \;\leq\; \alpha^2 \max_i \|J(\alpha A_{i,:})\|_2^2 \cdot \|\hat{A} - A\|_F^2 + O(\|\hat{A}-A\|_F^4) \;\leq\; \tfrac{\alpha^2}{m^2}\, \|\hat{A} - A\|_F^2 + O(\|S\|_F^4).
    \end{equation}
    The claim follows from $\|\hat{A} - A\|_F = m\sqrt{2}\|S\|_F$ in~\eqref{eq:lbasis_equal}.
\end{proof}

\subsection{Edge-flipping noise}
\label{app:edge_flipping}

For Digress noise, we consider the edge-flipping model where $\tilde A_{ij}=A_{ij}$ with prob. $1-\varepsilon$ and $\tilde A_{ij}=1-A_{ij}$ with prob. $\varepsilon$, so that $\tilde A = A + \mathcal{E}_1$ with $\mathcal{E}_1=\tilde A-A$. A direct computation gives $\mathbb{E}[\tilde A \mid A]=(1-2\varepsilon)A+\varepsilon \mathbf{1}\mathbf{1}^\top$, hence
\begin{equation}
     \tilde A = (1-2\varepsilon)A + \varepsilon \mathbf{1}\mathbf{1}^\top + \Xi,
\end{equation}
where $\Xi:=\tilde A-\mathbb{E}[\tilde A\mid A]$ has zero mean independent entries and is thus a Wigner-type matrix.




\section{Related Work}\label{app: related work}

\subsection{Optimization-based graph denoising} 

A large body of work formulates graph denoising as a model-based inverse problem in which the clean graph is obtained as the minimizer of a cost regularized by hand-crafted structural priors, with no learned mapping involved. From a network-statistics angle, \cite{feizi2013network} invert in closed form a convolutional model of direct and indirect edge effects, while \cite{wu2017generalized,josephs2021network} recover a clean adjacency under Type-I/II edge errors, low-rank parameter matrices, or unlabeled noisy samples, respectively. Within the graph signal processing (GSP) framework, the topology is denoised jointly with a process defined on it: \cite{segarra2017network} recover a graph shift operator from possibly corrupted spectral templates, and \cite{rey2021overparametrized, rey2023robust,tenorio2024blind} progressively extend this idea to robust filter identification and blind deconvolution under joint perturbations of inputs, filter, and topology. A third and particularly influential family exploits the low-rank, sparse, and smooth nature of real-world graphs, with Pro-GNN \cite{jin2020graph} as the canonical formulation and a number of follow-ups \cite{xu2021speedup,zhou2024graph,wang2024unsupervised} that improve scalability and avoid over-smoothing through truncated-SVD reparametrizations, framelet regularizers, or feature-driven matrix factorization. Related approaches obtain the clean graph from low-rank  motifs \cite{lyu2024learning} or via iterative Fourier thresholding of the adjacency \cite{frost2026denoising}; in the same spirit, \cite{tenorio2023robust} extends the Pro-GNN-style joint optimization by replacing its low-rank and sparsity priors with a GSP-style denoising regularizer that is alternated with gradient updates of the downstream GNN.

\subsection{Graph denoising via learned estimators} 

A complementary family of methods abandons the optimization viewpoint altogether and instead trains a parametric mapping from a noisy graph to a clean one, so that denoising amounts to a forward pass through a neural network rather than the solution of an instance-specific inverse problem. The first group of such methods uses the learned denoiser as a component of, or a preprocessing for, a downstream GNN. GADPN \cite{deng2026gadpn} performs the denoising step itself with a non-learned rank-adaptive SVD whose rank is chosen by Bayesian optimization, and feeds the cleaned adjacency to a standard GNN classifier. UGD \cite{yang2024you} couples a non-parametric high-order proximity score for edge pruning with a GCN-based graph autoencoder that reconstructs corrupted features, and alternates between the two. RS-GNN \cite{dai2022towards} parametrizes the denoiser as an MLP edge predictor that re-weights the adjacency from pairs of node features and is trained jointly with a downstream GCN, while SLAPS \cite{fatemi2021slaps} pairs an MLP-$k$NN graph generator with a separate GNN denoising autoencoder that recovers masked features through the learned structure. A second group places the parametric denoiser at the core of a denoising diffusion or flow-matching model, where a forward process gradually corrupts a clean graph and the learned denoiser is applied iteratively to invert it. Mask-GVAE \cite{li2021maskgvae} is the earliest of these, with a partition-based GCN variational autoencoder that performs blind denoising without paired supervision. GDSS \cite{jo2022score} parametrizes the score of a system of SDEs over node features and adjacency with a graph convolution / graph transformer backbone, while DiGress \cite{vignac2022digress} introduces a discrete diffusion in which a graph transformer is trained to predict the clean graph from a noisy input at every step; GruM \cite{jo2023graph} and DeFoG \cite{qin2024defog} reuse essentially the same graph transformer architecture but replace the noise-prediction objective by, respectively, endpoint prediction in a diffusion mixture and discrete flow matching. NetDiff \cite{marcoccia2024netdiff} augments the DiGress-style graph transformer with cross-attentive modulation tokens to enforce global topological constraints in ad-hoc network generation, DiffSP \cite{luo2025diffsp} repurposes such graph diffusion models as adversarial structure purifiers on top of a GCN classifier, and SGDM \cite{trivedi2023leveraging} operates at the subgraph level and is in fact backbone-agnostic, instantiated on top of DiGress, GDSS or EDGE.

\subsection{Spectral graph transformers}
A line of research orthogonal to standard message-passing GNNs builds graph neural architectures by routing the spectrum of the graph Laplacian through Transformer-style modules. The earliest works inject the spectrum as input features. SpGAT \cite{he2020spectral} first constructs spectral node features from the graph Laplacian eigenvectors and eigenvalues with an MLP on top. These features are then used as input for a GCN, which performs node-wise classification by assigning a label to each surface vertex. 
SAN \cite{kreuzer2021rethinking} constructs spectral positional features from the graph Laplacian spectrum and adds them to the node features. These enriched features are then processed by a fully connected Transformer. 
The Graph Transformer of \cite{dwivedi2021generalizationtransformernetworksgraphs} is an earlier instance of this same template, using truncated Laplacian eigenvectors directly as a positional encoding for a sparse-attention Transformer. This idea of using spectral information as feature is also investigated in \cite{pengmei2024technical} by processing Laplacian eigenvalues with an MLP and usesing the result to initialize the Transformer’s global \texttt{[CLS]} token, with the resulting architecture still being a graph Transformer.
A second sub-line removes the eigenvector ambiguities (sign flips, basis rotations within an eigenspace) before they reach the Transformer. SignNet and BasisNet \cite{lim2022sign} are parametric encoders that produce sign- and basis-invariant embeddings of the eigenvectors, and are routinely plugged in as the positional-encoding module of a downstream graph Transformer. 
GIST \cite{rigotti2026gist} addresses the same problem from a different angle: it builds random spectral features by feeding Gaussian noise through a precomputed graph filter, and then uses the inner product matrix of the resulting features as a reweighting mechanism. Specifically, the architecture is a Transformer whose attention scores are reweighted by this spectral inner product matrix.
A third sub-line uses the Transformer not as a pose-processing module of the spectrum but as the parametrization of a graph filter itself. Specformer \cite{bo2023specformer} learns a graph filter by feeding the graph eigenvalues into a Transformer, which produces frequency responses, and then converts those responses back into a graph operator using the eigenvectors. This learned operator is then used as the propagation layer inside a GNN, so the rest of the model behaves like a graph neural network for prediction. 
PolyFormer \cite{ma2024polyformer} avoids eigendecomposition altogether by representing each node with a sequence of polynomial-basis tokens of the Laplacian, and then learning a node-wise filter through self-attention over those tokens. Eigenformer \cite{garg2024graph} pushes this idea further by dispensing with explicit positional encodings and instead designing a spectrum-aware attention mechanism whose biases depend directly on the Laplacian spectrum, so that the Transformer block itself acts as a learnable spectral operator.

\end{document}